\documentclass{article} 
\usepackage{iclr2027_conference,times}
\newif\ifpreprint
\preprinttrue
\ifpreprint\iclrfinalcopy\fi

\usepackage{amsmath,amsfonts,bm}

\def\eqref#1{equation~\ref{#1}}

\def\1{\bm{1}}

\DeclareMathAlphabet{\mathsfit}{\encodingdefault}{\sfdefault}{m}{sl}
\SetMathAlphabet{\mathsfit}{bold}{\encodingdefault}{\sfdefault}{bx}{n}

\usepackage[utf8]{inputenc} 
\usepackage[T1]{fontenc}    
\usepackage{hyperref}       
\usepackage{url}            
\usepackage{booktabs}       
\usepackage{amsfonts}       
\usepackage{nicefrac}       
\usepackage{microtype}      
\usepackage[table]{xcolor}

\usepackage{graphicx}
\usepackage{amsmath}
\usepackage{amssymb}

\usepackage{listings}
\usepackage{mwe} 
\usepackage{makecell}
\usepackage{color, colortbl}
\usepackage[normalem]{ulem} 
\usepackage{algorithm}
\usepackage{algpseudocode}
\algrenewcomment[1]{\hfill$\triangleright$~#1}
\algrenewcommand\algorithmicrequire{\textbf{Input:}}
\algrenewcommand\algorithmicensure{\textbf{Return:}}

\usepackage{wrapfig}
\usepackage{caption}
\usepackage{pifont} 
\usepackage{multirow}
\usepackage{booktabs}
\usepackage{float}

\usepackage{chngpage}
\usepackage{wrapfig}
\usepackage{adjustbox}
\usepackage{comment}
\usepackage{color}

\usepackage{listings}

\usepackage{subcaption}

\usepackage{bm}
\usepackage[capitalize]{cleveref}

\usepackage[toc]{appendix}
\usepackage{etoc}
\usepackage{minitoc}
\usepackage{mathtools}

\usepackage{amsthm}

\usepackage{titletoc}
\usepackage{enumitem}

\newtheorem{theorem}{Theorem}[section]        
\newtheorem{lemma}[theorem]{Lemma}            
\newtheorem{proposition}[theorem]{Proposition}
\newtheorem{corollary}[theorem]{Corollary}    

\theoremstyle{definition}

\theoremstyle{remark}
\newtheorem{remark}[theorem]{Remark}          

\makeatletter
\newcommand{\@makeappendixtitle}{%
  \vbox{%
    \hsize\textwidth
    \linewidth\hsize
    \vskip 0.1in
    \centering
    {\LARGE\bfseries Appendix of \@title\par}%
    \vskip 0.3in\@minus0.1in
  }%
}

\newcommand{\appendixtitle}[1][]{%
  \clearpage
  \thispagestyle{empty}%
  \centering {\huge\textbf{Appendix}}
  \let\appendixtitle\relax
}
\makeatother

\usepackage{booktabs, multirow, graphicx}
\usepackage[table]{xcolor}
\usepackage{colortbl}
\definecolor{ourshl}{HTML}{E5EEDF}
\definecolor{cnpgreen}{RGB}{43, 122, 63}

\usepackage{amsthm}
\usepackage[most]{tcolorbox}

\newcommand{\second}[1]{\underline{#1}}

\crefname{section}{Sec.}{Secs.}
\Crefname{section}{Section}{Sections}
\Crefname{table}{Table}{Tables}
\crefname{table}{Tab.}{Tabs.}
\etocsettocstyle{\section*{Appendix}}{}

\definecolor{Gray}{gray}{0.9}
\definecolor{ImportantColor}{rgb}{0.63, 0.79, 0.95}
\newcolumntype{g}{>{\columncolor{ImportantColor}}c}
\newcolumntype{?}{!{\vrule width 1pt}}
\usepackage[normalem]{ulem}

\newcommand{\paratitle}[1]{\par\noindent\textbf{#1}}

\graphicspath{
{figs/},
}

\newcommand*{\belowrulesepcolor}[1]{%
  \noalign{%
    \kern-\belowrulesep 
    \begingroup 
      \color{#1}%
      \hrule height\belowrulesep 
    \endgroup 
  }%
} 
\newcommand*{\aboverulesepcolor}[1]{%
  \noalign{%
    \begingroup 
      \color{#1}%
      \hrule height\aboverulesep 
    \endgroup 
    \kern-\aboverulesep 
  }%
}

\definecolor{codegreen}{rgb}{0,0.6,0}
\definecolor{codegray}{rgb}{0.5,0.5,0.5}
\definecolor{codepurple}{rgb}{0.58,0,0.82}
\definecolor{backcolour}{rgb}{0.95,0.95,0.92}

\lstdefinestyle{mystyle}{
    backgroundcolor=\color{backcolour},   
    commentstyle=\color{codegreen},
    keywordstyle=\color{magenta},
    numberstyle=\tiny\color{codegray},
    stringstyle=\color{codepurple},
    basicstyle=\ttfamily\footnotesize,
    breakatwhitespace=false,         
    breaklines=true,                 
    captionpos=b,                    
    keepspaces=true,                 
    numbers=left,                    
    numbersep=5pt,                  
    showspaces=false,                
    showstringspaces=false,
    showtabs=false,                  
    tabsize=2
}

\hypersetup{
    colorlinks=true,
    linkcolor=black,
}

\hypersetup{hidelinks}
\renewcommand{\bottomfraction}{0.8}
\renewcommand{\floatpagefraction}{0.8}

\newcommand{\restoreappendixlayout}{%
\setcounter{topnumber}{2}
\setcounter{totalnumber}{3}
\setlength{\textfloatsep}{20pt plus 2pt minus 4pt}
\setlength{\floatsep}{12pt plus 2pt minus 2pt}
\setlength{\intextsep}{12pt plus 2pt minus 2pt}
\renewcommand{\bottomfraction}{0.3}
\renewcommand{\floatpagefraction}{0.5}
\captionsetup{font=normalsize,skip=10pt}
}

\newcommand{\method}{CNP-Flow}
\newcommand{\methodfull}{\textbf{C}onditional \textbf{N}oise \textbf{P}redictor for \textbf{Flow} Matching}

\title{Learning Conditional Source Distribution via Flow Reversal for Temporal Flow Matching}

\author{%
Kuan-Hsun Tu$^{*}$, Hsuan-Chi Liu$^{*\dagger}$, Jia-Wei Liao$^{*}$, Chien-Sheng Chiang, Cheng-Fu Chou \& Tsung-Wei Ke\\
National Taiwan University\\
\texttt{\{khtu, hhhhhsuan, d11922016, b10902063, ccf, twke\}@csie.ntu.edu.tw}%
}

\begin{document}

\maketitle
\ifpreprint
\lhead{Preprint}
{\renewcommand{\thefootnote}{\fnsymbol{footnote}}\footnotetext[1]{Equal contribution. \quad $^{\dagger}$Corresponding author.}}
\fi

\begin{figure}[H]
    \centering
    \includegraphics[width=\linewidth]{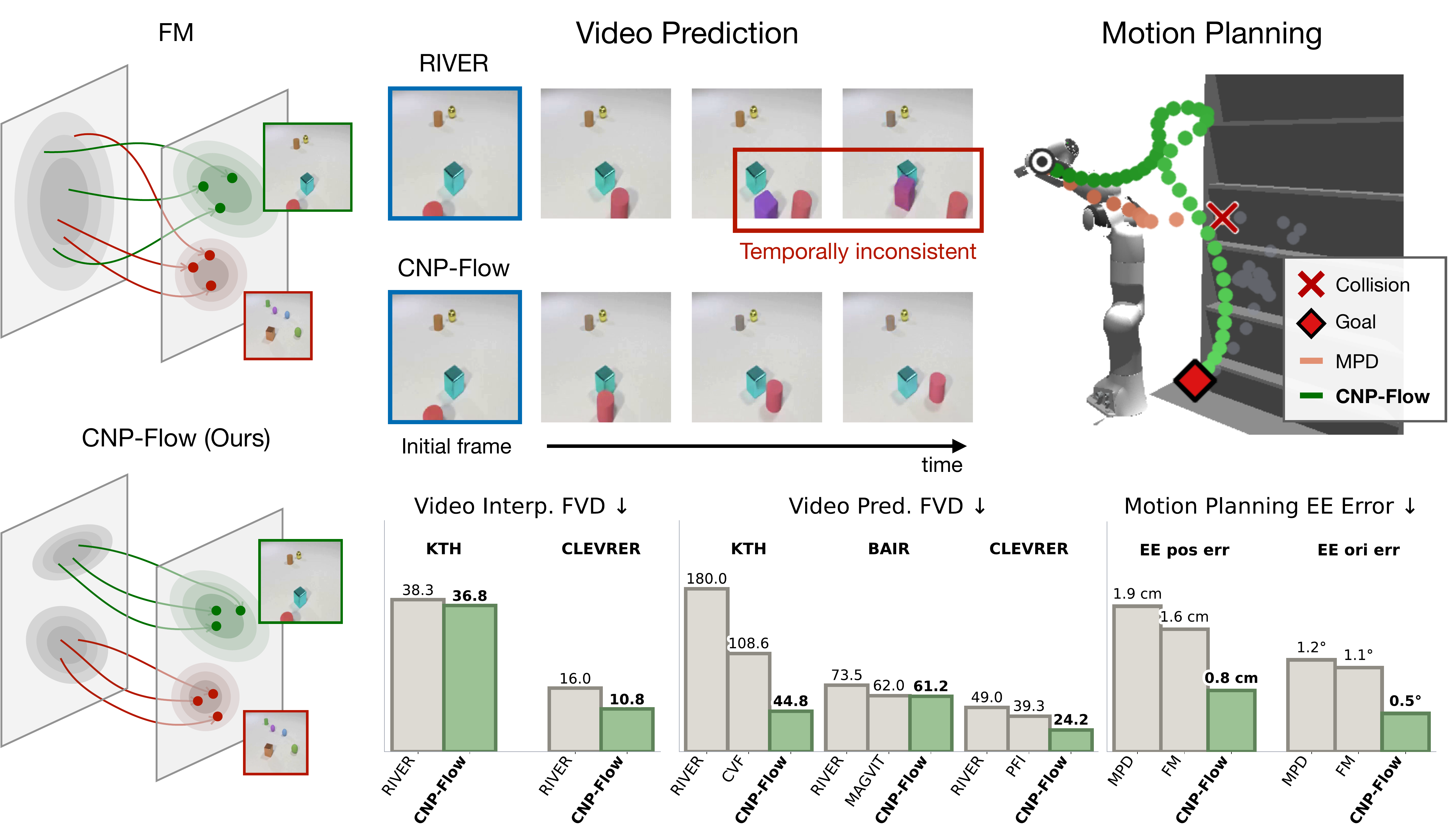}
    \caption{
    \textbf{CNP-Flow learns condition-dependent source distributions for temporal generation.}
    \method{} uses a Conditional Noise Predictor (CNP) to predict a source distribution for each condition. This source-side conditioning reduces FVD by up to $75\%$ over RIVER on video prediction, and yields more precise motion-planning trajectories with fewer collisions.}
    \label{fig:teaser}
\end{figure}
\vspace{-10px}
\begin{abstract}
We introduce CNP-Flow, a flow matching framework for temporal generation that learns conditional source distributions through flow reversal. Whereas standard conditional flow matching (FM) incorporates conditioning through the vector field and draws source samples from a standard Gaussian, CNP-Flow uses a conditional noise predictor (CNP) to produce an isotropic Gaussian source for each temporal condition. The CNP is supervised by source samples obtained through flow reversal, which maps observed targets backward through a pretrained FM model. A three-stage pipeline pretrains the FM model, trains the CNP, and fine-tunes the FM model using the learned source distribution, while preserving the FM backbone architecture. Across video prediction, video interpolation, and 7-DoF Franka robot motion planning, CNP-Flow consistently improves generation quality. It also matches baseline performance with fewer function evaluations. Project page: \href{https://embodiedai-ntu.github.io/cnpflow/index.html}{\textcolor{cnpgreen}{https://embodiedai-ntu.github.io/cnpflow}}
\end{abstract}


\section{Introduction}

Generative modeling is typically framed as drawing samples from a learned data distribution, where each generation is independent of the others. However, applications such as video generation~\citep{ho2022video, ge2023preserve} and robotic control~\citep{janner2022planning, carvalho2023motion} require a more challenging and constrained setup: they must generate a sequence of temporally correlated samples conditioned on past or future data. This defines the task of \textit{temporal generation}, which imposes two requirements: the transitions between samples must follow those of the data distribution, and each sample must remain structurally consistent with the conditional inputs~\citep{lai2018learning, ge2023preserve}. Depending on the conditioning, temporal generation can be further divided into extrapolation and interpolation. Extrapolation predicts future samples given the past~\citep{voleti2022mcvd, davtyan2023river}, while interpolation generates intermediate trajectories given the start and end of a sequence~\citep{janner2022planning, carvalho2023motion,reda2022film}.

Flow Matching (FM)~\citep{lipman2023flow} is an effective framework for generative modeling: a neural network regresses a deterministic vector field that transports samples from a source distribution to the target distribution. The source is typically chosen to be simple, such as a standard Gaussian. Thanks to this general formulation, FM models extend naturally to temporal generation by treating temporal conditions as additional inputs~\citep{esser2024scaling,3d_flowmatch_actor}. While conceptually simple, their performance depends heavily on the quantity and quality of the conditions injected into the model~\citep{peebles2023scalable}. Existing approaches, therefore, focus on improving the conditioning mechanism through various inductive biases: randomly masking conditional inputs as a form of regularization~\citep{voleti2022mcvd,davtyan2023river} or biasing generated samples toward conditional data via gradient methods~\citep{ho2022video}. However, the former sacrifices fidelity, while the latter cannot handle multiple conditions simultaneously.

These approaches all target the conditioning mechanism. A separate line of work, however, takes a different angle: modulating the source distribution itself. Recent studies show that FM models can be steered toward higher-quality outputs by selecting more suitable source samples~\citep{pooladian2023multisample, tong2024improving, chiang2026cot}. A common approach in image editing~\citep{meng2021sdedit, song2020score} replaces the standard Gaussian noise with a noised version of the original image, thereby preserving its spatial structure. In text-to-video generation~\citep{ge2023preserve} and video editing~\citep{chang2025warped}, drawing source samples in a directionally correlated manner across frames improves temporal coherence over drawing them independently per frame. Pushing this further, recent video prediction methods~\citep{shrivastava2024cvp,shrivastava2024cvf} replace the standard Gaussian source with the previous frame. In robotic manipulation, action generation is largely determined by the source samples, so training a dedicated noise sampler enables adaptation of pretrained robot policies~\citep{wagenmaker2025steering}. Together, these results suggest that source samples themselves carry rich conditional information, which motivates us to ask: can a learned source distribution also benefit temporal generation?

We introduce \methodfull{} (\method{}), a flow matching framework for temporal generation. The core idea is to learn a noise predictor that takes the current condition as input and outputs a source distribution suited to downstream FM models under that condition. This formulation naturally encompasses both interpolation and extrapolation (see Fig.~\ref{fig:teaser}). To supervise the noise predictor, we construct pseudo-ground-truth source distributions by retrieving source samples for each condition through flow reversal of a pretrained FM model. A recent study~\citep{chiang2026cot} shows that pretrained FM models map similar target data to nearby source samples through flow reversal. Motivated by this, we parameterize these distributions as a family of isotropic Gaussians, with the conditional noise predictor (CNP) outputting their mean and standard deviation. This sidesteps the exact search for optimal source distributions, which is intractable, while still providing effective surrogates that substantially improve temporal generation in our experiments.

\begin{figure}[t]
    \centering
    \includegraphics[width=\linewidth]{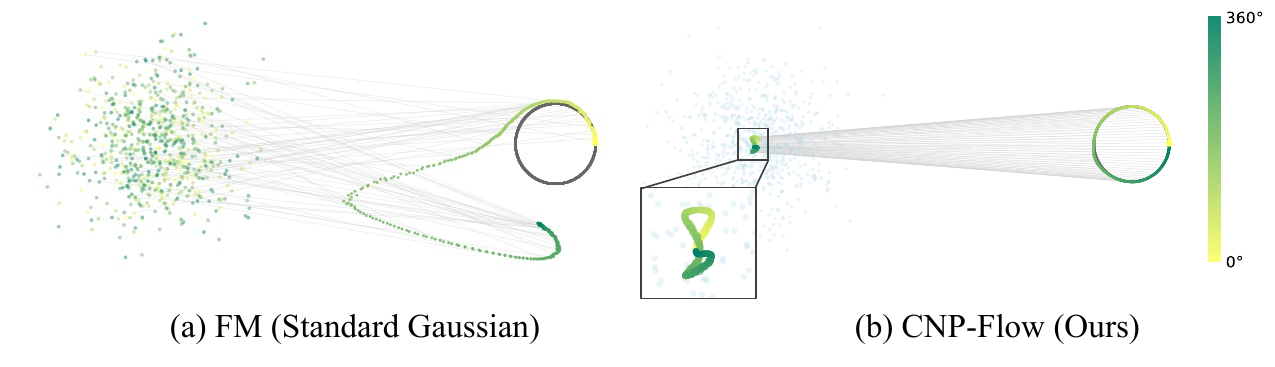}
    \caption{\textbf{2D toy example of temporal generation}. Colors indicate temporal phase along the target circular trajectory. (a) Vanilla FM draws source samples from a standard Gaussian. Samples associated with neighboring temporal conditions can be widely separated, and the generated sequence deviates from the target trajectory. (b) CNP-Flow learns a conditional source distribution with a smoother arrangement of samples by temporal phase (see the zoomed region), and the generated sequence follows the target trajectory more closely. Detailed experimental settings are provided in Appendix~\ref{app:2d_toy_example}.
    }
    \label{fig:toy_example}
\end{figure}

\method{} consists of a CNP and an FM model, trained via a three-stage pipeline that bootstraps each model from the other. First, we pretrain an FM model with a standard Gaussian source; flow reversal through the pretrained model then retrieves surrogate source samples for each condition. Next, we train the noise predictor by maximizing the likelihood of these source samples under each condition, enabling it to predict source distributions at test time. Finally, we fine-tune the FM model with the predicted distributions, refining its vector field. Fig.~\ref{fig:toy_example} illustrates this on a toy example: compared to a vanilla FM, \method{} draws temporally correlated source samples and produces temporally coherent sequences.

\method{} shows empirical gains along three independent axes. On video benchmarks, distribution-level alignment improves, with FVD on KTH dropping from $180.00$ to $44.76$ and comparable margins on CLEVRER and BAIR (see Fig.~\ref{fig:teaser}). Inference becomes far more efficient: with only $5$ function evaluations, \method{} surpasses what the standard Gaussian baseline achieves at $50$. The same formulation transfers beyond video to a 7-DoF Franka motion planning task, where the fraction of valid trajectories rises from $84.9\%$ to $88.5\%$, and end-effector position error drops from $1.6$ cm to $0.8$ cm (see Fig.~\ref{fig:teaser}). These results point to the same mechanism: conditioning the source improves source--target alignment and straightens the transport path, without modality-specific structure.

\section{Related Work}

\paratitle{Temporal conditions.} Generating temporally coherent samples is a central challenge in video prediction and motion planning. Early approaches based on recurrent networks~\citep{graves2012long, babaeizadeh2021fitvid} and autoregressive models~\citep{vaswani2017attention, le2021ccvs, ma2024latte} struggle with long-term temporal consistency or computational cost. Recently, diffusion~\citep{ho2020denoising, dhariwal2021diffusion, song2020score} and flow models~\citep{lipman2023flow}, driven by their success in high-fidelity image synthesis, have been extended to video generation, prediction~\citep{ho2022video, voleti2022mcvd, hoppe2022diffusion, yang2023diffusion, davtyan2023river} and motion planning~\citep{janner2022planning, carvalho2023motion, carvalho2025motion}, exploring the strategies to inject temporal conditions into the model. More specifically,~\citet{voleti2022mcvd} condition on all past and future frames but randomly mask them during training, enabling a single model to handle prediction, generation, and interpolation.~\citet{davtyan2023river} instead condition on a reference frame and one past frame to reduce training cost.~\citet{janner2022planning} condition on the first frame and last frame to generate a feasible trajectory. Alternatively, \citet{ho2022video} bias the sampling trajectory toward conditioning frames via gradient-based reconstruction guidance, but the per-condition gradient pass scales poorly when multiple temporal conditions must be satisfied at once. Despite their differences, these methods all focus on how the condition enters the generation process, while assuming the source distribution is a fixed standard Gaussian.

\paratitle{Optimized noise prior.} A separate line of work shapes the source distribution. In image generation, noise priors have been optimized through reward functions~\citep{eyring2024reno}, structured patterns~\citep{samuel2024generating, mao2024lottery}, pre-collected databases~\citep{mao2023semantic}, or condition-dependent shifts~\citep{chen2025car}. For video synthesis, \citet{ge2023preserve} introduce temporally correlated noise across frames, while \citet{luo2023videofusion} and \citet{zhang2024trip} model residual noise between consecutive frames. In video-to-video settings, \citet{chang2025warped, burgert2025go, liu2025equivdm} warp the noise to align with the target motion. For video prediction, \citet{shrivastava2024cvp} and \citet{chen2024pfi} bypass the standard Gaussian noise by modeling frame transitions as continuous interpolation. Despite their variety, these methods each impose a specific noise structure, such as temporal correlations, geometric warping, or frame-wise residuals, that either degrades over long horizons or sacrifices multimodality~\citep {kim2026better, lee2025there, sadat2023cads}. In robotic manipulation, \citet{wagenmaker2025steering} train a dedicated noise sampler to adapt a pretrained diffusion policy to new tasks, though their approach operates in low-dimensional action spaces and its scalability to high-dimensional settings, such as video, remains unclear. \citet{chiang2026cot} show that pretrained flow-matching models map similar targets to nearby source regions, but their focus is on image generation and optimal transport rather than temporal generation. To our knowledge, no prior work has learned a condition-dependent source distribution that generalizes across temporal generation tasks while preserving stochasticity.

\section{Preliminary}
Flow matching (FM) is a generative framework that learns to transport samples from a source distribution $p_0$ to a target distribution $p_1$. A neural network  with parameters $\theta$ is trained to regress a time-varying vector field $\bm{v}_\theta$, which defines an ordinary differential equation (ODE): $\mathrm{d}\bm{x}_t = \bm{v}_\theta(\bm{x}_t, t)\,\mathrm{d}t$. Integrating this ODE from $t=0$ to $t=1$ transports a source sample $\bm{x}_0$ to a target sample $\bm{x}_1$. FM extends naturally to conditional generation by conditioning both the vector field $\bm{v}_\theta(\bm{x}_t, t \mid \mathcal{C})$ and the target distribution $p_1(\cdot \mid \mathcal{C})$ on a conditioning input $\mathcal{C}$.

A standard training procedure pairs a source sample $\bm{x}_0$ with a target sample $\bm{x}_1$, samples a timestep $t \sim \mathcal{U}(0,1)$, and constructs an intermediate sample by linear interpolation: $\bm{x}_t = (1-t)\bm{x}_0 + t\bm{x}_1$. The model is then optimized with the following objective~\citep{lipman2023flow}:
\begin{equation}
\mathcal{L}_{\mathrm{FM}}(\theta) = \mathbb{E}_{\mathcal{C}, \bm{x}_0 \sim p_0, \bm{x}_1 \sim p_1(\cdot \mid \mathcal{C}), t \sim \mathcal{U}(0,1)} \left[\|\bm{v}_\theta(\bm{x}_t, t \mid \mathcal{C}) - (\bm{x}_1 - \bm{x}_0)\|_2^2 \right].
\end{equation}
At inference time, generation proceeds by integrating the learned vector field from a source sample:
\begin{equation}
\hat{\bm{x}}_1 := \bm{x}_0 + \int_0^1 \bm{v}_\theta(\bm{x}_t, t \mid \mathcal{C}) \, \mathrm{d}t.
\label{eqn:fm_inference}
\end{equation}
Eqn.~\ref{eqn:fm_inference} shows that the generated sample depends on both the condition $\mathcal{C}$ and the source sample $\bm{x}_0$. This observation motivates us to investigate whether more suitable source samples can improve generation quality under each condition.

\begin{figure}[t]
    \centering
    \includegraphics[width=\linewidth]{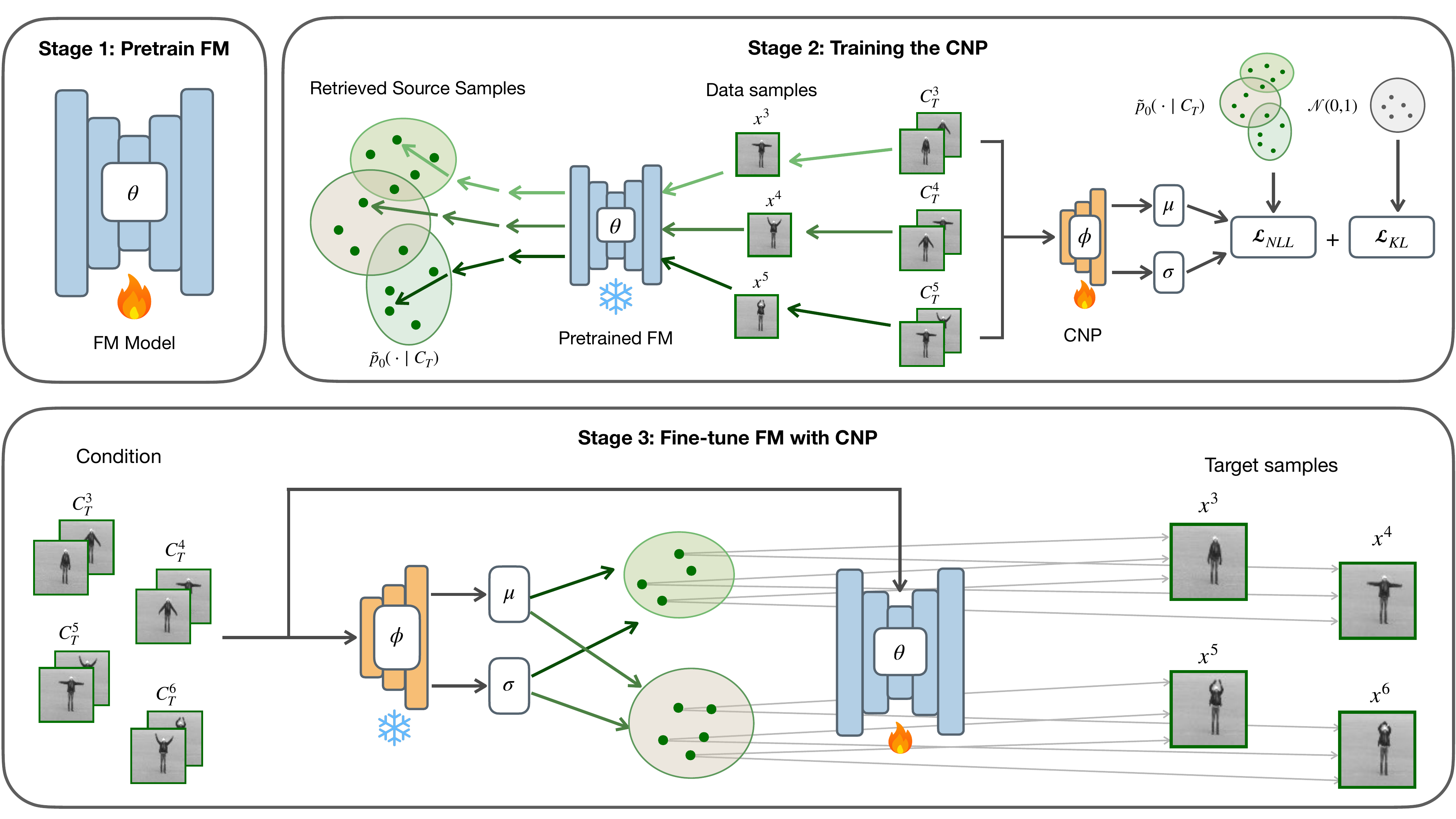}
    \caption{\textbf{Overview of the proposed three-stage training pipeline.} Stage 1: A pretrained FM is used to obtain flow-reversal-induced source samples, defining a source distribution aligned with the target data. Stage 2: The CNP is trained to model this distribution by predicting a conditional Gaussian prior via NLL and KL objectives. Stage 3: The FM model is fine-tuned using source samples drawn from the learned conditional prior, resulting in more coherent and target-aligned transport trajectories.}
    \label{fig:train}
\end{figure}

\section{Method}
\label{sec:method}
We propose \method{}, a temporal generation framework that augments FM with a learned conditional source prior. Standard FM models draw source samples from a standard Gaussian and take the condition as an additional input. However, they often fail to generate high-quality data that faithfully follows the condition. \method{} reinforces the conditioning mechanism by using a conditional noise predictor (CNP) that seeks a source distribution suitable for generating target data under each condition. Our design is general in two ways: (1) the noise predictor handles different types of conditioning, including both interpolation and extrapolation, allowing \method{} to unify these tasks in a single framework; and (2) \method{} only modulates the source distribution supplied to the FM model without changing its architecture, so it can be integrated into existing FM frameworks.

In the following, we describe the training and inference procedure of \method{} in Sec.~\ref{subsec:train_test}, and its application to downstream temporal generation tasks in Sec.~\ref{subsec:temporal_generation}.

\subsection{Training and Inference}
\label{subsec:train_test}
\method{} is trained in three stages: we first pretrain the FM, then train the CNP from the pretrained FM, and finally fine-tune the FM under the learned CNP. An overview of the staged training pipeline is illustrated in Fig.~\ref{fig:train}. Generation itself is straightforward: \method{} draws a source sample from the predicted source distribution and transforms it into a target sample via the learned vector field.

\paratitle{Stage 1: Pretraining the FM model.}
We pretrain the FM model in the standard manner for conditional generation, using a standard Gaussian $\mathcal{N}(\mathbf{0}, \mathbf{I})$ as its source distribution. The model takes the temporal condition $\mathcal{C_T}$ as an additional input and outputs a deterministic vector field $\bm{v}_\theta(\bm{x}_t, t \mid \mathcal{C_T})$. In temporal generation tasks, target samples under similar temporal conditions often share similar structures and dynamics. Since the learned ODE flow is continuous, flow reversal maps nearby targets to nearby regions in source space, yielding source samples that are clustered by the temporal condition rather than spread as a standard Gaussian. This suggests a feasible recipe: construct surrogate per-condition source distributions through flow reversal of the pretrained FM model, and train a model to predict them.

\paratitle{Stage 2: Training the conditional noise predictor.}
We train the CNP using source samples obtained through flow reversal of the pretrained FM model. For each temporal condition $\mathcal{C_T}$, the CNP parameterizes an isotropic Gaussian source distribution:
\begin{equation}
p_\phi(\bm{x}_0 \mid \mathcal{C_T})
:= \mathcal{N} \left(
\bm{x}_0; \bm{\mu}_\phi(\mathcal{C_T}), \sigma^2_\phi(\mathcal{C_T}) \mathbf{I}
\right),
\end{equation}
where $\phi$ denotes the CNP parameters, $\bm{\mu}_\phi(\mathcal{C_T})$ denotes the predicted mean, and
$\sigma_\phi(\mathcal{C_T})$ denotes the standard deviation. Given a target sample $\bm{x}_1 \sim p_1(\cdot \mid \mathcal{C_T})$, we retrieve its corresponding source
sample $\Tilde{\bm{x}}_0$ through \emph{flow reversal}:
\begin{equation}
  \Tilde{\bm{x}}_0
  = \Tilde{\Phi}_{\theta, 1 \to 0}(\bm{x}_1 \mid \mathcal{C_T}) := \bm{x}_1 -
  \int_0^1 \bm{v}_\theta \left(\bm{x}_\tau, \tau \mid \mathcal{C_T} \right)
  \mathrm{d}\tau,
\end{equation}
This mapping induces the conditional source distribution $\Tilde{p}_0(\cdot \mid \mathcal{C_T})$. 
The CNP model is trained to minimize two losses: (1) a negative log-likelihood (NLL) loss:
$\mathcal{L}_\text{NLL}(\phi) =
-\mathbb{E}_{\mathcal{C_T}, \Tilde{\bm{x}}_0 \sim \Tilde{p}_0(\cdot\mid\mathcal{C_T})
} \left[\log p_\phi(\Tilde{\bm{x}}_0 \mid \mathcal{C_T}) \right]$ that trains the CNP to approximate the flow-reversal-induced source distribution; and (2) a KL regularization loss: 
$\mathcal{L}_\text{KL}(\phi) = \mathcal{D}_\text{KL}(p_\phi(\cdot\mid \mathcal{C_T}) \| \mathcal{N}(\mathbf{0}, \mathbf{I}))$ that regularizes the learned conditional source distribution toward the standard Gaussian prior.
The total objective is:
\begin{equation}
  \mathcal{L}_{\mathrm{CNP}}(\phi) =
  \mathcal{L}_{\mathrm{NLL}}(\phi) +
  \lambda \mathcal{L}_{\mathrm{KL}}(\phi),
\end{equation}
where $\lambda > 0$ controls the strength of regularization.  Our choice of loss objectives is justified by the following propositions.  In Prop.~\ref{prop:cpn_min_nll}, we show that, with the NLL loss, the optimal output mean and output variance of the CNP model coincide with the mean and the scaled second central moment of the source distribution induced by flow reversal.

\begin{proposition}[(Informal) Population minimizer of NLL objective]~\label{prop:cpn_min_nll}
Let the CNP model the isotropic Gaussians $p_\phi(\bm{x}_0\mid\mathcal{C_T}) = \mathcal{N}(\bm{x}_0; \bm{\mu}_\phi(\mathcal{C_T}), \sigma_\phi^2(\mathcal{C_T}) \mathbf{I})$.
Then the population minimizer of $\mathcal{L}_{\mathrm{NLL}}$ over the isotropic Gaussian family is
$p_\phi^*(\bm{x}_0\mid\mathcal{C_T})
= \mathcal{N}(\bm{x}_0; \Tilde{\bm{\mu}}_0(\mathcal{C_T}), {\Tilde{\sigma}_0^2}(\mathcal{C_T}) \mathbf{I})$,
whose parameters satisfy
\begin{equation}
\begin{aligned}
\Tilde{\bm{\mu}}_0(\mathcal{C_T}) &= \mathbb E_{\bm{x}_1 \sim p_1(\cdot \mid \mathcal{C_T})}[\Tilde{\Phi}_{\theta, 1 \to 0}(\bm{x}_1 \mid \mathcal{C_T})], \text{  and  } \\
{\Tilde{\sigma}_0^2}(\mathcal{C_T}) &= \frac{1}{d} \mathbb E_{\bm{x}_1 \sim p_1(\cdot \mid \mathcal{C_T})}
\|\Tilde{\Phi}_{\theta, 1 \to 0}(\bm{x}_1 \mid \mathcal{C_T}) - \Tilde{\bm{\mu}}_0\|_2^2.
\end{aligned}
\end{equation}
\end{proposition}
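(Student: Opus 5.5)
The argument is a direct calculation. The NLL objective is an expectation over conditions of a per-condition quantity. The CNP outputs $(\bm{\mu}_\phi(\mathcal{C_T}),\sigma_\phi(\mathcal{C_T}))$ as a function of the condition, and we take the family to be expressive enough (the population, nonparametric sense implied by ``informal''). Under that assumption, minimizing $\mathcal{L}_{\mathrm{NLL}}$ reduces to minimizing, for each fixed $\mathcal{C_T}$ separately, the conditional objective
\begin{equation*}
\ell(\bm{\mu},\sigma^2) := \mathbb{E}_{\bm{x}_1\sim p_1(\cdot\mid\mathcal{C_T})}\Bigl[\tfrac{d}{2}\log(2\pi\sigma^2) + \tfrac{1}{2\sigma^2}\|\Tilde{\Phi}_{\theta,1\to 0}(\bm{x}_1\mid\mathcal{C_T})-\bm{\mu}\|_2^2\Bigr].
\end{equation*}
Here I rewrite the expectation over $\Tilde{\bm{x}}_0\sim\Tilde{p}_0(\cdot\mid\mathcal{C_T})$ as an expectation over $\bm{x}_1$ using the definition of $\Tilde{p}_0$ as the pushforward of $p_1(\cdot\mid\mathcal{C_T})$ under flow reversal. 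I assume the second moment of $\Tilde{\Phi}_{\theta,1\to0}(\bm{x}_1\mid\mathcal{C_T})$ is finite, so everything is well defined.

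\textbf{Step 1 (the mean).} I apply the bias--variance decomposition
\begin{equation*}
\mathbb{E}\|\Tilde{\bm{x}}_0-\bm{\mu}\|_2^2=\mathbb{E}\|\Tilde{\bm{x}}_0-\Tilde{\bm{\mu}}_0\|_2^2+\|\Tilde{\bm{\mu}}_0-\bm{\mu}\|_2^2,
\end{equation*}
which holds because the cross term vanishes by the definition of $\Tilde{\bm{\mu}}_0$. Since $1/(2\sigma^2)>0$ for every $\sigma$, the objective is minimized in $\bm{\mu}$ uniquely at $\bm{\mu}=\Tilde{\bm{\mu}}_0(\mathcal{C_T})$, and this holds regardless of $\sigma$.

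\textbf{Step 2 (the variance).} I substitute the optimal mean and write $s=\sigma^2$ and $V=\mathbb{E}\|\Tilde{\bm{x}}_0-\Tilde{\bm{\mu}}_0\|_2^2$. The profiled objective is
\begin{equation*}
g(s)=\tfrac{d}{2}\log s + \tfrac{V}{2s} + \text{const}.
\end{equation*}
Its derivative $g'(s)=\tfrac{d}{2s}-\tfrac{V}{2s^2}$ vanishes only at $s=V/d$. The derivative is negative below this point and positive above it, so $s=V/d$ is the unique global minimizer, giving exactly $\Tilde{\sigma}_0^2(\mathcal{C_T})$.

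\textbf{Main obstacle.} The only delicate point is the degenerate case $V=0$, where the reversed samples concentrate at a point. Then the infimum is $-\infty$ and it is not attained, since $g(s)\to-\infty$ as $s\to0$. I would state the assumption $V>0$, which is satisfied whenever $p_1(\cdot\mid\mathcal{C_T})$ is non-degenerate, because flow reversal is a bijection.

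The second subtlety is justifying the exchange of the infimum with the outer expectation over $\mathcal{C_T}$. The pointwise minimizer is a measurable function of $\mathcal{C_T}$, since it is defined by conditional expectations, and the integrand is bounded below pointwise by its minimum. Hence the integral of the pointwise minima is a lower bound that is attained by this choice. This yields the claimed $p_\phi^*$.
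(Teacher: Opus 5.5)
Your proposal is correct and follows essentially the same route as the paper, which obtains this result as the $\beta=0$ case of Prop.~\ref{prop:cnp_min}: it writes the Gaussian NLL in closed form, applies the decomposition $\mathbb{E}\|\bm{x}_0-\bm{\mu}_\phi\|_2^2 = d\tilde{\sigma}_0^2 + \|\bm{\mu}_\phi-\tilde{\bm{\mu}}_0\|_2^2$, and solves the first-order conditions. Your version is somewhat more careful than the paper's. The sign analysis of $g'(s)$ certifies a unique global minimizer rather than only a stationary point, and you make explicit two things the paper leaves implicit: the per-condition decoupling, and the degenerate case $V=0$, where no minimizer exists in the family.
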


The propositions covering more general cases, along with their proofs, are provided in Appendix~\ref{app:mathematical_details}. In particular, Prop.~\ref{prop:cnp_min} establishes the results considering both the NLL and the KL divergence loss, while Cor.~\ref{cor:cnp_limits} characterizes the limiting behavior of the distribution in extreme cases.  We also summarize the entire training procedure of the CNP model in Algo.~\ref{alg:train_cnp} in Appendix~\ref{app:algorithms}.

\paratitle{Stage 3: Fine-tuning the FM model with predicted conditional source distributions.}
Lastly, we fine-tune the FM model with the per-condition source distribution predicted by the CNP model.  This allows us to refine its underlying vector field and further improve generation performance. Following the same training procedure as in the first stage, we replace the standard Gaussian with the predicted distributions for each condition.
\begin{equation}
\mathcal{L}_{\mathrm{FT-FM}}(\theta) = \mathbb{E}_{\mathcal{C_T}, \bm{x}_0 \sim p_\phi(\cdot\mid\mathcal{C_T}), \bm{x}_1 \sim p_1(\cdot \mid \mathcal{C_T}), t \sim \mathcal{U}(0,1)} \left[\|\bm{v}_\theta(\bm{x}_t, t \mid \mathcal{C_T}) - (\bm{x}_1 - \bm{x}_0)\|_2^2 \right].
\end{equation}
We summarize the fine-tuning procedure in Algo.~\ref{alg:train_cfm} in Appendix~\ref{app:algorithms}.

\paratitle{Inference with \method{}.}
During inference, we first draw source samples with the CNP model $\bm{x}_0 \sim p_\phi(\cdot\mid\mathcal{C_T})$, and transform them into target data with the FM model (Eqn.~\ref{eqn:fm_inference}).


\begin{theorem}[(Informal) Conditional source locality]
Suppose the predicted source mean $\bm{\mu}_\phi^*(\cdot)$ is $L_\mu$-Lipschitz
continuous in the temporal condition.
For two conditions $\mathcal{C}_\mathcal{T}$ and $\mathcal{C}_\mathcal{T}'$,
independently drawn CNP source samples satisfy,
with high probability,
\begin{equation}
\|\bm{x}_0 - \bm{x}_0'\|_2
\leq
L_\mu\|\mathcal{C}_\mathcal{T} - \mathcal{C}_\mathcal{T}'\|_2
+ \mathcal{O}\left(\sqrt{d\bigl(\sigma_\phi^{*2}(\mathcal{C}_\mathcal{T})+\sigma_\phi^{*2}(\mathcal{C}_\mathcal{T}') \bigr)}\right).
\end{equation}
\end{theorem}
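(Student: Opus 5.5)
We write each CNP source sample through its Gaussian reparameterization, $\bm{x}_0 = \bm{\mu}_\phi^*(\mathcal{C}_\mathcal{T}) + \sigma_\phi^*(\mathcal{C}_\mathcal{T})\,\bm{\epsilon}$ and $\bm{x}_0' = \bm{\mu}_\phi^*(\mathcal{C}_\mathcal{T}') + \sigma_\phi^*(\mathcal{C}_\mathcal{T}')\,\bm{\epsilon}'$, where $\bm{\epsilon},\bm{\epsilon}' \sim \mathcal{N}(\mathbf{0},\mathbf{I}_d)$ are independent. This holds because $p_\phi^*$ is the isotropic Gaussian family, with parameters identified in Prop.~\ref{prop:cpn_min_nll}, or in the general case Prop.~\ref{prop:cnp_min}. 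The triangle inequality then splits the distance into a deterministic mean term and a stochastic noise term:
\begin{equation}
\|\bm{x}_0 - \bm{x}_0'\|_2 \le \|\bm{\mu}_\phi^*(\mathcal{C}_\mathcal{T}) - \bm{\mu}_\phi^*(\mathcal{C}_\mathcal{T}')\|_2 + \|\sigma_\phi^*(\mathcal{C}_\mathcal{T})\bm{\epsilon} - \sigma_\phi^*(\mathcal{C}_\mathcal{T}')\bm{\epsilon}'\|_2 .
\end{equation}
The Lipschitz assumption bounds the first term directly by $L_\mu\|\mathcal{C}_\mathcal{T} - \mathcal{C}_\mathcal{T}'\|_2$.

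For the second term, we use independence: $\bm{z} := \sigma\bm{\epsilon} - \sigma'\bm{\epsilon}'$, with $\sigma := \sigma_\phi^*(\mathcal{C}_\mathcal{T})$ and $\sigma' := \sigma_\phi^*(\mathcal{C}_\mathcal{T}')$, is exactly $\mathcal{N}(\mathbf{0}, (\sigma^2+\sigma'^2)\mathbf{I}_d)$. Hence $\|\bm{z}\|_2 = \sqrt{\sigma^2+\sigma'^2}\,\|\bm{g}\|_2$ with $\bm{g}\sim\mathcal{N}(\mathbf{0},\mathbf{I}_d)$.

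We then apply Gaussian concentration of the norm. The map $\bm{g}\mapsto\|\bm{g}\|_2$ is $1$-Lipschitz, and $\mathbb{E}\|\bm{g}\|_2 \le \sqrt{d}$, so
\begin{equation}
\Pr\bigl(\|\bm{g}\|_2 \ge \sqrt{d} + s\bigr) \le e^{-s^2/2}.
\end{equation}
Alternatively, the Laurent--Massart chi-square bound gives the same result. Taking $s = \sqrt{2\log(1/\delta)}$, with probability at least $1-\delta$ we get
\begin{equation}
\|\bm{z}\|_2 \le \sqrt{\sigma^2+\sigma'^2}\,\bigl(\sqrt{d}+\sqrt{2\log(1/\delta)}\bigr).
\end{equation}
For fixed $\delta$, and since $d\ge1$, this is $\mathcal{O}\bigl(\sqrt{d(\sigma^2+\sigma'^2)}\bigr)$. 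Combining it with the mean term gives the stated inequality. If we also want an upper tail that is uniform in $d$, we can choose $\delta = e^{-cd}$, which still keeps the constant absolute.

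The only delicate point is making ``with high probability'' precise. The constant hidden in $\mathcal{O}(\cdot)$ depends on $\delta$, and we will state this explicitly in the formal version. Everything else is routine; in particular, Lipschitzness of $\sigma_\phi^*$ is not needed. The bound is evaluated at two fixed conditions, so we must also note that $\sigma_\phi^*$ is finite and positive there. This follows from the minimizer characterization: the limits in Cor.~\ref{cor:cnp_limits} keep $\sigma_\phi^{*2}$ bounded whenever the flow-reversed second moments are finite.
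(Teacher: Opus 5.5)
Your proposal is correct and follows essentially the paper's route: the paper also uses that $\bm{x}_0-\bm{x}_0'\sim\mathcal{N}\bigl(\bm{\mu}_\phi^*(\mathcal{C}_\mathcal{T})-\bm{\mu}_\phi^*(\mathcal{C}_\mathcal{T}'),\,(\sigma^2+\sigma'^2)\mathbf{I}_d\bigr)$, bounds the centered norm with the Laurent--Massart $\chi^2(d)$ tail (its constant $c_{d,\delta}=\sqrt{d+2\sqrt{d\log(1/\delta)}+2\log(1/\delta)}$ is marginally tighter than your $\sqrt{d}+\sqrt{2\log(1/\delta)}$), and then combines the (reverse) triangle inequality with the Lipschitz bound on the mean. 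One small quibble: Cor.~\ref{cor:cnp_limits} only describes limiting regimes and does not show that $\sigma_\phi^{*2}$ is finite and positive at fixed $(\alpha,\beta)$, but you do not need this, since your argument still works if a variance vanishes and the paper simply assumes $\sigma,\sigma'>0$.
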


This theorem characterizes locality in the learned source space. Under a Lipschitz condition on the predicted mean, nearby temporal conditions yield nearby source centers, while the predicted variances control stochastic dispersion around these centers. The formal statement and proof are provided in Appendix~\ref{app:mathematical_details}.

\subsection{Application to Temporal Generation Tasks}
\label{subsec:temporal_generation}
We consider a series of temporal generation tasks that progressively generate sequential observations given temporal conditions. This formulation covers video prediction, video interpolation, and motion planning, which differ primarily in how the temporal condition is constructed.

\paratitle{Video Prediction.}
Given initial frames $\bm{x}^0$ and $\bm{x}^1$, the goal is to autoregressively predict future frames $\bm{x}^i$ for $i = 2, \ldots, N$. At step $i$, the preceding frame $\bm{x}^{i-1}$ serves as the reference frame, while a context frame $\bm{x}^c$ is sampled from earlier observations with $c \sim \mathcal{U}(\{0, 1, \ldots, i-2\})$, yielding the temporal condition $\mathcal{C}_\mathcal{T}^i = (\bm{x}^{i-1}, \bm{x}^c, i-c)$ that captures both recent dynamics and longer-range temporal structure.

\paratitle{Video Interpolation.}
Given boundary frames $\bm{x}^0$ and $\bm{x}^N$, the goal is to progressively predict intermediate frames $\bm{x}^i$ for $i = 1, \ldots, N-1$. At step $i$, the previously generated frame $\bm{x}^{i-1}$ serves as the reference, while $\bm{x}^N$ provides a fixed future context, giving $\mathcal{C}_\mathcal{T}^i = (\bm{x}^{i-1}, \bm{x}^N, N-i)$, where $N-i$ denotes the remaining temporal distance to the endpoint.


\paratitle{Motion Planning.}
Given an initial state $\bm{x}^0$ and a target end-effector
pose $\bm{x}^N$, the objective is to generate a feasible
trajectory that reaches the target pose from the initial state.
The model generates the entire trajectory jointly,
conditioned on $\mathcal{C}_\mathcal{T} = (\bm{x}^0, \bm{x}^N)$.

Detailed sampling procedures and algorithms are provided in Appendix~\ref{app:algorithms}.

\section{Experiments}
\label{sec:experiments}

\subsection{Experimental Setup}
\label{sec:experimental_setup}
We evaluate \method{} across two modalities: \emph{video generation}, comprising prediction and interpolation, where output quality emphasizes visual coherence; and \emph{motion planning}, where the learned, condition-dependent source is expected to generalize from pixels to robot trajectories.

\paratitle{Video generation.}
We evaluate on BAIR Robot Pushing~\citep{ebert2017self} ($64 \times 64$), KTH actions~\citep{schuldt2004recognizing} ($64 \times 64$), and CLEVRER~\citep{yi2020clevrer} ($128 \times 128$), spanning stochastic manipulation, structured human movement, and deterministic object dynamics. Following RIVER~\citep{davtyan2023river}, we use the SRVP splits~\citep{franceschi2020stochastic} and report PSNR, SSIM~\citep{wang2004ssim}, and FVD~\citep{unterthiner2019fvd} under the SRVP/RIVER protocol. 
We define prediction as $p \!\rightarrow\! k$, where $p$ observed frames are used to autoregressively generate $k$ future frames. For interpolation, $p + f \!\rightarrow\! k$ denotes using $p$ initial and $f$ terminal observed frames to generate $k$ intermediate frames; for brevity we write $2 \!\rightarrow\! 14$ for the two-endpoint case $1{+}1 \!\rightarrow\! 14$. Our primary baseline is RIVER, the fixed-source FM backbone we replace, providing a direct comparison between the standard Gaussian source and our learned, condition-dependent source. For other video-generation baselines, including MAGVIT~\citep{yu2023magvit}, iVideoGPT~\citep{wu2024ivideogpt}, CVF~\citep{shrivastava2024cvf}, and PFI~\citep{chen2024pfi}, we report results from the original papers. We further compare with recent methods that modify or learn the source distribution, including CAR-Flow~\citep{chen2025car}, PYoCo~\citep{ge2023preserve}, and CSFM~\citep{kim2026better}. Since directly comparable results are not available under our BAIR and KTH settings, we reimplement these methods following their original formulations and evaluate them under matched backbone, training-budget, and evaluation settings. Complete implementation and evaluation details are provided in Appendix~\ref{app:experimental_config}.

\begin{table}[t]
\centering
\footnotesize
\renewcommand{\arraystretch}{0.8}
\setlength{\tabcolsep}{6pt}
\caption{\textbf{Results on video prediction.}
CNP-Flow improves over the fixed-source RIVER baseline and remains competitive with recent source-learning methods. CNP-Flow results are averaged over three seeds.}
\label{tab:video_prediction}
\begin{tabular}{c c l c c c}
\toprule
Dataset & $p \rightarrow k$ & Method
& PSNR $\uparrow$ & SSIM $\uparrow$ & FVD $\downarrow$ \\
\midrule

\multirow[c]{7}{*}{BAIR64}
& \multirow[c]{7}{*}{$1 \rightarrow 15$}
& RIVER~\citep{davtyan2023river}
    & -- & -- & 73.50 \\
& & MAGVIT~\citep{yu2023magvit}
    & 19.30 & 0.787 & 62.00 \\
& & iVideoGPT~\citep{wu2024ivideogpt}
    & \textbf{20.40} & 0.823 & 75.00 \\
& & CAR-Flow~\citep{chen2025car}
    & 20.08 & 0.836 & 71.37 \\ 
& & PYoCo~\citep{ge2023preserve}
    & 20.13 & \second{0.837} & \second{61.88} \\
& & CSFM~\citep{kim2026better}
    & \second{20.16} & 0.836 & 64.20 \\
& & \cellcolor{ourshl}\textbf{\method{} (Ours)}
    & \cellcolor{ourshl}20.15
    & \cellcolor{ourshl}\textbf{0.839}
    & \cellcolor{ourshl}\textbf{61.17} \\

\midrule

\multirow[c]{6}{*}{KTH}
& \multirow[c]{6}{*}{$10 \rightarrow 30$}
& RIVER~\citep{davtyan2023river}
    & 30.40 & 0.860 & 180.00 \\
& & CVF~\citep{shrivastava2024cvf}
    & 30.60 & 0.891 & 108.60 \\
& & CAR-Flow~\citep{chen2025car}
    & 29.25 & 0.897 & 126.64 \\ 
& & PYoCo~\citep{ge2023preserve}
    & 31.14 & 0.903 & 82.32 \\
& & CSFM~\citep{kim2026better}
    & \second{31.64} & \second{0.915} & \second{69.00} \\ 
& & \cellcolor{ourshl}\textbf{\method{} (Ours)}
    & \cellcolor{ourshl}\textbf{34.25}
    & \cellcolor{ourshl}\textbf{0.943}
    & \cellcolor{ourshl}\textbf{44.76} \\ 

\midrule

\multirow[c]{3}{*}{KTH}
& \multirow[c]{3}{*}{$10 \rightarrow 40$}
& RIVER~\citep{davtyan2023river}
    & 29.00 & 0.820 & 170.50 \\
& & CVF~\citep{shrivastava2024cvf}
    & \second{29.70} & \second{0.852} & \second{100.80} \\
& & \cellcolor{ourshl}\textbf{\method{} (Ours)}
    & \cellcolor{ourshl}\textbf{32.69}
    & \cellcolor{ourshl}\textbf{0.930}
    & \cellcolor{ourshl}\textbf{54.88} \\ 

\midrule

\multirow[c]{3}{*}{CLEVRER}
& \multirow[c]{3}{*}{$2 \rightarrow 14$}
& RIVER~\citep{davtyan2023river}
    & -- & -- & 48.96 \\
& & PFI~\citep{chen2024pfi}
    & -- & -- & \second{39.31} \\
& & \cellcolor{ourshl}\textbf{\method{} (Ours)}
    & \cellcolor{ourshl}\textbf{42.23}
    & \cellcolor{ourshl}\textbf{0.994}
    & \cellcolor{ourshl}\textbf{24.18} \\

\bottomrule
\end{tabular}
\vspace{1em}
\centering
\footnotesize
\renewcommand{\arraystretch}{0.9}
\setlength{\tabcolsep}{8pt}
\caption{\textbf{Results on video interpolation.} CNP-Flow improves all metrics over RIVER on both datasets.}
\label{tab:video_interpolation}
\begin{tabular}{c c l c c c}
\toprule
Dataset & $p\!+\!f \rightarrow k$ & Method & PSNR $\uparrow$ & SSIM $\uparrow$ & FVD $\downarrow$ \\
\midrule
\multirow[c]{2}{*}{KTH}
& \multirow[c]{2}{*}{$2 \rightarrow 14$}
& RIVER~\citep{davtyan2023river}        & 35.65 & 0.9627 & 38.28 \\
& & \cellcolor{ourshl}\textbf{\method{} (Ours)} & \cellcolor{ourshl}\textbf{36.07} & \cellcolor{ourshl}\textbf{0.9634} & \cellcolor{ourshl}\textbf{36.81} \\
\midrule
\multirow[c]{2}{*}{CLEVRER}
& \multirow[c]{2}{*}{$2 \rightarrow 14$}
& RIVER~\citep{davtyan2023river}        & 43.79 & 0.9949 & 16.01 \\
& & \cellcolor{ourshl}\textbf{\method{} (Ours)} & \cellcolor{ourshl}\textbf{45.68} & \cellcolor{ourshl}\textbf{0.9954} & \cellcolor{ourshl}\textbf{10.77} \\
\bottomrule
\end{tabular}
\end{table}

\paratitle{Motion planning.}
For cross-modality transfer, we evaluate on the 7-DoF Franka Panda benchmark from MPD~\citep{carvalho2025motion}: each sample is a joint-space trajectory conditioned on start and goal configurations. We compare \method{} against MPD (a diffusion-based planner) and a 
fixed-source FM, all trained on identical trajectory data. Following MPD, we report success, valid fraction, end-effector position/orientation error, and collision intensity. FM-based methods use a $15$-step Euler ODE sampler followed by MPD post-hoc cost refinement; details are provided in Appendix~\ref{app:experimental_config}.

\subsection{Main results}
\label{sec:main-results}

\paratitle{Video generation.} 
\method{} achieves the best FVD and SSIM across all prediction settings (Tab.~\ref{tab:video_prediction}), consistently improving over the fixed-source RIVER backbone and the strongest competing baselines. The gain is particularly large on KTH $10\!\rightarrow\!30$, where FVD decreases from $180.00$ with RIVER to $44.76$ with \method{}, compared with $126.64$, $82.32$, and $69.00$ for CAR-Flow, PYoCo, and CSFM, respectively, while the gap is much smaller on BAIR. These methods either use predefined source structures or learn the source jointly with the flow using regularization or alignment. In contrast, \method{} learns its conditional source from condition-specific source samples obtained through flow reversal, providing explicit supervision rather than relying on predefined structure or indirect joint optimization.

The same trend holds for $2\!\rightarrow\!14$ interpolation (Tab.~\ref{tab:video_interpolation}), with consistent gains on all metrics over RIVER. Interpolation gains are smaller in absolute terms, as both endpoints constrain the rollout and reduce the burden on the source distribution.

\begin{table}[t]
\caption{\textbf{Results on motion planning.}
All methods use a matched 15-step sampling budget, and results are
averaged over three seeds.}

\centering
\footnotesize
\setlength{\tabcolsep}{4pt}

\begin{tabular}{l c c c c c}
\toprule
Method
& Success $\uparrow$
& Valid (\%) $\uparrow$
& EE pos (cm) $\downarrow$
& EE ori ($^\circ$) $\downarrow$
& Collision $\downarrow$ \\
\midrule

MPD~\citep{carvalho2025motion}
& \textbf{97\%}
& 86.6
& 1.9
& 1.2
& 0.0030 \\

Fixed-source FM
& 96\%
& 84.9
& 1.6
& 1.1
& 0.0038 \\

\rowcolor{ourshl}
\textbf{\method{} (Ours)}
& \textbf{97\%}
& \textbf{88.5}
& \textbf{0.8}
& \textbf{0.5}
& \textbf{0.0023} \\

\bottomrule
\end{tabular}

\label{tab:motion_planning}
\end{table}
\begin{figure}[h]
\centering
\includegraphics[width=0.8\textwidth]{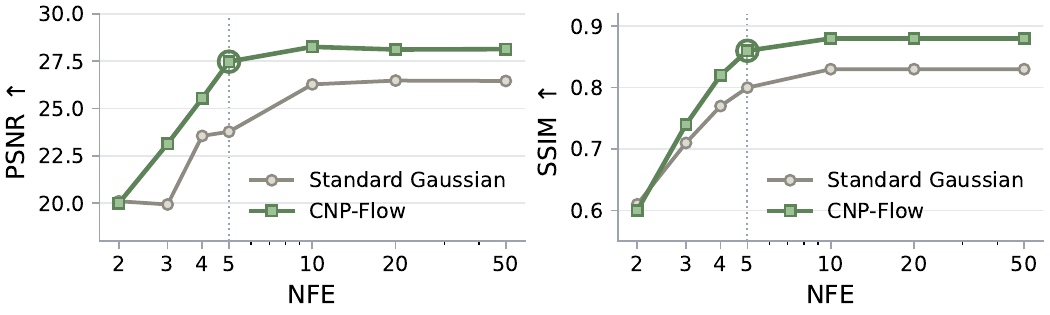}
\caption{\textbf{Inference efficiency on KTH prediction.}
Curves show PSNR/SSIM averaged across generated samples per video. \method{} reaches the standard Gaussian PSNR plateau ($\approx 26.5$) at NFE $=5$ and converges higher ($\approx 28.2$) by NFE $=10$; SSIM shows a similar trend. The crossover point is highlighted on each panel.}
\label{fig:efficiency}
\end{figure}

\paratitle{Motion planning.}
The learned source is modality-agnostic: it parameterizes an isotropic Gaussian over latents conditioned on the temporal condition, regardless of what those latents represent. On the 7-DoF Franka Panda benchmark (Tab.~\ref{tab:motion_planning}), replacing diffusion (MPD) with a standard Gaussian source FM of similar capacity changes no metric outside noise. This rules out the generator family as the active variable. Adding the learned source to the same FM raises Valid from $84.9\%$ to $88.5\%$, halves both end-effector errors, and cuts collision intensity by about $40\%$. Success rate moves only $96\% \!\rightarrow\! 97\%$ because this metric ("at least one valid trajectory in the batch") is already saturated. Valid counts the fraction of sampled trajectories that are feasible, which is precisely what a better source distribution should improve. The mechanism matches the video case: the learned source places the integration start aligned with the manifold of valid outputs. The modality, dimension, and failure mode (a robot collision rather than low SSIM) are different.

\paratitle{Inference efficiency.}
Because $\bm{\mu}_\phi(\mathcal{C}^i)$ is already aligned with the data manifold, the ODE no longer needs to first traverse the gap from a standard Gaussian before refining—each step is local refinement, so coarse solvers suffice. Fig.~\ref{fig:efficiency} substantiates this on the KTH prediction. Standard Gaussian source baseline plateaus at PSNR $\approx 26.5$ from NFE $=10$ onward; \method{} attains and surpasses that plateau at NFE $=5$ (PSNR $27.48$, SSIM $0.86$) and continues rising to PSNR $\approx 28.2$ at NFE $=10$. \method{} at NFE $=5$ already exceeds the baseline at NFE $=50$, an order-of-magnitude reduction in solver budget; the final quality is superior, so the gain is both efficiency and endpoint improvement.

\subsection{Analysis}
\label{sec:analysis}

\paratitle{Qualitative comparison.}
Fig.~\ref{fig:visualize} shows the effect of the learned source. On CLEVRER interpolation, the fixed-source FM baseline incurs appearance drift and ghosting, while \method{} maintains object identity and tracks the ground truth. On Franka Panda, MPD and fixed-source FM produce numerous colliding trajectories, whereas \method{} produces a denser set of valid, collision-free plans. The visual failure modes reflect the metric results: a condition-dependent source improves temporal consistency in video and trajectory validity in motion planning.

\begin{figure}[!t]
    \centering
    \includegraphics[width=0.95\linewidth]{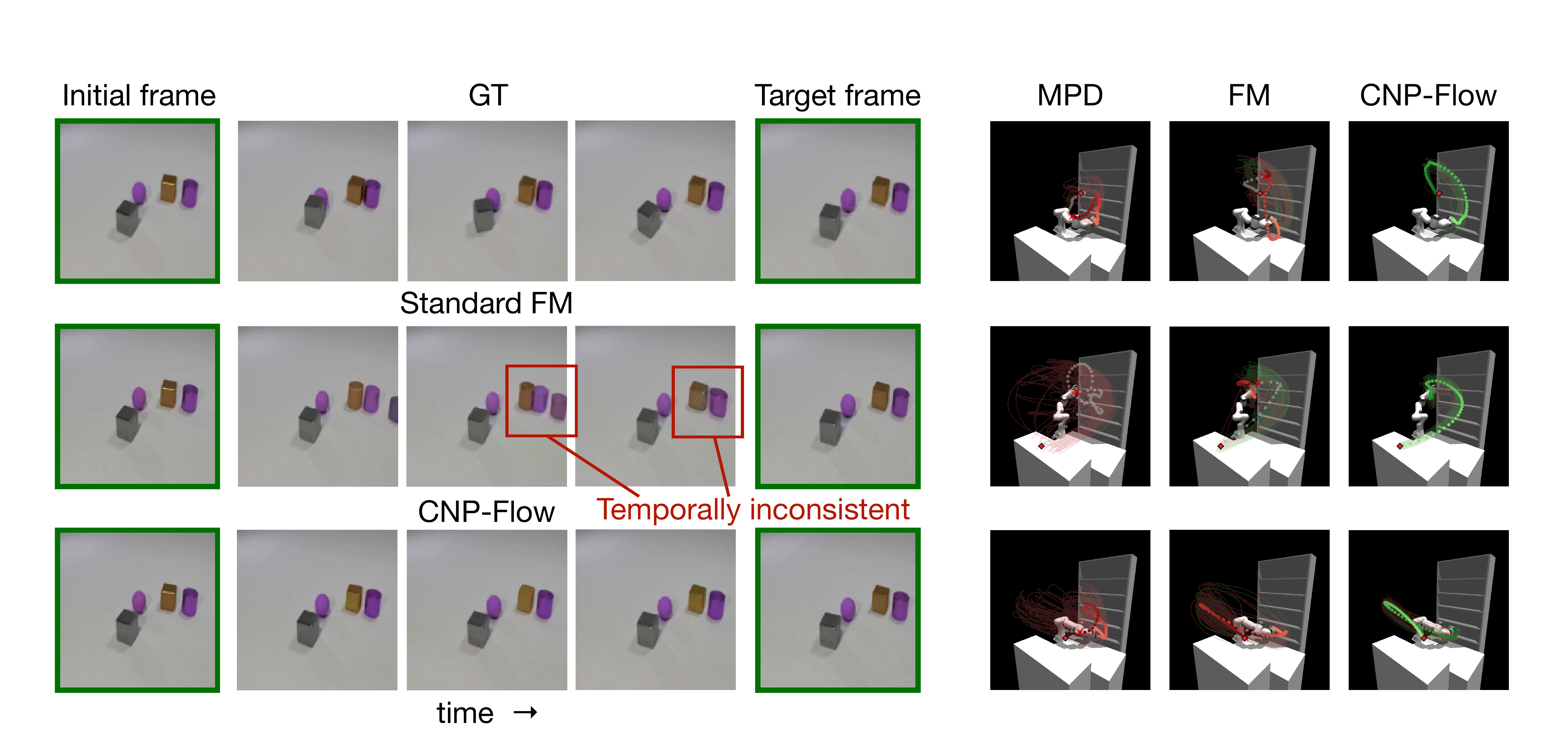}
    \caption{\textbf{Qualitative comparison.} \emph{Left}: CLEVRER interpolation. Fixed-source FM drifts at the highlighted frames, while \method{} matches the ground truth. \emph{Right}: Franka Panda motion planning. Each panel overlays 50 sampled trajectories (red: colliding, green: collision-free). \method{} produces more collision-free plans than MPD and fixed-source FM.
    }
    \label{fig:visualize}
\end{figure}

\begin{table}[!t]
\centering
\caption{\textbf{Ablation studies.} \emph{Left}: source parameterization on KTH $10\!\rightarrow\!30$ video prediction, isolating the contribution of the learned mean and learned variance. \emph{Right}: source learning and backbone fine-tuning (FT) on Franka Panda motion planning.
In both tasks, metrics improve as the source better matches the condition.
}
\setlength{\tabcolsep}{4pt}
\renewcommand{\arraystretch}{0.95}
\resizebox{\textwidth}{!}{%
\footnotesize
\begin{tabular}{l c c c}
\toprule
\multicolumn{4}{c}{Video prediction (KTH)} \\
\midrule
Source $\bm{x}_0$ & PSNR $\uparrow$ & SSIM $\uparrow$ & FVD $\downarrow$ \\
\midrule
$\mathcal{N}(\bm{0}, \mathbf{I})$
  & 30.40 & 0.860  & 180.00 \\
$\bm{\mu}_\phi(\mathcal{C}^i)$
  & 31.46 & 0.908 & 83.82  \\
\rowcolor{ourshl}
$\mathcal{N}(\bm{\mu}_\phi(\mathcal{C}^i), \bm{\sigma}_\phi^2(\mathcal{C}^i)\mathbf{I})$
  & \textbf{34.25} & \textbf{0.943} & \textbf{44.76} \\
\bottomrule
\end{tabular}%
\hspace{5pt}%
\begin{tabular}{l c c c c}
\toprule
\multicolumn{5}{c}{Motion planning (Franka Panda)} \\
\midrule
Variant & Valid (\%) $\uparrow$ & EE pos (cm) $\downarrow$ & EE ori ($^\circ$) $\downarrow$ & Coll. $\downarrow$ \\
\midrule
Standard Gaussian
  & 84.9 & 1.6 & 1.1 & 0.0038 \\
\method{} w/o FT
  & 87.5 & 1.0 & 0.7 & 0.0024 \\
\rowcolor{ourshl}
\method{} w/ FT
  & \textbf{88.5} & \textbf{0.8} & \textbf{0.5} & \textbf{0.0023} \\
\bottomrule
\end{tabular}%
}
\label{tab:ablation_combined}
\end{table}

\paratitle{Ablation studies.}
We ablate two design choices of \method{} in Tab.~\ref{tab:ablation_combined}: the parameterization of the learned source for video prediction, and whether the FM backbone should be fine-tuned after introducing the learned source. On KTH $10\!\rightarrow\!30$, replacing the standard Gaussian source with the deterministic learned mean $\bm{\mu}_\phi(\mathcal{C}^i)$ improves FVD from $180.00$ to $83.82$, showing that adapting the source to the temporal condition already provides a large gain over an unconditional source. However, the deterministic mean restricts each condition to a single source point. Adding the learned variance to form the full source $\bm{\mu}_\phi(\mathcal{C}^i)+\bm{\sigma}_\phi(\mathcal{C}^i)\bm{\epsilon}$ improves FVD to $44.76$, PSNR from $31.46$ to $34.25$, and SSIM from $0.908$ to $0.943$. This shows that stochasticity remains important for video prediction, as multiple plausible futures can exist under the same temporal conditions. Additional controls in Appendix~\ref{app:additional_analysis} show that, with the FM frozen, a CNP without flow reversal supervision performs similarly to the standard Gaussian source, whereas flow reversal supervision substantially improves FVD, isolating its contribution to source learning.

For motion planning, the right side of Tab.~\ref{tab:ablation_combined} isolates two effects. Replacing the standard Gaussian source with a learned source (rows $1\!\rightarrow\!2$) yields the larger effect: Valid jumps from $84.9\%$ to $87.5\%$, end-effector position error drops from $1.6$ cm to $1.0$ cm, and collision intensity drops by about $37\%$. All of this occurs without touching the FM backbone. Fine-tuning the backbone on the new source (rows $2\!\rightarrow\!3$) closes the remaining gap: Valid $87.5\%\!\rightarrow\!88.5\%$, orientation $0.7^\circ\!\rightarrow\!0.5^\circ$, collision $0.0024\!\rightarrow\!0.0023$. This suggests that re-aligning the velocity field with the learned, condition-dependent source distribution adds refinement.



\paratitle{Limitations and future work.}\label{sec:limitations}
While CNP-Flow improves generation quality and sampling efficiency across video prediction, video interpolation, and motion planning, one key limitation remains: the CNP uses a fixed form of temporal condition per task, limiting its use of richer contexts such as longer-horizon histories. A natural extension is to pair more expressive conditional source families with a flexible context encoder that absorbs longer-range, task-dependent structure. More broadly, treating the source distribution as a learnable conditioning channel is orthogonal to vector-field conditioning and applicable to any conditional FM task.
\section{Conclusion}
We presented CNP-Flow, a framework for learning conditional
source distributions through flow reversal for temporal generation. By using a pretrained flow model to provide source supervision, CNP-Flow learns a source distribution adapted to each temporal condition while preserving the FM backbone architecture. Experiments across video prediction, video interpolation, and 7-DoF Franka motion planning demonstrate improved generation quality, alongside gains in sampling efficiency.


\section*{Acknowledgment}
This work was supported by the AMD University Program AI \& HPC Cluster. We further acknowledge Kuo-Guang Tsai for his technical support on the AMD system cluster infrastructure. This research was also supported by the National Science and Technology Council (NSTC), Taiwan, under Grants 115-2222-E-002-006-, 115-2634-F-002-001-,  115-2218-E-002-026-, and 115-2634-F-002-012-.

\section*{AI Use Disclosure Statement}
Large language models were used to assist with language editing and improve the clarity of the manuscript. Claude Code was also used to support software development, including implementation and debugging. All AI-generated text and code were reviewed, revised as needed, and validated by the authors, who take full responsibility for the final manuscript and code.

\section*{Reproducibility Statement}
We provide the details necessary to reproduce our results in the main paper and appendix. The complete training and inference procedures of \method{} are provided in Appendix~\ref{app:algorithms}, with additional mathematical details in Appendix~\ref{app:mathematical_details}. Appendix~\ref{app:experimental_config} specifies the experimental setup, model architectures, training hyperparameters, flow reversal configurations, computational resources, and evaluation protocols for both video generation and motion planning. We also report results over multiple independent random seeds where applicable and provide additional analyses and generated results in Appendices~\ref{app:additional_analysis} and~\ref{app:additional_results}.

{
    \small
    \bibliographystyle{ieeenat_fullname}
    \bibliography{bibs/main}

@STRING{NeurIPS = "Advances in Neural Information Processing Systems (NeurIPS)"}

@STRING{NIPS = "Advances in Neural Information Processing Systems (NIPS)"}

@STRING{ICML = "International Conference on Machine Learning (ICML)"}

@STRING{ICLR = "International Conference on Learning Representations (ICLR)"}

@STRING{AAAI = "Association for the Advancement of Artificial Intelligence (AAAI)"}

@STRING{CVPR = "Proceedings of the IEEE/CVF Conference on Computer Vision and Pattern Recognition (CVPR)"}

@STRING{ECCV = "Proceedings of the European Conference on Computer Vision (ECCV)"}

@STRING{ICCV = "Proceedings of the IEEE/CVF International Conference on Computer Vision (ICCV)"}

@STRING{CORL = "Conference on Robot Learning (CoRL)"}

@STRING{TMLR = "Transactions on Machine Learning Research (TMLR)"}

@inproceedings{ho2020denoising,
  title     = {Denoising Diffusion Probabilistic Models},
  author    = {Ho, Jonathan and Jain, Ajay and Abbeel, Pieter},
  booktitle = NeurIPS,
  year      = {2020}
}

@inproceedings{lipman2023flow,
  title     = {Flow Matching for Generative Modeling},
  author    = {Lipman, Yaron and Chen, Ricky T. Q. and Ben-Hamu, Heli and Nickel, Maximilian and Le, Matt},
  booktitle = ICLR,
  year      = {2023},
}

@article{tong2024improving,
  title   = {Improving and Generalizing Flow-Based Generative Models with Minibatch Optimal Transport},
  author  = {Tong, Alexander and Fatras, Kilian and Malkin, Nikolay and Huguet, Guillaume and Zhang, Yanlei and Rector-Brooks, Jarrid and Wolf, Guy and Bengio, Yoshua},
  journal = TMLR,
  year    = {2024},
}

@inproceedings{pooladian2023multisample,
  title     = {Multisample Flow Matching: Straightening Flows with Minibatch Couplings},
  author    = {Pooladian, Aram-Alexandre and Ben-Hamu, Heli and Domingo-Enrich, Carles and Amos, Brandon and Lipman, Yaron and Chen, Ricky T. Q.},
  booktitle = ICML,
  year      = {2023}
}

@inproceedings{esser2024scaling,
  title     = {Scaling Rectified Flow Transformers for High-Resolution Image Synthesis},
  author    = {Esser, Patrick and Kulal, Sumith and Blattmann, Andreas and Entezari, Rahim and M{\"u}ller, Jonas and Saini, Harry and Levi, Yam and Lorenz, Dominik and Sauer, Axel and Boesel, Frederic and Podell, Dustin and Dockhorn, Tim and English, Zion and Rombach, Robin},
  booktitle = ICML,
  year      = {2024},
}

@inproceedings{3d_flowmatch_actor,
  author = {Gkanatsios, Nikolaos and Xu, Jiahe and Bronars, Matthew and Mousavian, Arsalan and Ke, Tsung-Wei and Fragkiadaki, Katerina},
  title = {3D FlowMatch Actor: Unified 3D Policy for Single- and Dual-Arm Manipulation},
  booktitle = {arXiv preprint arXiv:2508.11002},
  year = {2025},
}

@inproceedings{wagenmaker2025steering,
  title={Steering your diffusion policy with latent space reinforcement learning},
  author={Wagenmaker, Andrew and Nakamoto, Mitsuhiko and Zhang, Yunchu and Park, Seohong and Yagoub, Waleed and Nagabandi, Anusha and Gupta, Abhishek and Levine, Sergey},
  booktitle=CORL,
  year={2025}
}

@inproceedings{voleti2022mcvd,
  title={Mcvd-masked conditional video diffusion for prediction, generation, and interpolation},
  author={Voleti, Vikram and Jolicoeur-Martineau, Alexia and Pal, Chris},
  booktitle=NeurIPS,
  year={2022},
}

@inproceedings{yu2023magvit,
  title={Magvit: Masked generative video transformer},
  author={Yu, Lijun and Cheng, Yong and Sohn, Kihyuk and Lezama, Jos{\'e} and Zhang, Han and Chang, Huiwen and Hauptmann, Alexander G and Yang, Ming-Hsuan and Hao, Yuan and Essa, Irfan and others},
  booktitle=CVPR,
  year={2023},
}

@inproceedings{wu2024ivideogpt,
  title={ivideogpt: Interactive videogpts are scalable world models},
  author={Wu, Jialong and Yin, Shaofeng and Feng, Ningya and He, Xu and Li, Dong and Hao, Jianye and Long, Mingsheng},
  booktitle=NeurIPS,
  year={2024},
}

@inproceedings{davtyan2023river,
  title={Efficient video prediction via sparsely conditioned flow matching},
  author={Davtyan, Aram and Sameni, Sepehr and Favaro, Paolo},
  booktitle=ICCV,
  year={2023},
}

@inproceedings{shrivastava2024cvp,
  title={Video prediction by modeling videos as continuous multi-dimensional processes},
  author={Shrivastava, Gaurav and Shrivastava, Abhinav},
  booktitle=CVPR,
  year={2024},
}

@article{shrivastava2024cvf,
  title={Efficient Continuous Video Flow Model for Video Prediction},
  author={Shrivastava, Gaurav and Shrivastava, Abhinav},
  journal={arXiv preprint arXiv:2412.05633},
  year={2024},
}

@inproceedings{chen2024pfi,
  title={Probabilistic Forecasting with Stochastic Interpolants and F{\"o}llmer Processes},
  author={Chen, Yifan and Goldstein, Mark and Hua, Mengjian and Albergo, Michael Samuel and Boffi, Nicholas Matthew and Vanden-Eijnden, Eric},
  booktitle=ICML,
  year={2024},
}

@article{hoppe2022diffusion,
  title={Diffusion models for video prediction and infilling},
  author={H{\"o}ppe, Tobias and Mehrjou, Arash and Bauer, Stefan and Nielsen, Didrik and Dittadi, Andrea},
  journal=TMLR,
  year={2022}
}

@inproceedings{ho2022video,
  title={Video diffusion models},
  author={Ho, Jonathan and Salimans, Tim and Gritsenko, Alexey and Chan, William and Norouzi, Mohammad and Fleet, David J},
  booktitle=NeurIPS,
  year={2022}
}

@inproceedings{franceschi2020stochastic,
  title={Stochastic latent residual video prediction},
  author={Franceschi, Jean-Yves and Delasalles, Edouard and Chen, Micka{\"e}l and Lamprier, Sylvain and Gallinari, Patrick},
  booktitle=ICML,
  year={2020},
}

@inproceedings{ebert2017self,
  title     = {Self-Supervised Visual Planning with Temporal Skip Connections},
  author    = {Ebert, Frederik and Finn, Chelsea and Lee, Alex X. and Levine, Sergey},
  booktitle = CORL,
  year      = {2017}
}

@inproceedings{schuldt2004recognizing,
  title     = {Recognizing Human Actions: A Local {SVM} Approach},
  author    = {Sch{\"u}ldt, Christian and Laptev, Ivan and Caputo, Barbara},
  booktitle = {Proceedings of the 17th International Conference on Pattern Recognition (ICPR)},
  year      = {2004},
}

@inproceedings{yi2020clevrer,
  title     = {{CLEVRER}: Collision Events for Video Representation and Reasoning},
  author    = {Yi, Kexin and Gan, Chuang and Li, Yunzhu and Kohli, Pushmeet and Wu, Jiajun and Torralba, Antonio and Tenenbaum, Joshua B.},
  booktitle = ICLR,
  year      = {2020}
}

@article{wang2004ssim,
  title   = {Image Quality Assessment: From Error Visibility to Structural Similarity},
  author  = {Wang, Zhou and Bovik, Alan C. and Sheikh, Hamid R. and Simoncelli, Eero P.},
  journal = {IEEE Transactions on Image Processing},
  year    = {2004},
}

@article{unterthiner2019fvd,
  title   = {{FVD}: A New Metric for Video Generation},
  author  = {Unterthiner, Thomas and van Steenkiste, Sjoerd and Kurach, Karol and Marinier, Rapha{\"e}l and Michalski, Marcin and Gelly, Sylvain},
  journal = {ICLR Workshop on Deep Generative Models for Highly Structured Data},
  year    = {2019}
}

@inproceedings{eyring2024reno,
  title={Reno: Enhancing one-step text-to-image models through reward-based noise optimization},
  author={Eyring, Luca and Karthik, Shyamgopal and Roth, Karsten and Dosovitskiy, Alexey and Akata, Zeynep},
  booktitle=NeurIPS,
  year={2024}
}

@inproceedings{samuel2024generating,
  title={Generating images of rare concepts using pre-trained diffusion models},
  author={Samuel, Dvir and Ben-Ari, Rami and Raviv, Simon and Darshan, Nir and Chechik, Gal},
  booktitle=AAAI,
  year={2024}
}

@inproceedings{mao2024lottery,
  title={The lottery ticket hypothesis in denoising: Towards semantic-driven initialization},
  author={Mao, Jiafeng and Wang, Xueting and Aizawa, Kiyoharu},
  booktitle=ECCV,
  year={2024},
}

@article{mao2023semantic,
  title={Semantic-driven initial image construction for guided image synthesis in diffusion model},
  author={Mao, Jiafeng and Wang, Xueting and Aizawa, Kiyoharu},
  journal={arXiv preprint arXiv:2312.08872},
  year={2023}
}

@article{chiang2026cot,
  title={COT-FM: Cluster-wise Optimal Transport Flow Matching},
  author={Chiang, Chiensheng and Tu, Kuan-Hsun and Liao, Jia-Wei and Chou, Cheng-Fu and Ke, Tsung-Wei},
  journal={arXiv preprint arXiv:2603.13395},
  year={2026}
}

@inproceedings{chang2025warped,
  title={How i warped your noise: a temporally-correlated noise prior for diffusion models},
  author={Chang, Pascal and Tang, Jingwei and Gross, Markus and Azevedo, Vinicius C},
  booktitle=ICLR,
  year={2024}
}

@inproceedings{burgert2025go,
  title={Go-with-the-flow: Motion-controllable video diffusion models using real-time warped noise},
  author={Burgert, Ryan and Xu, Yuancheng and Xian, Wenqi and Pilarski, Oliver and Clausen, Pascal and He, Mingming and Ma, Li and Deng, Yitong and Li, Lingxiao and Mousavi, Mohsen and others},
  booktitle=CVPR,
  year={2025}
}

@article{liu2025equivdm,
  title={Equivdm: Equivariant video diffusion models with temporally consistent noise},
  author={Liu, Chao and Vahdat, Arash},
  journal={arXiv preprint arXiv:2504.09789},
  year={2025}
}

@article{kim2026better,
  title={Better Source, Better Flow: Learning Condition-Dependent Source Distribution for Flow Matching},
  author={Kim, Junwan and Park, Jiho and Jeon, Seonghu and Kim, Seungryong},
  journal={arXiv preprint arXiv:2602.05951},
  year={2026}
}

@article{lee2025there,
  title={Is There a Better Source Distribution than Gaussian? Exploring Source Distributions for Image Flow Matching},
  author={Lee, Junho and Kim, Kwanseok and Lee, Joonseok},
  journal=TMLR,
  year={2025}
}

@article{babaeizadeh2021fitvid,
  title={Fitvid: Overfitting in pixel-level video prediction},
  author={Babaeizadeh, Mohammad and Saffar, Mohammad Taghi and Nair, Suraj and Levine, Sergey and Finn, Chelsea and Erhan, Dumitru},
  journal={arXiv preprint arXiv:2106.13195},
  year={2021}
}

@article{graves2012long,
  title={Long short-term memory},
  author={Graves, Alex},
  journal={Supervised sequence labelling with recurrent neural networks},
  year={2012},
}

@inproceedings{vaswani2017attention,
  title={Attention is all you need},
  author={Vaswani, Ashish and Shazeer, Noam and Parmar, Niki and Uszkoreit, Jakob and Jones, Llion and Gomez, Aidan N and Kaiser, {\L}ukasz and Polosukhin, Illia},
  booktitle=NIPS,
  year={2017}
}

@inproceedings{le2021ccvs,
  title={Ccvs: Context-aware controllable video synthesis},
  author={Le Moing, Guillaume and Ponce, Jean and Schmid, Cordelia},
  booktitle=NeurIPS,
  year={2021}
}

@article{ma2024latte,
  title={Latte: Latent diffusion transformer for video generation},
  author={Ma, Xin and Wang, Yaohui and Chen, Xinyuan and Jia, Gengyun and Liu, Ziwei and Li, Yuan-Fang and Chen, Cunjian and Qiao, Yu},
  journal=TMLR,
  year={2025}
}

@inproceedings{dhariwal2021diffusion,
  title={Diffusion models beat gans on image synthesis},
  author={Dhariwal, Prafulla and Nichol, Alexander},
  booktitle=NeurIPS,
  year={2021}
}

@inproceedings{song2020score,
  title={Score-based generative modeling through stochastic differential equations},
  author={Song, Yang and Sohl-Dickstein, Jascha and Kingma, Diederik P and Kumar, Abhishek and Ermon, Stefano and Poole, Ben},
  booktitle=ICLR,
  year={2021}
}

@inproceedings{janner2022planning,
  title={Planning with Diffusion for Flexible Behavior Synthesis},
  author={Janner, Michael and Du, Yilun and Tenenbaum, Joshua and Levine, Sergey},
  booktitle=ICML,
  year={2022},
}

@inproceedings{carvalho2023motion,
  title={Motion planning diffusion: Learning and planning of robot motions with diffusion models},
  author={Carvalho, Joao and Le, An T and Baierl, Mark and Koert, Dorothea and Peters, Jan},
  booktitle={2023 IEEE/RSJ International Conference on Intelligent Robots and Systems (IROS)},
  year={2023},
}

@article{carvalho2025motion,
  title={Motion planning diffusion: Learning and adapting robot motion planning with diffusion models},
  author={Carvalho, Joao and Le, An T and Kicki, Piotr and Koert, Dorothea and Peters, Jan},
  journal={IEEE Transactions on Robotics},
  year={2025},
}

@inproceedings{ge2023preserve,
  title={Preserve your own correlation: A noise prior for video diffusion models},
  author={Ge, Songwei and Nah, Seungjun and Liu, Guilin and Poon, Tyler and Tao, Andrew and Catanzaro, Bryan and Jacobs, David and Huang, Jia-Bin and Liu, Ming-Yu and Balaji, Yogesh},
  booktitle=ICCV,
  year={2023}
}

@inproceedings{luo2023videofusion,
  title={Videofusion: Decomposed diffusion models for high-quality video generation},
  author={Luo, Zhengxiong and Chen, Dayou and Zhang, Yingya and Huang, Yan and Wang, Liang and Shen, Yujun and Zhao, Deli and Zhou, Jingren and Tan, Tieniu},
  booktitle=CVPR,
  year={2023}
}

@inproceedings{zhang2024trip,
  title={Trip: Temporal residual learning with image noise prior for image-to-video diffusion models},
  author={Zhang, Zhongwei and Long, Fuchen and Pan, Yingwei and Qiu, Zhaofan and Yao, Ting and Cao, Yang and Mei, Tao},
  booktitle=CVPR,
  year={2024}
}

@inproceedings{meng2021sdedit,
  title={Sdedit: Guided image synthesis and editing with stochastic differential equations},
  author={Meng, Chenlin and He, Yutong and Song, Yang and Song, Jiaming and Wu, Jiajun and Zhu, Jun-Yan and Ermon, Stefano},
  booktitle=ICLR,
  year={2022}
}

@inproceedings{peebles2023scalable,
  title={Scalable diffusion models with transformers},
  author={Peebles, William and Xie, Saining},
  booktitle=ICCV,
  year={2023}
}

@inproceedings{sadat2023cads,
  title={CADS: Unleashing the diversity of diffusion models through condition-annealed sampling},
  author={Sadat, Seyedmorteza and Buhmann, Jakob and Bradley, Derek and Hilliges, Otmar and Weber, Romann M},
  booktitle=ICLR,
  year={2024}
}

@inproceedings{chen2025car,
  title={CAR-Flow: Condition-Aware Reparameterization Aligns Source and Target for Better Flow Matching},
  author={Chen, Chen and Guo, Pengsheng and Song, Liangchen and Lu, Jiasen and Qian, Rui and Wang, Xinze and Fu, Tsu-Jui and Liu, Wei and Yang, Yinfei and Schwing, Alex},
  booktitle=NeurIPS,
  year={2025}
}

@inproceedings{reda2022film,
  title={Film: Frame interpolation for large motion},
  author={Reda, Fitsum and Kontkanen, Janne and Tabellion, Eric and Sun, Deqing and Pantofaru, Caroline and Curless, Brian},
  booktitle=ECCV,
  year={2022},
}

@inproceedings{lai2018learning,
  title={Learning blind video temporal consistency},
  author={Lai, Wei-Sheng and Huang, Jia-Bin and Wang, Oliver and Shechtman, Eli and Yumer, Ersin and Yang, Ming-Hsuan},
  booktitle=ECCV,
  year={2018}
}

@article{yang2023diffusion,
  title={Diffusion probabilistic modeling for video generation},
  author={Yang, Ruihan and Srivastava, Prakhar and Mandt, Stephan},
  journal={Entropy},
  year={2023},
}

@article{xue2026acwmphys,
  title={ACWM-Phys: Investigating Generalized Physical Interaction in Action-Conditioned Video World Models},
  author={Xue, Haotian and Chen, Yipu and Ma, Liqian and Zhao, Zelin and Moukheiber, Lama and Zhu, Yuchen and Chen, Yongxin},
  journal={arXiv preprint arXiv:2605.08567},
  year={2026}
}
}

\restoreappendixlayout
\appendix
\newpage
\begin{center}
    \LARGE{\textbf{Appendix}}
\end{center}

\section{Algorithms of \method{}} \label{app:algorithms}
The training procedures of CNP-Flow are summarized in Algo.~\ref{alg:train_cnp} and Algo.~\ref{alg:train_cfm}. The inference procedures for video prediction are provided in Algo.~\ref{alg:vp}, while the inference procedures for video interpolation are provided in Algo.~\ref{alg:vi}.

%

\begingroup
\makeatletter
\algrenewcommand\ALG@beginalgorithmic{\footnotesize}
\makeatother
\setlength{\textfloatsep}{2pt}
\setlength{\floatsep}{2pt}
\setlength{\intextsep}{2pt}
\setlength{\abovecaptionskip}{2pt}
\setlength{\belowcaptionskip}{2pt}

\noindent
\begin{minipage}[t]{0.495\linewidth}
\begin{algorithm}[H]
\caption{Training the CNP}
\label{alg:train_cnp}
\begin{algorithmic}[1]
\Require Temporal dataset $\mathcal{D}_\mathcal{T}$, pretrained FM $\theta$, KL weight $\lambda$, learning rate $\gamma$
\Repeat
  \State Sample $(\bm{x}_1^{i}, \mathcal{C}_\mathcal{T}^i) \sim \mathcal{D}_\mathcal{T}$
  \State Flow reversal: $\Tilde{\bm{x}}_0^i = \widetilde{\Phi}_{\theta, 1 \to 0}(\bm{x}_1^i \mid \mathcal{C}_\mathcal{T}^i)$
  \State $\mathcal{L}_\text{NLL}(\phi) = -\mathbb{E}[\log p_\phi(\Tilde{\bm{x}}_0^i \mid \mathcal{C}_\mathcal{T}^i)]$
  \State $\mathcal{L}_\text{KL}(\phi) = \mathcal{D}_\text{KL}\left(p_\phi(\cdot \mid \mathcal{C}_\mathcal{T}^i) \| \mathcal{N}(\mathbf{0},\mathbf{I})\right)$
  \State $\mathcal{L}_{\mathrm{CNP}} = \mathcal{L}_{\mathrm{NLL}} + \lambda \mathcal{L}_{\mathrm{KL}}$
  \State $\phi \leftarrow \phi - \gamma \nabla_\phi \mathcal{L}_{\mathrm{CNP}}$
\Until{convergence}
\Ensure Optimized CNP $\phi$
\end{algorithmic}
\end{algorithm}
\end{minipage}\hfill
\begin{minipage}[t]{0.495\linewidth}
\begin{algorithm}[H]
\caption{Fine-Tuning FM}
\label{alg:train_cfm}
\begin{algorithmic}[1]
\Require Temporal dataset $\mathcal{D}_\mathcal{T}$, CNP $\phi$, FM $\theta$, learning rate $\gamma$
\Repeat
  \State Sample $(\bm{x}_1^{i}, \mathcal{C}_\mathcal{T}^i) \sim \mathcal{D}_\mathcal{T}$
  \State $\bm{\epsilon} \sim \mathcal{N}(\mathbf{0},\mathbf{I})$;\ \ $\bm{x}_0^i = \bm{\mu}_\phi(\mathcal{C}_\mathcal{T}^i) + \sigma_\phi(\mathcal{C}_\mathcal{T}^i)\,\bm{\epsilon}$
  \State $t \sim \mathcal{U}(0, 1)$; \ \ $\bm{x}_t^i = (1-t)\bm{x}_0^i + t\bm{x}_1^i$
  \State $\bm{v}_t^i = \bm{x}_1^i - \bm{x}_0^i$
  \State $\mathcal{L}_\text{FT-FM}(\theta) = \mathbb{E}\!\left[\| \bm{v}_\theta(\bm{x}_t^i, t \mid \mathcal{C}_\mathcal{T}^i) - \bm{v}_t^i \|_2^2 \right]$
  \State $\theta \leftarrow \theta - \gamma \nabla_\theta \mathcal{L}_{\text{FT-FM}}$
\Until{convergence}
\Ensure Optimized FM $\theta$
\end{algorithmic}
\end{algorithm}
\end{minipage}

\vspace{2pt}

\noindent
\begin{minipage}[t]{0.495\linewidth}
\begin{algorithm}[H]
\caption{Video Prediction with \method{}}
\label{alg:vp}
\begin{algorithmic}[1]
\Require Initial $\bm{x}^0,\bm{x}^1$, CNP $\phi$, FM $\theta$, timesteps $0=t_0<\cdots<t_K=1$
\State $\bm{x}_1^0 \leftarrow \bm{x}^0,\ \bm{x}_1^1 \leftarrow \bm{x}^1$
\For{$i=2$ \textbf{to} $N$}
  \State VP condition: $\mathcal{C}_\mathcal{T}^i=(\bm{x}_1^{i-1}, \bm{x}_1^{c}, i-c)$
  \State $\bm{\epsilon}\sim\mathcal{N}(\mathbf{0},\mathbf{I})$;\ \ $\bm{x}_0^i = \bm{\mu}_\phi(\mathcal{C}_\mathcal{T}^i) + \sigma_\phi(\mathcal{C}_\mathcal{T}^i)\,\bm{\epsilon}$
  \For{$k=0$ \textbf{to} $K-1$}
    \State $\Delta t_k = t_{k+1} - t_k$
    \State $\bm{x}_{t_{k+1}}^i = \bm{x}_{t_k}^i + \bm{v}_\theta(\bm{x}_{t_k}^i, t_k \mid \mathcal{C}_\mathcal{T}^i) \Delta t_k$
  \EndFor
  \State $\bm{x}^i \leftarrow \bm{x}_1^i$
\EndFor
\Ensure Temporal predictions $\{\bm{x}^i\}_{i=0}^{N}$
\end{algorithmic}
\end{algorithm}
\end{minipage}\hfill
\begin{minipage}[t]{0.495\linewidth}
\begin{algorithm}[H]
\caption{Video Interpolation with \method{}}
\label{alg:vi}
\begin{algorithmic}[1]
\Require Initial $\bm{x}^0,\bm{x}^N$, CNP $\phi$, FM $\theta$, timesteps $0=t_0<\cdots<t_K=1$
\State $\bm{x}_1^0 \leftarrow \bm{x}^0,\ \bm{x}_1^N \leftarrow \bm{x}^N$
\For{$i=1$ \textbf{to} $N-1$}
  \State VI condition: $\mathcal{C}_\mathcal{T}^i=(\bm{x}_1^{i-1}, \bm{x}_1^{N}, N-i)$ 
  \State $\bm{\epsilon}\sim\mathcal{N}(\mathbf{0},\mathbf{I})$;\ \ $\bm{x}_0^i = \bm{\mu}_\phi(\mathcal{C}_\mathcal{T}^i) + \sigma_\phi(\mathcal{C}_\mathcal{T}^i)\,\bm{\epsilon}$
  \For{$k=0$ \textbf{to} $K-1$}
    \State $\Delta t_k = t_{k+1} - t_k$
    \State $\bm{x}_{t_{k+1}}^i = \bm{x}_{t_k}^i + \bm{v}_\theta(\bm{x}_{t_k}^i, t_k \mid \mathcal{C}_\mathcal{T}^i) \Delta t_k$
  \EndFor
  \State $\bm{x}^i \leftarrow \bm{x}_1^i$
\EndFor
\Ensure Temporal predictions $\{\bm{x}^i\}_{i=0}^{N}$
\end{algorithmic}
\end{algorithm}
\end{minipage}
\endgroup

\section{Mathematical Details}
\label{app:mathematical_details}
We consider the general form of the CNP objective and derive the equations satisfied by the corresponding optimal CNP parameters.
\begin{proposition}[Population minimizer of CNP objective]
\label{prop:cnp_min}
Consider the CNP objective $\mathcal{L}_{\mathrm{CNP}}(\phi) =
\alpha \mathcal{L}_{\mathrm{NLL}}(\phi) + \beta \mathcal{L}_{\mathrm{KL}}(\phi)$,
with $\alpha > 0, \beta \geq 0$. Let the CNP family be the isotropic Gaussians $p_\phi(\bm{x}_0|\mathcal{C}_\mathcal{T}) = \mathcal{N}(\bm{x}_0; \bm{\mu}_\phi(\mathcal{C}_\mathcal{T}), \sigma_\phi^2(\mathcal{C}_\mathcal{T}) \mathbf{I})$.
For each condition $\mathcal{C}_\mathcal{T}$, assume the flow-reversal-induced source distribution has finite second moments, and denote its mean and variance by
\begin{equation*}
     \Tilde{\bm{\mu}}_0(\mathcal{C}_\mathcal{T}) = \mathbb E_{\bm{x}_1 \sim p_1(\cdot \mid \mathcal{C_T})}[\Tilde{\Phi}_{\theta, 1 \to 0}(\bm{x}_1 \mid \mathcal{C_T})], \quad
     \Tilde{\sigma}_0^2(\mathcal{C}_\mathcal{T}) = \frac{1}{d} \mathbb E_{\bm{x}_1 \sim p_1(\cdot \mid \mathcal{C_T})}
    \|\Tilde{\Phi}_{\theta, 1 \to 0}(\bm{x}_1\mid\mathcal{C_T}) - \Tilde{\bm{\mu}}_0\|_2^2.
\end{equation*}

Then for each $\mathcal{C}_\mathcal{T}$, the population minimizer of $\mathcal{L}_{\mathrm{CNP}}$ over the isotropic Gaussian family is
\begin{align}
p_\phi^*(\bm{x}_0 \mid \mathcal{C}_\mathcal{T})
= \mathcal{N}(\bm{x}_0; \bm{\mu}_\phi^*(\mathcal{C}_\mathcal{T}), {\sigma_\phi^*}^2(\mathcal{C}_\mathcal{T}) \mathbf{I}),
\end{align}
whose parameters $(\bm{\mu}_\phi^*, {\sigma_\phi^*}^2)$ satisfy
\begin{equation}
\begin{aligned}
    &\bm{\mu}_\phi^* =\frac{\alpha}{\alpha+\beta {{\sigma}_\phi^*}^2}\Tilde{\bm{\mu}}_0, \text{ and } \\
    &\beta ({\sigma_\phi^*}^2)^2 + (\alpha - \beta) {\sigma_\phi^*}^2 - \alpha \Tilde{\sigma}_0^2 -\frac{\alpha\beta^2({\sigma_\phi^*}^2)^2}{d(\alpha+\beta {\sigma_\phi^*}^2)^2}\|\Tilde{\bm{\mu}}_0\|_2^2 = 0.
\end{aligned}
\end{equation}

\end{proposition}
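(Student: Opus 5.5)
The plan is to reduce the population objective to a family of independent finite-dimensional problems, one for each condition $\mathcal{C}_\mathcal{T}$, and then write down their first-order conditions. I treat the CNP family as nonparametric: $\bm{\mu}_\phi(\cdot)$ and $\sigma_\phi^2(\cdot)$ range over all measurable functions. Then $\mathcal{L}_{\mathrm{CNP}}$ is an expectation over $\mathcal{C}_\mathcal{T}$ of a quantity that depends on $\phi$ only through the pair $(\bm{\mu}, s) := (\bm{\mu}_\phi(\mathcal{C}_\mathcal{T}), \sigma_\phi^2(\mathcal{C}_\mathcal{T}))$. So it suffices to minimize pointwise in $\mathcal{C}_\mathcal{T}$.

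\textbf{Step 1: closed form of the per-condition objective.} For fixed $\mathcal{C}_\mathcal{T}$, expand the Gaussian log-density. Use the bias--variance split
\begin{equation*}
\mathbb{E}\|\Tilde{\bm{x}}_0-\bm{\mu}\|_2^2 = d\,\Tilde{\sigma}_0^2 + \|\bm{\mu}-\Tilde{\bm{\mu}}_0\|_2^2,
\end{equation*}
which is valid by the finite-second-moment assumption. This gives, up to additive constants,
\begin{equation*}
J(\bm{\mu}, s) = \alpha\Bigl[\tfrac{d}{2}\log s + \tfrac{d\Tilde{\sigma}_0^2 + \|\bm{\mu}-\Tilde{\bm{\mu}}_0\|_2^2}{2s}\Bigr] + \beta\Bigl[\tfrac{1}{2}\bigl(ds + \|\bm{\mu}\|_2^2 - d - d\log s\bigr)\Bigr].
\end{equation*}
Here I used the standard closed form of $\mathcal{D}_\mathrm{KL}(\mathcal{N}(\bm{\mu}, s\mathbf{I})\,\|\,\mathcal{N}(\mathbf{0},\mathbf{I}))$.

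\textbf{Step 2: stationarity in $\bm{\mu}$.} For fixed $s>0$, $J$ is a strictly convex quadratic in $\bm{\mu}$. Setting the gradient $\alpha(\bm{\mu}-\Tilde{\bm{\mu}}_0)/s + \beta\bm{\mu}$ to zero gives $\bm{\mu}^* = \alpha\Tilde{\bm{\mu}}_0/(\alpha+\beta s)$, which is the first claimed identity. It also gives
\begin{equation*}
\bm{\mu}^*-\Tilde{\bm{\mu}}_0 = -\frac{\beta s}{\alpha+\beta s}\Tilde{\bm{\mu}}_0,
\qquad
\|\bm{\mu}^*-\Tilde{\bm{\mu}}_0\|_2^2 = \frac{\beta^2 s^2}{(\alpha+\beta s)^2}\|\Tilde{\bm{\mu}}_0\|_2^2 .
\end{equation*}

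\textbf{Step 3: stationarity in $s$.} Differentiate $J$ in $s$:
\begin{equation*}
\partial_s J = \alpha\Bigl[\tfrac{d}{2s} - \tfrac{d\Tilde{\sigma}_0^2+\|\bm{\mu}-\Tilde{\bm{\mu}}_0\|_2^2}{2s^2}\Bigr] + \tfrac{\beta d}{2}\Bigl(1-\tfrac{1}{s}\Bigr).
\end{equation*}
Multiply $\partial_s J = 0$ by $2s^2/d$ and substitute the expression for $\|\bm{\mu}^*-\Tilde{\bm{\mu}}_0\|_2^2$ from Step 2. This yields
\begin{equation*}
\beta s^2 + (\alpha-\beta)s - \alpha\Tilde{\sigma}_0^2 - \frac{\alpha\beta^2 s^2}{d(\alpha+\beta s)^2}\|\Tilde{\bm{\mu}}_0\|_2^2 = 0,
\end{equation*}
which is the second claimed identity with $s = {\sigma_\phi^*}^2$. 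Setting $\beta=0$ recovers Prop.~\ref{prop:cpn_min_nll}.

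\textbf{Main obstacle: showing the stationary point is the minimizer.} I would profile out $\bm{\mu}$ and study $g(s) := J(\bm{\mu}^*(s), s)$ on $(0,\infty)$.

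As $s\to\infty$, $g(s)\to\infty$. This follows from the $\tfrac{\alpha d}{2}\log s$ term when $\beta=0$, and from the linear term $\tfrac{\beta d}{2}s$ when $\beta>0$.

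As $s\to 0$, $g(s)\to\infty$ when $\Tilde{\sigma}_0^2>0$, because the term $\alpha d\Tilde{\sigma}_0^2/(2s)$ dominates the logarithms. This term is positive whenever the flow-reversal pushforward is non-degenerate, so I would state $\Tilde{\sigma}_0^2>0$ as an implicit assumption. In the degenerate case $\Tilde{\sigma}_0^2 = 0$ with $\beta=0$, the infimum is not attained, and the statement must be read as characterizing stationary points.

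Given both limits, the continuous function $g$ attains its infimum at an interior point. That point satisfies $g'(s)=0$, and by the envelope argument this equation coincides with the Step 3 equation. Finally, strict convexity in $\bm{\mu}$ makes $\bm{\mu}^*$ the unique minimizer for that $s$, which completes the argument.
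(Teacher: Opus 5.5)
Your Steps 1--3 are the paper's proof essentially line by line: the same bias--variance split and closed-form KL, then $\bm{\mu}$-stationarity giving $\bm{\mu}^*=\alpha\Tilde{\bm{\mu}}_0/(\alpha+\beta s)$, then substitution into $\partial_s J=0$. Your profiling argument for $g(s)=J(\bm{\mu}^*(s),s)$ is a genuine addition, since the paper only writes down the conditions ``at a stationary point'' and never shows that a minimizer exists and is interior.

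One refinement to your degenerate-case remark: if $\Tilde{\sigma}_0^2=0$ then $g(s)=\frac{(\alpha-\beta)d}{2}\log s+O(1)$ as $s\to0^+$. So the infimum fails to be attained for every $\beta<\alpha$, not only for $\beta=0$, and this includes the paper's small-$\lambda$ regime.
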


\begin{proof}
Given a condition $\mathcal{C}_\mathcal{T}$, let the flow-reversal-induced source distribution be $p_0(\cdot \mid \mathcal{C}_\mathcal{T}) = (\Tilde{\Phi}_{1 \to 0}(\cdot \mid \mathcal{C}_\mathcal{T}))_\#p_{\mathcal{D}_\mathcal{T}}$. The negative log-likelihood (NLL) of $p_\phi(\bm{x}_0 \mid \mathcal{C}_\mathcal{T})$ is
\begin{align*}
\mathcal{L}_{\mathrm{NLL}}(\phi) =
-\mathbb{E}_{\bm{x}_0 \sim p_0}
\log \mathcal{N}(\bm{x}_0; \bm{\mu}_\phi, \sigma_\phi^2 \mathbf{I})
= \frac{d}{2} \log (2 \pi \sigma_\phi^2)
+ \frac{1}{2\sigma_\phi^2}
\mathbb{E}_{\bm{x}_0 \sim p_0}
\|\bm{x}_0 - \bm{\mu}_\phi\|_2^2.
\end{align*}

By the second-moment decomposition $\mathbb E_{\bm{x}_0 \sim p_0} \|\bm{x}_0-\bm{\mu}_\phi\|_2^2
= d\Tilde{\sigma}_0^2 + \|\bm{\mu}_\phi - \Tilde{\bm{\mu}}_0\|_2^2$,
we obtain
\begin{align*}
\mathcal L_{\mathrm{NLL}}(\phi) =
\frac{d}{2} \log(2\pi \sigma_\phi^2) + \frac{1}{2\sigma_\phi^2} \left(d \Tilde{\sigma}_0^2 + \|\bm{\mu}_\phi -\Tilde{\bm{\mu}}_0\|_2^2 \right). 
\end{align*}

Moreover, the KL term against the standard Gaussian has the closed form
\begin{align*}
\mathcal L_{\mathrm{KL}}(\phi)
=\frac{1}{2}
\left(
\|\bm{\mu}_\phi\|_2^2 + d \sigma_\phi^2 -d -d \log \sigma_\phi^2
\right).
\end{align*}

Therefore, the CNP objective becomes
\begin{align*}
\mathcal L_{\mathrm{CNP}}(\phi)
= \frac{\alpha}{2\sigma_\phi^2}
\|\bm{\mu}_\phi-\Tilde{\bm{\mu}}_0\|_2^2
+ \frac{\beta}{2}\|\bm{\mu}_\phi\|_2^2
+ \frac{\alpha d}{2}\log\sigma_\phi^2
+ \frac{\alpha d\Tilde{\sigma}_0^2}{2\sigma_\phi^2}
+ \frac{\beta d}{2}\sigma_\phi^2
- \frac{\beta d}{2}\log\sigma_\phi^2
+ \frac{\alpha d}{2} \log 2\pi
- \frac{\beta d}{2}.
\end{align*}

At a stationary point $(\bm{\mu}_\phi^*, {\sigma_\phi^*}^2)$, the first-order optimality conditions satisfy
\begin{align*}
\nabla_\phi \mathcal L_{\mathrm{CNP}}
= \left[
\begin{array}{c}
    \nabla_{\bm{\mu}_\phi}\mathcal{L}_{\mathrm{CNP}}^\top(\bm{\mu}_\phi^*, {\sigma_\phi^*}^2) \quad
    \dfrac{\partial \mathcal{L}_{\mathrm{CNP}}}{\partial \sigma_{\phi}^2}(\bm{\mu}_\phi^*, {\sigma_\phi^*}^2)
\end{array}
\right]^\top =
\left[
\begin{array}{c}
    \mathbf{0}_d^\top \quad 0
\end{array}
\right]^\top.
\end{align*}

Differentiating $\mathcal{L}_{\mathrm{CNP}}$ with respect to $\bm{\mu}_\phi$ yields
\begin{align*}
\nabla_{\bm{\mu}_\phi}\mathcal{L}_{\mathrm{CNP}}(\bm{\mu}_\phi^*, {\sigma_\phi^*}^2)
= \frac{\alpha}{\sigma_\phi^2}(\bm{\mu}_\phi^* - \Tilde{\bm{\mu}}_0) + \beta \bm{\mu}_\phi^*.
\end{align*}

Setting this gradient to zero gives
\begin{align*}
\bm{\mu}_\phi^* = \frac{\alpha}{\alpha+\beta {\sigma_\phi^*}^2}\Tilde{\bm{\mu}}_0.
\end{align*}

Substituting the optimal mean $\bm{\mu}_\phi^*$ into $\frac{\partial \mathcal{L}_{\mathrm{CNP}}}{\partial \sigma_{\phi}^2}(\bm{\mu}_\phi^*, {\sigma_\phi^*}^2) = 0$, we obtain
\begin{align*}
\beta ({\sigma_\phi^*}^2)^2 + (\alpha - \beta) {\sigma_\phi^*}^2 - \alpha \Tilde{\sigma}_0^2 -\frac{\alpha\beta^2({\sigma_\phi^*}^2)^2}{d(\alpha+\beta {\sigma_\phi^*}^2)^2}\|\Tilde{\bm{\mu}}_0\|_2^2 = 0.
\end{align*}
The proof is complete.
\end{proof}

Next, we consider the limiting behavior of the optimal parameters when the two loss terms dominate under extreme coefficient settings.
\begin{corollary}
\label{cor:cnp_limits}
Under the setting of Prop.~\ref{prop:cnp_min}, let
$p_\phi^*(\bm{x}_0\mid\mathcal{C}_\mathcal{T})
= \mathcal{N}(\bm{x}_0; \bm{\mu}_\phi^*(\mathcal{C}_\mathcal{T}), {\sigma_\phi^*}^2(\mathcal{C}_\mathcal{T}) \mathbf{I})$
be the population minimizer of $\mathcal{L}_{\mathrm{CNP}}$ over the isotropic Gaussian family. Then
\begin{enumerate}
\item If $\beta / \alpha \to 0^+$, then the population minimizer reduces to the moment-matching optimum,
\begin{equation*}
    \bm{\mu}_\phi^* \to \Tilde{\bm{\mu}}_0,
    \text{  and  }
    {\sigma_\phi^*}^2 \to \Tilde{\sigma}_0^2.
\end{equation*}
\item If $\alpha / \beta \to 0^+$, then the population minimizer reduces to the standard Gaussian distribution,
\begin{equation*}
\bm{\mu}_\phi^* \to \mathbf{0}_d,
\text{  and  }
{\sigma_\phi^*}^2 \to 1.
\end{equation*}
\end{enumerate}
\end{corollary}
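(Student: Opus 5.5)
We treat the two limits separately, working from the two stationarity equations of Prop.~\ref{prop:cnp_min}. Write $s^* := {\sigma_\phi^*}^2>0$ and $r := \beta/\alpha$. Dividing the mean equation by $\alpha$ gives $\bm{\mu}_\phi^* = \Tilde{\bm{\mu}}_0/(1+r s^*)$. Dividing the variance equation by $\alpha$ gives
\begin{equation*}
r (s^*)^2 + (1-r) s^* - \Tilde{\sigma}_0^2 - \frac{r^2 (s^*)^2}{d(1+r s^*)^2}\|\Tilde{\bm{\mu}}_0\|_2^2 = 0 .
\end{equation*}
Both equations depend on $(\alpha,\beta)$ only through $r$. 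So the first case is $r\to 0^+$ and the second is $r\to\infty$, which is equivalent to $\alpha/\beta\to 0^+$.

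\textbf{Case $r\to 0^+$.} First we show $s^*$ stays bounded as $r\to0$; this is the main obstacle. The last term is at most $\|\Tilde{\bm{\mu}}_0\|_2^2/d$, because $r s^*/(1+rs^*)<1$, so it is bounded uniformly in $r$. Now suppose $s^*\to\infty$ along some subsequence.
\begin{itemize}
\item If $r s^*$ stays bounded, then $(1-r)s^*\to\infty$ dominates and the left-hand side is positive.
\item If $r s^*\to\infty$, then $r(s^*)^2$ dominates and the left-hand side is again positive.
\end{itemize}
Either way the equation fails, so $s^*$ is bounded. It follows that $r s^*\to0$ and $r^2(s^*)^2\to 0$, and the equation reduces to $s^*-\Tilde{\sigma}_0^2\to 0$. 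Hence $s^*\to\Tilde{\sigma}_0^2$ and $\bm{\mu}_\phi^*=\Tilde{\bm{\mu}}_0/(1+rs^*)\to\Tilde{\bm{\mu}}_0$.

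\textbf{Case $r\to\infty$.} Set $\epsilon = 1/r \to 0^+$ and divide the variance equation by $r$:
\begin{equation*}
(s^*)^2 + (\epsilon-1)s^* - \epsilon\Tilde{\sigma}_0^2 - \frac{(s^*)^2}{d(\epsilon+s^*)^2}\,\epsilon\,\|\Tilde{\bm{\mu}}_0\|_2^2 = 0 .
\end{equation*}
The fraction $(s^*)^2/(\epsilon+s^*)^2$ lies in $(0,1)$, so the last term is $O(\epsilon)$. The equation becomes $(s^*)^2 - s^* = O(\epsilon)$, and the roots approach $0$ or $1$.

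We rule out $s^*\to 0$ using minimality, not just stationarity. The objective contains $\frac{\alpha d\Tilde{\sigma}_0^2}{2s}$ and $-\frac{\beta d}{2}\log s$, and as $s\to0$ the $-\log s$ term makes the objective blow up. It does so relative to its value at $s=1$, $\bm{\mu}=\mathbf 0$, which is bounded by $O(\alpha)$ plus a $\beta$-part that vanishes. Concretely, the $\beta$-part $\frac{\beta d}{2}(s-\log s-1)$ is minimized at $s=1$. If $s^*$ stayed away from $1$, this term would contribute at least $c\beta$ for some $c>0$, while the $\alpha$-terms evaluated at $(\mathbf 0,1)$ are $O(\alpha)=o(\beta)$. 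That contradicts minimality. This argument forces $s^*\to1$, and the mean term similarly forces $\bm{\mu}_\phi^*\to\mathbf 0$. Equivalently, $\bm{\mu}_\phi^* = \Tilde{\bm{\mu}}_0/(1+r s^*)\to\mathbf0$ because $rs^*\to\infty$ once $s^*\to1$.

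A cleaner route for this case is to argue directly at the level of the objective: $\mathcal{L}_{\mathrm{CNP}}/\beta = \mathcal{L}_{\mathrm{KL}} + (\alpha/\beta)\mathcal{L}_{\mathrm{NLL}}$. The KL term has the unique minimizer $(\mathbf0,1)$, and a standard $\Gamma$-convergence or comparison argument then gives convergence of the minimizers. I would present it that way, checking that the NLL term is bounded near $(\mathbf0,1)$ and bounded below (since $\frac{d}{2}\log s + \frac{d\Tilde{\sigma}_0^2}{2s}$ is bounded below when $\Tilde{\sigma}_0^2>0$).
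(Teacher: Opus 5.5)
Your proof is correct. For $\beta/\alpha\to0^+$ you follow the paper's route: substitute the ratio into the two stationarity equations of Prop.~\ref{prop:cnp_min} and pass to the limit. You also supply a step the paper skips. The correction term is below $\|\Tilde{\bm{\mu}}_0\|_2^2/d$, so the equation forces $(1-r)s^*\le\Tilde{\sigma}_0^2+\|\Tilde{\bm{\mu}}_0\|_2^2/d$. Hence $s^*$ is bounded, and $r(s^*)^2$ and $r^2(s^*)^2$ really do vanish. Your two-case split is unnecessary, since $r(s^*)^2\ge0$ regardless.

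For $\alpha/\beta\to0^+$ the routes genuinely differ. The paper lets $\tau\to0^+$ in the rescaled equation, obtains $(s^*)^2-s^*=0$, and discards the root $0$ only ``since $s^*>0$''. That alone does not exclude $s^*\to0$. You instead compare the objective at the minimizer with its value at $(\mathbf{0},1)$. Viewing both losses as functions of $(\bm{\mu},s)$, minimality gives $\beta\,\mathcal{L}_{\mathrm{KL}}(\bm{\mu}_\phi^*,s^*)\le\alpha\bigl(\mathcal{L}_{\mathrm{NLL}}(\mathbf{0},1)-\inf\mathcal{L}_{\mathrm{NLL}}\bigr)$. Since $\mathcal{L}_{\mathrm{KL}}=\frac12\|\bm{\mu}\|_2^2+\frac d2(s-1-\log s)$ vanishes only at $(\mathbf{0},1)$, both limits follow at once. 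This needs neither the stationarity equation, nor root selection, nor uniqueness. The price is the extra hypothesis $\Tilde{\sigma}_0^2>0$, which is needed for $\inf\mathcal{L}_{\mathrm{NLL}}>-\infty$.

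Note, however, that the small root you guard against never exists. For $0<s<1-\epsilon$ one has $s^2+(\epsilon-1)s=s(s-1+\epsilon)<0$, while the remaining terms of your rescaled equation are nonpositive. So every positive root satisfies $s^*\ge1-\epsilon$. The equation then yields, for $\epsilon<1$, $s^*\le1-\epsilon+\epsilon\bigl(\Tilde{\sigma}_0^2+\|\Tilde{\bm{\mu}}_0\|_2^2/d\bigr)/(1-\epsilon)$, and also $\|\bm{\mu}_\phi^*\|_2\le\epsilon\|\Tilde{\bm{\mu}}_0\|_2$.

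This one-line sign check is what actually completes the paper's equation-level argument. It gives an explicit $O(\alpha/\beta)$ rate and does not need $\Tilde{\sigma}_0^2>0$. Your comparison argument is the more robust choice if you prefer not to rely on the minimizer being characterized by the stationarity equations.
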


\begin{proof}
For the first limiting regime, let $\eta = \beta / \alpha$. Then
\begin{equation*}
\bm{\mu}_\phi^* = \frac{1}{1 + \eta {\sigma_\phi^*}^2}\Tilde{\bm{\mu}}_0,
\end{equation*}
and the variance equation becomes
\begin{equation*}
\eta ({\sigma_\phi^*}^2)^2 + (1 - \eta) {\sigma_\phi^*}^2 - \Tilde{\sigma}_0^2 -\frac{\eta^2{(\sigma_\phi^*}^2)^2}{d(1  + \eta {\sigma_\phi^*}^2)^2}\|\Tilde{\bm{\mu}}_0\|_2^2 = 0.
\end{equation*}

Letting $\eta \to 0^+$, the nonlinear correction term vanishes, and the equation reduces to ${\sigma_\phi^*}^2 - \Tilde{\sigma}_0^2 = 0$. Hence ${\sigma_\phi^*}^2 \to \Tilde{\sigma}_0^2$ and consequently $\bm{\mu}_\phi^* \to \Tilde{\bm{\mu}}_0$ .

For the second limiting regime, let $\tau = \alpha / \beta$. Then
\begin{align*}
\bm{\mu}_\phi^* = \frac{\tau}{\tau + {\sigma_\phi^*}^2}\Tilde{\bm{\mu}}_0,
\end{align*}
and the variance equation becomes
\begin{align*}
({\sigma_\phi^*}^2)^2 + (\tau-1) {\sigma_\phi^*}^2 - \tau \Tilde{\sigma}_0^2 -\frac{\tau {(\sigma_\phi^*}^2)^2}{d(\tau + {\sigma_\phi^*}^2)^2}\|\Tilde{\bm{\mu}}_0\|_2^2 = 0.
\end{align*}

Letting $\tau \to 0^+$, the equation reduces to
$({\sigma_\phi^*}^2)^2 - {\sigma_\phi^*}^2 = 0$. Since ${\sigma_\phi^*}^2 > 0$, thus ${\sigma_\phi^*}^2 \to 1$ and $\bm{\mu}_\phi^* \to \mathbf{0}_d$. 
The proof is complete.
\end{proof}

Prop.~\ref{prop:cnp_min} shows that, after introducing the KL regularization term, the optimal mean and standard deviation become coupled and jointly influence each other. In particular, increasing the KL regularization strength pulls the mean toward zero while simultaneously constraining the variance toward the standard Gaussian prior. Cor.~\ref{cor:cnp_limits} further analyzes the limiting regimes of this objective. When the KL coefficient dominates the NLL term, the learned distribution converges toward $\mathcal{N}(\mathbf{0}, \mathbf{I})$; conversely, when the KL regularization becomes negligible, the solution reduces to the setting characterized in Prop.~\ref{prop:cpn_min_nll} in the main paper.

\begin{lemma}[Lipschitz continuity of the conditional flow reversal map]
Suppose that $\bm{v}_\theta(\bm{x}_t, t \mid \mathcal{C}_\mathcal{T})$ is Lipschitz continuous in ($\bm{x}_t$, $\mathcal{C}_\mathcal{T}$) on $\mathcal{X}_t \times \mathcal{X}_{\mathcal{C}_\mathcal{T}}$, uniformly over $t \in [0, 1]$. Then $\Tilde{\Phi}_{\theta, 1 \to 0}(\bm{x}_1 \mid \mathcal{C}_\mathcal{T})$ is Lipschitz continuous on $\mathcal{X}_1 \times \mathcal{X}_{\mathcal{C}_\mathcal{T}}$.
\end{lemma}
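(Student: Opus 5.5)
We prove the lemma with a Grönwall argument on the backward ODE. Write $L_x$ and $L_c$ for the Lipschitz constants of $\bm{v}_\theta(\cdot,t\mid\cdot)$ in $\bm{x}_t$ and in $\mathcal{C}_\mathcal{T}$, which are uniform in $t\in[0,1]$. The first step is to interpret $\Tilde{\Phi}_{\theta,1\to 0}(\bm{x}_1\mid\mathcal{C}_\mathcal{T})$ as the time-$0$ value $\bm{y}(0)$ of the solution of
\begin{equation*}
\frac{\mathrm{d}\bm{y}(\tau)}{\mathrm{d}\tau}=\bm{v}_\theta(\bm{y}(\tau),\tau\mid\mathcal{C}_\mathcal{T}),\qquad \bm{y}(1)=\bm{x}_1,
\end{equation*}
integrated backward from $\tau=1$. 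This is the same as saying that $\bm{y}(\tau)=\bm{x}_1-\int_\tau^1 \bm{v}_\theta(\bm{y}(s),s\mid\mathcal{C}_\mathcal{T})\,\mathrm{d}s$, which is the precise meaning of the flow-reversal definition. Because $\bm{v}_\theta$ is Lipschitz in the state uniformly in time, Picard--Lindel\"of gives existence and uniqueness of this solution on $[0,1]$. We also assume that trajectories stay inside $\mathcal{X}_\tau$, where the Lipschitz bounds hold; otherwise we take the Lipschitz property to be global.

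Next, take two inputs $(\bm{x}_1,\mathcal{C}_\mathcal{T})$ and $(\bm{x}_1',\mathcal{C}_\mathcal{T}')$ with backward trajectories $\bm{y}$ and $\bm{y}'$, and set $e(\tau)=\|\bm{y}(\tau)-\bm{y}'(\tau)\|_2$. Subtracting the two integral equations and adding and subtracting $\bm{v}_\theta(\bm{y}'(s),s\mid\mathcal{C}_\mathcal{T})$ gives
\begin{equation*}
e(\tau)\le \|\bm{x}_1-\bm{x}_1'\|_2+\int_\tau^1 \bigl(L_x e(s)+L_c\|\mathcal{C}_\mathcal{T}-\mathcal{C}_\mathcal{T}'\|_2\bigr)\,\mathrm{d}s .
\end{equation*}

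The main step is to apply Grönwall's inequality in reversed time, substituting $s\mapsto 1-s$, to this bound. Grönwall requires $e$ to be continuous, and continuity is immediate from the ODE. It yields
\begin{equation*}
e(0)\le e^{L_x}\|\bm{x}_1-\bm{x}_1'\|_2+\frac{L_c}{L_x}\bigl(e^{L_x}-1\bigr)\|\mathcal{C}_\mathcal{T}-\mathcal{C}_\mathcal{T}'\|_2 ,
\end{equation*}
where the second coefficient is read as $L_c$ when $L_x=0$.

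Finally, we use $a+b\le\sqrt{2}\,\|(a,b)\|_2$ to obtain Lipschitz continuity jointly in $(\bm{x}_1,\mathcal{C}_\mathcal{T})$, with constant $\sqrt{2}\max\{e^{L_x},\,L_c(e^{L_x}-1)/L_x\}$.

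The only real obstacle is the domain issue: the Lipschitz hypothesis holds only on $\mathcal{X}_t\times\mathcal{X}_{\mathcal{C}_\mathcal{T}}$, so we must know that the backward trajectories remain in the sets $\mathcal{X}_\tau$. We state this as an invariance assumption. Alternatively, we extend $\bm{v}_\theta$ to a globally Lipschitz field using a McShane-type extension, which leaves the value of the map on the relevant trajectories unchanged.
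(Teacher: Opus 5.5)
Your proposal is correct and takes the same route as the paper. Both apply Gr\"onwall's inequality to the backward ODE and arrive at the identical constants $\mathrm{e}^{L_{\bm{x}}}$ and $\frac{L_{\mathcal{C}}}{L_{\bm{x}}}(\mathrm{e}^{L_{\bm{x}}}-1)$. Your additions are careful refinements of points the paper leaves implicit: the integral form of Gr\"onwall, which avoids differentiating the norm; the explicit invariance assumption; the $L_{\bm{x}}=0$ case; and the $\sqrt{2}$ conversion to the joint norm. One small caveat: the McShane-extension alternative does not remove the invariance assumption. The extended field reproduces the original map only along trajectories that already stay in the sets $\mathcal{X}_\tau$.
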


\begin{proof}
For any $\bm{x}_1, \bm{x}_1' \in \mathcal{X}_1$ and $\mathcal{C}_\mathcal{T}, \mathcal{C}_\mathcal{T}' \in \mathcal{X}_{\mathcal{C}_\mathcal{T}}$, since $\bm{v}_\theta(\bm{x}_t, t \mid \mathcal{C}_\mathcal{T})$ is Lipschitz continuous on $\mathcal{X}_t \times \mathcal{X}_{\mathcal{C}_\mathcal{T}}$, there exist $L_{\bm{x}}, L_{\mathcal{C}} >0$, independent of $t$, such that for all $t \in [0, 1]$,
\begin{align*}
\|\bm{v}_\theta(\bm{x}_t, t \mid \mathcal{C}_\mathcal{T}) -\bm{v}_\theta(\bm{x}_t', t \mid \mathcal{C}_\mathcal{T}')\|_2
\leq L_{\bm{x}} \|\bm{x}_t - \bm{x}_t'\|_2
+ L_{\mathcal{C}} \|\mathcal{C}_\mathcal{T} - \mathcal{C}_\mathcal{T}'\|_2.
\end{align*}
Define the reverse-time trajectories $\bm{z}_\tau := \bm{x}_{1-\tau}$ and $\bm{z}_\tau' := \bm{x}_{1-\tau}'$ for $\tau \in [0, 1]$.
By definition,
\begin{align*}
\bm{z}_1 = \bm{x}_0 = \Tilde \Phi_{\theta, 1 \to 0}(\bm{x}_1 \mid \mathcal{C}_\mathcal{T})
\quad \text{and} \quad
\bm{z}_1' = \bm{x}_0' = \Tilde \Phi_{\theta, 1 \to 0}(\bm{x}_1' \mid \mathcal{C}_\mathcal{T}'),
\end{align*}
and the reverse ODEs are given by
\begin{align*}
\frac{\mathrm{d}\bm{z}_\tau}{d\tau}
= -\bm{v}_\theta(\bm{z}_\tau, 1-\tau \mid \mathcal{C}_\mathcal{T})
\quad \text{and} \quad
\frac{\mathrm{d}\bm{z}_\tau'}{d\tau} = -\bm{v}_\theta(\bm{z}_\tau', 1-\tau \mid \mathcal{C}_\mathcal{T}').
\end{align*}
It follows that
\begin{align*}
\frac{\mathrm{d}}{\mathrm{d}\tau}\|\bm{z}_\tau - \bm{z}_\tau'\|_2
&\leq
\left\|
\frac{\mathrm{d}\bm{z}_\tau}{\mathrm{d}\tau} - \frac{\mathrm{d}\bm{z}_\tau'}{\mathrm{d}\tau}
\right\|_2 \\
&=
\left\|
-\bm{v}_\theta(\bm{z}_\tau, 1-\tau \mid \mathcal{C}_\mathcal{T}) -
[-\bm{v}_\theta(\bm{z}_\tau', 1-\tau \mid \mathcal{C}_\mathcal{T}')]
\right\|_2 \\
&\leq
L_{\bm{x}} \|\bm{z}_\tau - \bm{z}_\tau'\|_2
+ L_{\mathcal{C}} \|\mathcal{C}_\mathcal{T} - \mathcal{C}_\mathcal{T}'\|_2.
\end{align*}
By Grönwall's inequality, we have
\begin{align*}
\|\bm{z}_1 - \bm{z}_1'\|_2
&\leq \mathrm{e}^{L_{\bm{x}}} \|\bm{z}_0 - \bm{z}_0'\|_2
+ \int_0^1 \mathrm{e}^{L_{\bm{x}} (1-\tau)} L_{\mathcal{C}} \| \mathcal{C}_\mathcal{T} -\mathcal{C}_\mathcal{T}'\|_2 \mathrm{d}\tau \\
&= \mathrm{e}^{L_{\bm{x}}} \|\bm{z}_0 - \bm{z}_0'\|_2
+ L_{\mathcal{C}} \| \mathcal{C}_\mathcal{T} -\mathcal{C}_\mathcal{T}'\|_2 \int_0^1 \mathrm{e}^{L_{\bm{x}} (1-\tau)} \mathrm{d}\tau \\
&= \mathrm{e}^{L_{\bm{x}}} \|\bm{z}_0 - \bm{z}_0'\|_2
+ L_{\mathcal{C}} \| \mathcal{C}_\mathcal{T} -\mathcal{C}_\mathcal{T}'\|_2 \left( \frac{e^{L_{\bm{x}}}-1}{L_{\bm{x}}} \right) \\
&= \mathrm{e}^{L_{\bm{x}}} \|\bm{z}_0 - \bm{z}_0'\|_2
+ \frac{L_{\mathcal{C}}}{L_{\bm{x}}} (e^{L_{\bm{x}}}-1) \| \mathcal{C}_\mathcal{T} -\mathcal{C}_\mathcal{T}'\|_2.
\end{align*}

Since $\bm{z}_0 = \bm{x}_1$, $\bm{z}_0' = \bm{x}_1'$, $\bm{z}_1 = \tilde{\Phi}_{\theta, 1\to 0}(\bm{x}_1 \mid \mathcal{C}_\mathcal{T})$
and $\bm{z}_1' = \tilde{\Phi}_{\theta, 1\to 0}(\bm{x}_1' \mid \mathcal{C}_\mathcal{T}')$, we obtain
\begin{align*}
\left\|
\Tilde \Phi_{\theta, 1 \to 0}(\bm{x}_1 \mid \mathcal{C}_\mathcal{T}) -
\Tilde \Phi_{\theta, 1 \to 0}(\bm{x}_1' \mid \mathcal{C}_\mathcal{T}')
\right\|_2
&\leq
L_\Phi \|\bm{x}_1 - \bm{x}_1'\|_2
+ L_\Phi \|\mathcal{C}_\mathcal{T}-\mathcal{C}_\mathcal{T}'\|_2,
\end{align*}
where $L_\Phi = \max\{\mathrm{e}^{L_{\bm{x}}}, \frac{L_{\mathcal{C}}}{L_{\bm{x}}}(\mathrm{e}^{L_{\bm{x}}} - 1)\}$.
\end{proof}

\begin{proposition}[Lipschitz continuity of the NLL objective population minimizer]
Suppose that $\Tilde{\Phi}_{\theta, 1 \to 0}(\bm{x}_1 \mid \mathcal{C}_\mathcal{T})$ is Lipschitz continuous in $(\bm{x}_1, \mathcal{C}_\mathcal{T})$ on $\mathcal{X}_1 \times \mathcal{X}_{\mathcal{C}_\mathcal{T}}$, $\mathcal C_{\mathcal T}\mapsto p_1(\cdot\mid \mathcal C_{\mathcal T})$ is Lipschitz continuous on $\mathcal{X}_{\mathcal{C}_\mathcal{T}}$ with respect to the $W_2$-distance, and there exists $M>0$ such that $\mathbb{E}_{\bm{x}_1\sim p_1(\cdot\mid \mathcal C_{\mathcal T})} \|\Tilde{\Phi}_{\theta,1\to0}(\bm{x}_1\mid \mathcal C_{\mathcal T}) \|_2^2 \le M$ for all $\mathcal C_{\mathcal T}\in \mathcal X_{\mathcal C_{\mathcal T}}$. Then $\Tilde{\bm{\mu}}_0(\mathcal{C}_{\mathcal{T}})$ and ${\Tilde{\sigma}_0}^2(\mathcal{C}_{\mathcal{T}})$ are Lipschitz continuous on $\mathcal{X}_{\mathcal{C}_\mathcal{T}}$.
\end{proposition}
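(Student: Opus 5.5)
The approach is a coupling argument in $W_2$ combined with the Lipschitz property of $\Tilde{\Phi}_{\theta,1\to0}$ (the lemma above supplies this when $\bm{v}_\theta$ is Lipschitz). Write $f_{\mathcal{C}} := \Tilde{\Phi}_{\theta,1\to0}(\cdot\mid\mathcal{C})$ and let $L_\Phi$ be its Lipschitz constant. Let $L_p$ be the $W_2$-Lipschitz constant of $\mathcal{C}\mapsto p_1(\cdot\mid\mathcal{C})$. Fix two conditions $\mathcal{C},\mathcal{C}'$ and let $\pi$ be an optimal $W_2$ coupling of $p_1(\cdot\mid\mathcal{C})$ and $p_1(\cdot\mid\mathcal{C}')$; one exists by standard optimal transport theory, given finite second moments. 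Draw $(\bm{x}_1,\bm{x}_1')\sim\pi$ and set $\bm{y}=f_{\mathcal{C}}(\bm{x}_1)$ and $\bm{y}'=f_{\mathcal{C}'}(\bm{x}_1')$. These are random variables with laws $\Tilde{p}_0(\cdot\mid\mathcal{C})$ and $\Tilde{p}_0(\cdot\mid\mathcal{C}')$, and their means are $\Tilde{\bm{\mu}}_0(\mathcal{C})$ and $\Tilde{\bm{\mu}}_0(\mathcal{C}')$.

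The key intermediate bound is
\begin{equation*}
\bigl(\mathbb{E}_\pi\|\bm{y}-\bm{y}'\|_2^2\bigr)^{1/2}
\le L_\Phi\bigl(\mathbb{E}_\pi\|\bm{x}_1-\bm{x}_1'\|_2^2\bigr)^{1/2} + L_\Phi\|\mathcal{C}-\mathcal{C}'\|_2
\le L_\Phi(L_p+1)\|\mathcal{C}-\mathcal{C}'\|_2 =: K\|\mathcal{C}-\mathcal{C}'\|_2 .
\end{equation*}
It follows from the pointwise Lipschitz estimate and Minkowski's inequality in $L^2(\pi)$. The mean then follows by Jensen's inequality:
\begin{equation*}
\|\Tilde{\bm{\mu}}_0(\mathcal{C})-\Tilde{\bm{\mu}}_0(\mathcal{C}')\|_2=\|\mathbb{E}_\pi[\bm{y}-\bm{y}']\|_2\le \bigl(\mathbb{E}_\pi\|\bm{y}-\bm{y}'\|_2^2\bigr)^{1/2}\le K\|\mathcal{C}-\mathcal{C}'\|_2 .
\end{equation*}

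For the variance, I will write $d\,\Tilde{\sigma}_0^2(\mathcal{C})=\mathbb{E}\|\bm{y}-\Tilde{\bm{\mu}}_0(\mathcal{C})\|_2^2 = \|\bm{u}\|_{L^2(\pi)}^2$, where $\bm{u}=\bm{y}-\Tilde{\bm{\mu}}_0(\mathcal{C})$, and similarly $\bm{u}'=\bm{y}'-\Tilde{\bm{\mu}}_0(\mathcal{C}')$. Using $|a^2-b^2|=|a-b|(a+b)$ together with the reverse triangle inequality in $L^2(\pi)$ gives
\begin{equation*}
d\,|\Tilde{\sigma}_0^2(\mathcal{C})-\Tilde{\sigma}_0^2(\mathcal{C}')| \le \|\bm{u}-\bm{u}'\|_{L^2(\pi)}\bigl(\|\bm{u}\|_{L^2(\pi)}+\|\bm{u}'\|_{L^2(\pi)}\bigr).
\end{equation*}
To bound the first factor, I apply Minkowski's inequality and then the mean bound:
\begin{equation*}
\|\bm{u}-\bm{u}'\|_{L^2(\pi)}\le\|\bm{y}-\bm{y}'\|_{L^2(\pi)}+\|\Tilde{\bm{\mu}}_0(\mathcal{C})-\Tilde{\bm{\mu}}_0(\mathcal{C}')\|_2\le 2K\|\mathcal{C}-\mathcal{C}'\|_2 .
\end{equation*}
The second factor is where the uniform moment bound $M$ enters. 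Since centering can only reduce the second moment, $\|\bm{u}\|_{L^2(\pi)}^2\le\mathbb{E}\|\bm{y}\|_2^2\le M$, and the same holds for $\bm{u}'$. This yields the variance estimate with constant $4K\sqrt{M}/d$.

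I expect the main obstacle to be bookkeeping rather than anything deep. One point is to make sure the coupling argument is legitimate, namely that the optimal coupling exists and that the pushforward laws are the flow-reversal-induced source distributions. The other is to remember that the variance is a quadratic functional, so Lipschitz continuity needs the uniform bound $M$; without it, only local Lipschitz continuity would follow. I would also briefly note that measurability of $f_{\mathcal{C}}$ is immediate from its continuity.
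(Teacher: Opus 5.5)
Your proof is correct and follows essentially the same coupling strategy as the paper: push a $W_2$ coupling of $p_1(\cdot\mid\mathcal{C})$ and $p_1(\cdot\mid\mathcal{C}')$ through the Lipschitz map $\Tilde{\Phi}_{\theta,1\to0}$, bound the mean difference by Jensen, and bound the variance difference by a difference-of-squares estimate controlled by the uniform moment bound $M$. The differences are only in bookkeeping. You use an exact optimal coupling, whereas the paper uses a near-optimal one with slack $\|\mathcal{C}-\mathcal{C}'\|_2^2$, which costs it $L_\Phi(L_p+2)$ instead of your $L_\Phi(L_p+1)$. You also work with the centered variables directly in $L^2(\pi)$, rather than splitting $d\,\Tilde{\sigma}_0^2=\mathbb{E}\|\Tilde{\Phi}_{\theta,1\to0}\|_2^2-\|\Tilde{\bm{\mu}}_0\|_2^2$ and bounding the two terms separately, which gives a slightly cleaner constant.
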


\begin{proof} 
\paratitle{Step 1: Lipschitz continuity of the optimal mean.}
For any $\mathcal{C}_\mathcal{T}', \mathcal{C}_\mathcal{T} \in \mathcal{X}_{\mathcal{C}_\mathcal{T}}$, since $\Tilde{\Phi}_{\theta, 1 \to 0}(\bm{x}_1 \mid \mathcal{C}_\mathcal{T})$ is Lipschitz continuous on $\mathcal{X}_1 \times \mathcal{X}_{\mathcal{C}_\mathcal{T}}$, there exists $L_\Phi >0$ such that
\begin{align*}
    \left\|\Tilde{\Phi}_{\theta, 1 \to 0}(\bm{x}_1' \mid \mathcal{C}_{\mathcal{T}}') - \Tilde{\Phi}_{\theta, 1 \to 0}(\bm{x}_1 \mid \mathcal{C}_\mathcal{T}) \right\|_2
    \leq L_\Phi \left(\| \bm{x}_1' - \bm{x}_1\|_2 + \| \mathcal{C}_\mathcal{T}' - \mathcal{C}_\mathcal{T}\|_2 \right) \text{ for all $\bm{x}_1', \bm{x}_1 \in \mathcal{X}_1$}.
\end{align*}
Moreover, since $\mathcal C_{\mathcal T}\mapsto p_1(\cdot\mid \mathcal C_{\mathcal T})$ is Lipschitz continuous on $\mathcal{X}_{\mathcal{C}_\mathcal{T}}$ with respect to the $W_2$-distance, there exists $L_{p_1} >0$ such that
\begin{equation*}
    W_2(p_1(\cdot \mid \mathcal{C}_{\mathcal{T}}'), p_1(\cdot \mid \mathcal{C}_{\mathcal{T}})) \leq L_{p_1}
    \| \mathcal{C}_\mathcal{T}' - \mathcal{C}_\mathcal{T}\|_2.
\end{equation*}
Furthermore, there exists $\gamma_1 \in \Pi(p_1(\cdot \mid \mathcal{C}_\mathcal{T}'), p_1(\cdot \mid \mathcal{C}_{\mathcal{T}}))$ such that
\begin{align*}
    \mathbb{E}_{(\bm{x}_1', \bm{x}_1) \sim \gamma_1} \|\bm{x}_1' - \bm{x}_1\|_2^2 \leq \inf_{\gamma^* \in \Pi(p_1(\cdot \mid \mathcal{C}_\mathcal{T}'), p_1(\cdot \mid \mathcal{C}_{\mathcal{T}}))}\mathbb{E}_{(\bm{x}_1', \bm{x}_1) \sim \gamma^*} \|\bm{x}_1' - \bm{x}_1\|_2^2 + \| \mathcal{C}_\mathcal{T}' - \mathcal{C}_\mathcal{T}\|_2^2.
\end{align*}
Then we obtain
\begin{align*}
    \mathbb{E}_{(\bm{x}_1', \bm{x}_1) \sim \gamma_1} \|\bm{x}_1' - \bm{x}_1\|_2
    &\leq \left( \mathbb{E}_{(\bm{x}_1', \bm{x}_1) \sim \gamma_1} \|\bm{x}_1' - \bm{x}_1\|_2^2 \right)^\frac{1}{2} \text{ (by Jensen's inequality)} \\
    &\leq W_2(p_1(\cdot \mid \mathcal{C}_\mathcal{T}'), p_1(\cdot \mid \mathcal{C}_{\mathcal{T}})) + \| \mathcal{C}_\mathcal{T}' - \mathcal{C}_\mathcal{T}\|_2
    \leq (L_{p_1} + 1) \| \mathcal{C}_\mathcal{T}' - \mathcal{C}_\mathcal{T}\|_2.
\end{align*}
Therefore, we have 
\begin{align*}
\left\| \Tilde{\bm{\mu}}_0(\mathcal{C}_\mathcal{T}') - \Tilde{\bm{\mu}}_0(\mathcal{C}_{\mathcal{T}}) \right\|_2
&= \left\|
\mathbb E_{\bm{x}_1' \sim p_1(\cdot \mid \mathcal{C}_\mathcal{T}')}[\Tilde{\Phi}_{\theta, 1 \to 0}(\bm{x}_1' \mid \mathcal{C}_\mathcal{T}')]
- \mathbb E_{\bm{x}_1 \sim p_1(\cdot \mid \mathcal{C}_\mathcal{T})}[\Tilde{\Phi}_{\theta, 1 \to 0}(\bm{x}_1 \mid \mathcal{C}_\mathcal{T})] \right\|_2 \\
&= \left\|
\mathbb E_{(\bm{x}_1', \bm{x}_1) \sim \gamma_1}[\Tilde{\Phi}_{\theta, 1 \to 0}(\bm{x}_1' \mid \mathcal{C}_\mathcal{T}')]
- \mathbb E_{(\bm{x}_1', \bm{x}_1) \sim \gamma_1}[\Tilde{\Phi}_{\theta, 1 \to 0}(\bm{x}_1 \mid \mathcal{C}_\mathcal{T})] \right\|_2 \\
&\leq \mathbb{E}_{(\bm{x}_1', \bm{x}_1) \sim \gamma_1} \left\|\Tilde{\Phi}_{\theta, 1 \to 0}(\bm{x}_1' \mid \mathcal{C}_\mathcal{T}') - \Tilde{\Phi}_{\theta, 1 \to 0}(\bm{x}_1 \mid \mathcal{C}_\mathcal{T}) \right\|_2 \text{ (by the triangle inequality)} \\
& \leq L_\Phi \left[ \mathbb{E}_{(\bm{x}_1', \bm{x}_1) \sim \gamma_1} \|\bm{x}_1' - \bm{x}_1\|_2
+ \|\mathcal{C}_\mathcal{T}' - \mathcal{C}_\mathcal{T}\|_2 \right] \\
& \leq L_{\Tilde{\bm{\mu}}_0} \|\mathcal{C}_\mathcal{T}' - \mathcal{C}_\mathcal{T}\|_2.
\end{align*}
where $L_{\Tilde{\bm{\mu}}_0} := L_\Phi(L_{p_1}+2)$.

\textbf{Step 2: Lipschitz continuity of the optimal variance}. Recall that
\begin{align*}
{\Tilde{\sigma}_0}^2(\mathcal C_{\mathcal T})
= \frac{1}{d}
\mathbb E_{\bm x_1\sim p_1(\cdot\mid \mathcal C_{\mathcal T})}
\left\|
\Tilde{\Phi}_{\theta,1\to0}(\bm x_1\mid \mathcal C_{\mathcal T})
- \Tilde{\bm{\mu}}_0(\mathcal C_{\mathcal T})
\right\|_2^2 .
\end{align*}
Equivalently,
\begin{align*}
{\Tilde{\sigma}_0}^2(\mathcal C_{\mathcal T})
= \frac{1}{d}
\left[
\mathbb E_{\bm x_1\sim p_1(\cdot\mid \mathcal C_{\mathcal T})}
\left\|
\Tilde{\Phi}_{\theta,1\to0}(\bm x_1\mid \mathcal C_{\mathcal T})
\right\|_2^2
-
\left\|
\Tilde{\bm{\mu}}_0(\mathcal C_{\mathcal T})
\right\|_2^2
\right].
\end{align*}
For any $\mathcal{C}_\mathcal{T}', \mathcal{C}_\mathcal{T} \in \mathcal{X}_{\mathcal{C}_\mathcal{T}}$, choose a coupling $\gamma_2 \in \Pi(p_1(\cdot \mid \mathcal{C}_\mathcal{T}'), p_1(\cdot \mid \mathcal{C}_{\mathcal{T}}))$ such that
\begin{align*}
    \mathbb{E}_{(\bm{x}_1', \bm{x}_1) \sim \gamma_2} \|\bm{x}_1' - \bm{x}_1\|_2^2
    \leq W_2^2\left(p_1(\cdot\mid \mathcal{C}_\mathcal{T}'), p_1(\cdot\mid \mathcal C_{\mathcal T}) \right) + \| \mathcal{C}_{\mathcal T}' - \mathcal{C}_{\mathcal T} \|_2^2
    \leq (L_{p_1}^2+1) \| \mathcal{C}_{\mathcal T}' - \mathcal{C}_{\mathcal T} \|_2^2.
\end{align*}

Then we obtain
\begin{align*}
&\left|
\mathbb E_{\bm x_1'\sim p_1(\cdot\mid \mathcal C_{\mathcal T}')}
\left\|
\Tilde{\Phi}_{\theta,1\to0}
(\bm x_1'\mid \mathcal C_{\mathcal T}')
\right\|_2^2
-
\mathbb E_{\bm x_1\sim p_1(\cdot\mid \mathcal C_{\mathcal T})}
\left\|
\Tilde{\Phi}_{\theta,1\to0}
(\bm x_1\mid \mathcal C_{\mathcal T})
\right\|_2^2
\right| \\
=&
\left|
\mathbb E_{(\bm x_1', \bm x_1)\sim \gamma_2}
\left[
\left\|
\Tilde{\Phi}_{\theta,1 \to 0}
(\bm x_1'\mid \mathcal C_{\mathcal T}')
\right\|_2^2
-
\left\|
\Tilde{\Phi}_{\theta,1 \to 0}
(\bm x_1\mid \mathcal C_{\mathcal T})
\right\|_2^2
\right]
\right| \\
\le&
\mathbb E_{(\bm x_1', \bm x_1)\sim \gamma_2}
\Big[
\left(
\left\|
\Tilde{\Phi}_{\theta,1\to0}
(\bm x_1'\mid \mathcal C_{\mathcal T}')
\right\|_2
+
\left\|
\Tilde{\Phi}_{\theta,1\to0}
(\bm x_1\mid \mathcal C_{\mathcal T})
\right\|_2
\right) \cdot
\left\|
\Tilde{\Phi}_{\theta,1\to0}
(\bm x_1'\mid \mathcal C_{\mathcal T}')
-
\Tilde{\Phi}_{\theta,1\to0}
(\bm x_1\mid \mathcal C_{\mathcal T})
\right\|_2
\Big] \\
\leq &
2 \sqrt{M} \cdot
\left(
\mathbb E_{(\bm x_1',\bm x_1)\sim \gamma_2}
\left\|
\Tilde{\Phi}_{\theta, 1 \to 0}
(\bm x_1' \mid \mathcal C_{\mathcal T}')
-
\Tilde{\Phi}_{\theta, 1 \to 0}
(\bm x_1\mid \mathcal C_{\mathcal T})
\right\|_2^2
\right)^{1/2} \text{ (by the Cauchy--Schwarz inequality)} \\
\leq & 2 L_\Phi \sqrt{2M} \left(\mathbb E_{(\bm x_1',\bm x_1)\sim \gamma_2}
\|\bm x_1'-\bm x_1\|_2^2 + \|\mathcal C_{\mathcal T}' - \mathcal C_{\mathcal T}\|_2^2\right)^\frac{1}{2} \text{ (since $\Tilde{\Phi}_{\theta, 1 \to 0}$ is Lipschitz continuous)}. \\
\leq & 2 L_\Phi \sqrt{2M(L_{p_1}^2+2)} \|\mathcal C_{\mathcal T}' - \mathcal C_{\mathcal T}\|_2.
\end{align*}

Next, for any $\mathcal{C}_\mathcal{T} \in \mathcal{X}_{\mathcal{C}_\mathcal{T}}$,
\begin{align*}
    \|\Tilde{\bm{\mu}}_0(\mathcal{C}_\mathcal{T})\|_2
    &= \|\mathbb E_{\bm{x}_1 \sim p_1(\cdot \mid \mathcal{C}_\mathcal{T})}[\Tilde{\Phi}_{\theta, 1 \to 0}(\bm{x}_1 \mid \mathcal{C}_\mathcal{T})]\|_2 \\
    &\leq \left(
    \mathbb E_{\bm{x}_1 \sim p_1(\cdot \mid \mathcal{C}_\mathcal{T})}\|\Tilde{\Phi}_{\theta, 1 \to 0}(\bm{x}_1 \mid \mathcal{C}_\mathcal{T})\|_2^2
    \right)^\frac{1}{2} \text{ (by Jensen's inequality)} \\
    &\leq \sqrt{M}.
\end{align*}

Hence
\begin{align*}
    \left| \|\Tilde{\bm{\mu}}_0(\mathcal{C}_\mathcal{T}')\|_2^2 - \|\Tilde{\bm{\mu}}_0(\mathcal{C}_\mathcal{T})\|_2^2\right|
    &\leq
    (\|\Tilde{\bm{\mu}}_0(\mathcal{C}_\mathcal{T}')\|_2 +
    \|\Tilde{\bm{\mu}}_0(\mathcal{C}_\mathcal{T})\|_2)
    \|\Tilde{\bm{\mu}}_0(\mathcal{C}_\mathcal{T}') - \Tilde{\bm{\mu}}_0(\mathcal{C}_\mathcal{T})\|_2 \\
    &\leq 2\sqrt{M} \|\Tilde{\bm{\mu}}_0(\mathcal{C}_\mathcal{T}') - \Tilde{\bm{\mu}}_0(\mathcal{C}_\mathcal{T})\|_2 
    \leq 2L_\Phi(L_{p_1}+2) \sqrt{M} \|\mathcal{C}_\mathcal{T}'-\mathcal{C}_\mathcal{T}\|_2.
\end{align*}

Therefore, we have
\begin{align*}
\left| {\Tilde{\sigma}_0}^2(\mathcal{C}_{\mathcal{T}}') - {\Tilde{\sigma}_0}^2(\mathcal{C}_{\mathcal{T}}) \right|
& \leq \frac{1}{d}
\left|
\mathbb{E}_{\bm{x}_1'\sim p_1(\cdot \mid \mathcal{C}_\mathcal{T}')}
\left\|
\Tilde{\Phi}_{\theta, 1 \to 0}(\bm{x}_1' \mid \mathcal{C}_\mathcal{T}')
\right\|_2^2
-
\mathbb{E}_{\bm{x}_1\sim p_1(\cdot \mid \mathcal{C}_\mathcal{T})}
\left\|
\Tilde{\Phi}_{\theta, 1 \to 0}(\bm{x}_1 \mid \mathcal{C}_\mathcal{T})
\right\|_2^2
\right| \\
&+ \frac{1}{d}
\left| \|\Tilde{\bm{\mu}}_0(\mathcal{C}_\mathcal{T}')\|_2^2 - \|\Tilde{\bm{\mu}}_0(\mathcal{C}_\mathcal{T})\|_2^2\right|
\leq L_{\Tilde{\sigma}_0^2} \|\mathcal{C}_\mathcal{T}'-\mathcal{C}_\mathcal{T}\|_2.
\end{align*}
where $L_{\Tilde{\sigma}_0^2} := \frac{2L_\Phi}{d} \left[\sqrt{2M(L_{p_1}^2+2)} + (L_{p_1}+2) \sqrt{M}\right]$. The proof is complete.
\end{proof}

\begin{proposition}[Lipschitz continuity of the CNP objective population minimizer]
Consider the CNP objective $\mathcal L_{\mathrm{CNP}}(\phi) = \alpha \mathcal L_{\mathrm{NLL}}(\phi) + \beta \mathcal L_{\mathrm{KL}}(\phi), \alpha > 0, \beta \geq 0$. Let $s(\mathcal{C}_\mathcal{T}) = \sigma_\phi^{*2}(\mathcal{C}_\mathcal{T})$ denote the unique positive solution of $F(s,\mathcal{C}_\mathcal{T})=0$, where
\begin{equation*}
F(s, \mathcal{C}_\mathcal{T})
=\beta s^2 + (\alpha-\beta)s - \alpha \Tilde{\sigma}_0^2(\mathcal{C}_\mathcal{T})-
\frac{\alpha\beta^2s^2}{d(\alpha+\beta s)^2}
\|\Tilde{\bm{\mu}}_0(\mathcal{C}_\mathcal{T})\|^2.
\end{equation*}
Assume that for all $\mathcal{C}_\mathcal{T} \in \mathcal{X}_{\mathcal{C}_\mathcal{T}}$,
\begin{enumerate}[label=(\arabic*)]
    \item $\|\Tilde{\bm{\mu}}_0(\mathcal{C}_\mathcal{T})\|_2 \leq M_{\Tilde{\bm{\mu}}_0}$,
    \item $\Tilde{\bm{\mu}}_0(\cdot)$ is $L_{\Tilde{\bm{\mu}}_0}$-Lipschitz continuous on $\mathcal{X}_{\mathcal{C}_\mathcal{T}}$,
    \item $\Tilde{\sigma}_0^2(\cdot)$ is $L_{\Tilde{\sigma}_0^2}$-Lipschitz continuous on $\mathcal{X}_{\mathcal{C}_\mathcal{T}}$,
    \item $0 < s_{\min}\le s(\mathcal{C}_\mathcal{T}) \leq s_{\max} < \infty$, and
    \item there exists $\kappa > 0$ such that
    $\left|\partial_s F(s, \mathcal{C}_\mathcal{T})\right| \geq \kappa$ for all $s \in [s_{\min}, s_{\max}]$.
\end{enumerate}
Then $\bm{\mu}_\phi^*(\mathcal{C}_\mathcal{T})$ and ${\sigma_\phi^*}^2(\mathcal{C}_{\mathcal{T}})$ are Lipschitz continuous on $\mathcal{X}_{\mathcal{C}_\mathcal{T}}$.

\end{proposition}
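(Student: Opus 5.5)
We first show that the variance map $s(\mathcal{C}_\mathcal{T})$ is Lipschitz, using a quantitative implicit-function argument on $F$. Then we obtain the mean from the closed form $\bm{\mu}_\phi^* = \frac{\alpha}{\alpha+\beta s}\Tilde{\bm{\mu}}_0$ of Prop.~\ref{prop:cnp_min}.

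\textbf{Step 1: Lipschitz dependence of $F$ on the condition at fixed $s$.} Fix $s\in[s_{\min},s_{\max}]$ and two conditions $\mathcal{C}_\mathcal{T},\mathcal{C}_\mathcal{T}'$. The $\mathcal{C}_\mathcal{T}$-dependence of $F$ enters only through $-\alpha\Tilde{\sigma}_0^2$ and $-\frac{\alpha\beta^2 s^2}{d(\alpha+\beta s)^2}\|\Tilde{\bm{\mu}}_0\|_2^2$. The prefactor satisfies $\frac{\alpha\beta^2 s^2}{d(\alpha+\beta s)^2}\le \frac{\alpha}{d}$, since $\beta s/(\alpha+\beta s)<1$. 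We also have
\[
\bigl|\|\Tilde{\bm{\mu}}_0'\|_2^2-\|\Tilde{\bm{\mu}}_0\|_2^2\bigr|\le 2M_{\Tilde{\bm{\mu}}_0}L_{\Tilde{\bm{\mu}}_0}\|\mathcal{C}_\mathcal{T}'-\mathcal{C}_\mathcal{T}\|_2
\]
by assumptions (1)--(2). Combining these with (3) gives
\[
|F(s,\mathcal{C}_\mathcal{T}')-F(s,\mathcal{C}_\mathcal{T})|\le L_F\|\mathcal{C}_\mathcal{T}'-\mathcal{C}_\mathcal{T}\|_2, \qquad L_F:=\alpha L_{\Tilde{\sigma}_0^2}+\tfrac{2\alpha}{d}M_{\Tilde{\bm{\mu}}_0}L_{\Tilde{\bm{\mu}}_0},
\]
uniformly in $s$.

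\textbf{Step 2: Lipschitz continuity of $s(\cdot)$ (main step).} Write $s=s(\mathcal{C}_\mathcal{T})$ and $s'=s(\mathcal{C}_\mathcal{T}')$; both lie in $[s_{\min},s_{\max}]$ by (4). Since $F(s',\mathcal{C}_\mathcal{T}')=0=F(s,\mathcal{C}_\mathcal{T})$,
\[
F(s',\mathcal{C}_\mathcal{T})-F(s,\mathcal{C}_\mathcal{T}) = F(s',\mathcal{C}_\mathcal{T})-F(s',\mathcal{C}_\mathcal{T}').
\]
The map $s\mapsto F(s,\mathcal{C}_\mathcal{T})$ is $C^1$ on $(0,\infty)$ because $\alpha+\beta s>0$. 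By the mean value theorem on the segment between $s$ and $s'$, which stays inside $[s_{\min},s_{\max}]$, the left side equals $\partial_sF(\xi,\mathcal{C}_\mathcal{T})(s'-s)$ for some $\xi$ in that segment. Assumption (5) gives $|\partial_sF(\xi,\mathcal{C}_\mathcal{T})|\ge\kappa$. Applying Step 1 to the right side then yields
\[
|s'-s|\le \frac{L_F}{\kappa}\|\mathcal{C}_\mathcal{T}'-\mathcal{C}_\mathcal{T}\|_2 .
\]
The delicate point is that the mean value point $\xi$ must lie in the interval where (5) holds. This is exactly why we need both roots to lie in $[s_{\min},s_{\max}]$, together with convexity of that interval. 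We do not invoke the implicit function theorem, so no extra smoothness in $\mathcal{C}_\mathcal{T}$ is required.

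\textbf{Step 3: Lipschitz continuity of the mean.} Set $g(s)=\frac{\alpha}{\alpha+\beta s}$. On $[s_{\min},s_{\max}]$ it satisfies $0<g\le1$ and $|g'(s)|=\frac{\alpha\beta}{(\alpha+\beta s)^2}\le\frac{\beta}{\alpha}$. Then
\[
\|\bm{\mu}_\phi^{*\prime}-\bm{\mu}_\phi^*\|_2\le |g(s')-g(s)|\,\|\Tilde{\bm{\mu}}_0'\|_2+g(s)\|\Tilde{\bm{\mu}}_0'-\Tilde{\bm{\mu}}_0\|_2\le\Bigl(\tfrac{\beta}{\alpha}M_{\Tilde{\bm{\mu}}_0}\tfrac{L_F}{\kappa}+L_{\Tilde{\bm{\mu}}_0}\Bigr)\|\mathcal{C}_\mathcal{T}'-\mathcal{C}_\mathcal{T}\|_2,
\]
using (1), (2) and Step 2. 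Together with Step 2, this proves that both $\bm{\mu}_\phi^*$ and ${\sigma_\phi^*}^2$ are Lipschitz. If $\beta=0$, the statement reduces to the moment-matching case, where $s=\Tilde{\sigma}_0^2$ and $\bm{\mu}_\phi^*=\Tilde{\bm{\mu}}_0$; this is consistent with the bounds above.
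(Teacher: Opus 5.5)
Your proof is correct and follows the paper's argument. You use the same identity $F(s',\mathcal{C}_\mathcal{T})-F(s,\mathcal{C}_\mathcal{T})=F(s',\mathcal{C}_\mathcal{T})-F(s',\mathcal{C}_\mathcal{T}')$ with the mean value theorem on $[s_{\min},s_{\max}]$ and assumption (5), then the same triangle-inequality split of $\frac{\alpha}{\alpha+\beta s}\Tilde{\bm{\mu}}_0$. The only difference is cosmetic: you bound the prefactor $\frac{\alpha\beta^2 s^2}{d(\alpha+\beta s)^2}$ by $\alpha/d$ rather than the paper's $\beta^2 s_{\max}^2/(d\alpha)$, which is equally valid and removes the constant's dependence on $s_{\max}$.
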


\begin{proof}
For any two conditions $\mathcal{C}_\mathcal{T}, \mathcal{C}_\mathcal{T}' \in \mathcal{X}_{\mathcal{C}_\mathcal{T}}$, we have
\begin{equation*}
    F(s(\mathcal{C}_\mathcal{T}), \mathcal{C}_\mathcal{T}) = F(s(\mathcal{C}_\mathcal{T}'), \mathcal{C}_\mathcal{T}') = 0.
\end{equation*}
Therefore,
\begin{equation*}
    F(s(\mathcal{C}_\mathcal{T}), \mathcal{C}_\mathcal{T}) - F(s(\mathcal{C}_\mathcal{T}'), \mathcal{C}_\mathcal{T})
    = F(s(\mathcal{C}_\mathcal{T}'), \mathcal{C}_\mathcal{T}') - F(s(\mathcal{C}_\mathcal{T}'), \mathcal{C}_\mathcal{T}).
\end{equation*}
By the Mean Value Theorem, there exists $\bar{s}$ between $s(\mathcal{C}_\mathcal{T})$ and $s(\mathcal{C}_\mathcal{T}')$ such that
\begin{equation*}
    F(s(\mathcal{C}_\mathcal{T}), \mathcal{C}_\mathcal{T}) - F(s(\mathcal{C}_\mathcal{T}'), \mathcal{C}_\mathcal{T}) = \partial_{s} F(\bar{s}, \mathcal{C}_\mathcal{T}) (s(\mathcal{C}_\mathcal{T}) - s(\mathcal{C}_\mathcal{T}')).
\end{equation*}
Since $s(\mathcal{C}_\mathcal{T})$ and 
$s(\mathcal{C}_\mathcal{T}')$ both lie in $[s_{\min},s_{\max}]$, we have $\bar{s}\in[s_{\min},s_{\max}]$. Thus,
\begin{equation*}
    \kappa |s(\mathcal{C}_\mathcal{T}) - s(\mathcal{C}_\mathcal{T}')| \leq |F(s(\mathcal{C}_\mathcal{T}'), \mathcal{C}_\mathcal{T}') - F(s(\mathcal{C}_\mathcal{T}'), \mathcal{C}_\mathcal{T})|.
\end{equation*}
For any $ s \in[s_{\min}, s_{\max}]$, we have
\begin{align*}
|F(s, \mathcal{C}_\mathcal{T}) - F(s, \mathcal{C}_\mathcal{T}')|
&\leq \alpha |\Tilde{\sigma}_0^2(\mathcal{C}_\mathcal{T}) - \Tilde{\sigma}_0^2(\mathcal{C}_\mathcal{T}')|
+ \frac{\alpha\beta^2s^2}{d(\alpha+\beta s)^2}
\left| \|\Tilde{\bm{\mu}}_0(\mathcal{C}_\mathcal{T})\|_2^2 - \|\Tilde{\bm{\mu}}_0(\mathcal{C}_\mathcal{T}')\|_2^2 \right| \\
&\leq \alpha L_{\Tilde{\sigma}_0^2} \|\mathcal{C}_\mathcal{T} - \mathcal{C}_\mathcal{T}'\|_2 + \frac{\beta^2 s_{\max}^2}{d \alpha} (2M_{\Tilde{\bm{\mu}}_0} L_{\Tilde{\bm{\mu}}_0}) \|\mathcal{C}_\mathcal{T} - \mathcal{C}_\mathcal{T}'\|_2 \\
&= \left(\alpha L_{\Tilde{\sigma}_0^2} + \frac{2\beta^2 s_{\max}^2M_{\Tilde{\bm{\mu}}_0} L_{\Tilde{\bm{\mu}}_0}}{d \alpha}\right) \|\mathcal{C}_\mathcal{T} - \mathcal{C}_\mathcal{T}'\|_2.
\end{align*}
Hence $|s(\mathcal{C}_\mathcal{T}) - s(\mathcal{C}_\mathcal{T}')| \leq L_s \|\mathcal{C}_\mathcal{T} - \mathcal{C}_\mathcal{T}'\|_2$,
where $L_s = \frac{1}{\kappa}\left(\alpha L_{\Tilde{\sigma}_0^2} + \frac{2\beta^2 s_{\max}^2M_{\Tilde{\bm{\mu}}_0} L_{\Tilde{\bm{\mu}}_0}}{d \alpha}\right)$. Therefore, ${\sigma_\phi^*}^2(\mathcal{C}_\mathcal{T})$ is Lipschitz continuous on $\mathcal{X}_{\mathcal{C}_\mathcal{T}}$.
Next, by Prop.~B.1, the optimal mean satisfies
\begin{equation*}
    \bm{\mu}_\phi^*(\mathcal{C}_\mathcal{T}) = \frac{\alpha}{\alpha+\beta s(\mathcal{C}_\mathcal{T})} \Tilde{\bm{\mu}}_0(\mathcal{C}_\mathcal{T}).
\end{equation*}
Therefore,
\begin{align*}
\|\bm{\mu}_\phi^*(\mathcal{C}_\mathcal{T}) - \bm{\mu}_\phi^*(\mathcal{C}_\mathcal{T}')\|_2
&=
\left\|
\frac{\alpha}{\alpha + \beta s(\mathcal{C}_\mathcal{T})}\Tilde{\bm{\mu}}_0(\mathcal{C}_\mathcal{T}) -
\frac{\alpha}{\alpha + \beta s(\mathcal{C}_\mathcal{T}')}\Tilde{\bm{\mu}}_0(\mathcal{C}_\mathcal{T}')
\right\|_2 \\
& \leq
\frac{\alpha}{\alpha+\beta s(\mathcal{C}_\mathcal{T})}
\|\Tilde{\bm{\mu}}_0(\mathcal{C}_\mathcal{T}) - \Tilde{\bm{\mu}}_0(\mathcal{C}_\mathcal{T}')\|_2 + \alpha
\left|\frac{1}{\alpha+\beta s(\mathcal{C}_\mathcal{T})} - \frac{1}{\alpha + \beta s(\mathcal{C}_\mathcal{T}')}
\right| \|\Tilde{\bm{\mu}}_0(\mathcal{C}_\mathcal{T}')\|_2.
\end{align*}
Moreover,
\begin{align*}
\left|\frac{1}{\alpha + \beta s(\mathcal{C}_\mathcal{T})} - \frac{1}{\alpha + \beta s(\mathcal{C}_\mathcal{T}')} \right|
&= \frac{\beta |s(\mathcal{C}_\mathcal{T}) - s(\mathcal{C}_\mathcal{T}')|}
{(\alpha + \beta s(\mathcal{C}_\mathcal{T}))(\alpha + \beta s(\mathcal{C}_\mathcal{T}'))} \\
&\leq \frac{\beta}{\alpha^2}|s(\mathcal{C}_\mathcal{T}) - s(\mathcal{C}_\mathcal{T}')|
\leq \frac{\beta L_s}{\alpha^2} \|\mathcal{C}_\mathcal{T} - \mathcal{C}_\mathcal{T}'\|_2.
\end{align*}
Hence $\|\bm{\mu}_\phi^*(\mathcal{C}_\mathcal{T}) - \bm{\mu}_\phi^*(\mathcal{C}_\mathcal{T}')\|_2 \leq L_{\bm{\mu}_\phi^*} \|\mathcal{C}_\mathcal{T} - \mathcal{C}_\mathcal{T}'\|_2$,
where $L_{\bm{\mu}_\phi^*} = L_{\Tilde{\bm{\mu}}_0} + \frac{\beta L_s M_{\Tilde{\bm{\mu}}_0}}{\alpha}$. Therefore, $\bm{\mu}_\phi^*(\mathcal{C}_\mathcal{T})$ is Lipschitz continuous on $\mathcal{X}_{\mathcal{C}_\mathcal{T}}$. The proof is complete.
\end{proof}

\begin{lemma}[High-probability two-sided bound for Gaussian sample distance]
\label{prop:gaussian_sample}
Let $\bm{x} \sim \mathcal{N}(\bm{\mu}, {\sigma}^2\mathbf{I}_d), \bm{x}' \sim \mathcal{N}(\bm{\mu}', {\sigma'}^2\mathbf{I}_d)$ be independent random vectors in $\mathbb{R}^d$ with $\sigma >0$ and $\sigma' > 0$. Then, for any $\delta \in (0, 1)$, with probability at least $1-\delta$,
\begin{align*}
\|\bm{\mu} - \bm{\mu}'\|_2 -
\sqrt{\sigma^2 + {\sigma'}^2} c_{d, \delta}
\leq \| \bm{x} - \bm{x}' \|_2 \leq
\| \bm{\mu} - \bm{\mu}' \|_2 +
\sqrt{\sigma^2 + {\sigma'}^2} c_{d, \delta},
\end{align*}
where $c_{d, \delta}=\sqrt{d + 2\sqrt{d \log \frac{1}{\delta}} + 2\log \frac{1}{\delta}}$.
\end{lemma}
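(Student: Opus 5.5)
The plan is to reduce the difference of the two samples to a single Gaussian vector and then apply a chi-square tail bound. By independence, $\bm{x} - \bm{x}' \sim \mathcal{N}(\bm{\mu} - \bm{\mu}', (\sigma^2 + {\sigma'}^2)\mathbf{I}_d)$. We therefore write
\begin{equation*}
\bm{x} - \bm{x}' = (\bm{\mu} - \bm{\mu}') + s\,\bm{g},
\end{equation*}
where $s := \sqrt{\sigma^2 + {\sigma'}^2}$ and $\bm{g} \sim \mathcal{N}(\mathbf{0}, \mathbf{I}_d)$. The triangle inequality and the reverse triangle inequality then give
\begin{equation*}
\|\bm{\mu} - \bm{\mu}'\|_2 - s\|\bm{g}\|_2 \leq \|\bm{x} - \bm{x}'\|_2 \leq \|\bm{\mu} - \bm{\mu}'\|_2 + s\|\bm{g}\|_2
\end{equation*}
deterministically. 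Both sides of the lemma thus follow from the single event $\|\bm{g}\|_2 \leq c_{d,\delta}$, and no union bound over two tails is needed.

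To control $\|\bm{g}\|_2^2 \sim \chi^2_d$, we invoke the Laurent--Massart upper tail bound: for all $u > 0$,
\begin{equation*}
\mathbb{P}\bigl(\|\bm{g}\|_2^2 \geq d + 2\sqrt{d u} + 2u\bigr) \leq e^{-u}.
\end{equation*}
Setting $u = \log\frac{1}{\delta}$, with probability at least $1-\delta$ we get $\|\bm{g}\|_2^2 < d + 2\sqrt{d\log\frac{1}{\delta}} + 2\log\frac{1}{\delta} = c_{d,\delta}^2$. Taking square roots and substituting into the deterministic sandwich yields the claim.

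The only nontrivial ingredient is the chi-square concentration, and that is a standard cited result. If a self-contained argument were preferred, I would derive it by a Chernoff bound using the moment generating function $(1-2\lambda)^{-d/2}$ of $\chi^2_d$, together with the inequality $-\tfrac{1}{2}\log(1-2\lambda) - \lambda \leq \lambda^2/(1-2\lambda)$, and then optimize over $\lambda$. Everything else is just the Gaussian closure property and the triangle inequality. The later theorem follows by applying this lemma with $\bm{\mu} = \bm{\mu}_\phi^*(\mathcal{C}_\mathcal{T})$ and using the Lipschitz bound on $\bm{\mu}_\phi^*$.
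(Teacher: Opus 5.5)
Your proposal is correct and matches the paper's proof essentially step for step. The paper also uses $\bm{x}-\bm{x}'\sim\mathcal{N}(\bm{\mu}-\bm{\mu}',(\sigma^2+{\sigma'}^2)\mathbf{I}_d)$ and the Laurent--Massart $\chi^2(d)$ upper tail with $\tau=\log\frac{1}{\delta}$, then obtains both sides of the bound from that single event via the reverse triangle inequality.
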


\begin{proof}
Since $\bm{x}$ and $\bm{x}'$ are independent Gaussian random vectors,
\begin{align*}
\bm{x} - \bm{x}' \sim \mathcal{N}(\bm{\mu} - \bm{\mu}', (\sigma^2 + {\sigma'}^2) \mathbf{I}_d).
\end{align*}
Therefore, we obtain
\begin{align*}
\left\|
\frac{(\bm{x} - \bm{x}') - (\bm{\mu} - \bm{\mu}')}{\sqrt{\sigma^2 + {\sigma'}^2}} \right\|_2^2
\sim \chi^2(d).
\end{align*}

By Laurent--Massart bounds of chi-square distribution with degrees of freedom $d$,
\begin{align*}
\mathbb{P}\left( \left\| \frac{(\bm{x} - \bm{x}') - (\bm{\mu} - \bm{\mu}')}{\sqrt{\sigma^2 + {\sigma'}^2}} \right\|_2^2
\leq d + 2\sqrt{d \tau} + 2\tau \right)
\geq 1 - \mathrm{e}^{-\tau} \text{ for all $\tau>0$}.
\end{align*}

Setting $\tau = \log \frac{1}{\delta} >0$ and $c_{d, \delta}=\sqrt{d + 2\sqrt{d \log \frac{1}{\delta}} + 2\log \frac{1}{\delta}}$, then we have
\begin{align*}
\mathbb{P}\left( \|(\bm{x} - \bm{x}') - (\bm{\mu} - \bm{\mu}') \|_2
\leq c_{d, \delta}\sqrt{\sigma^2 + {\sigma'}^2} \right)
\geq 1 - \delta.
\end{align*}

Hence, with probability at least $1-\delta$, by the reverse triangle inequality,
\begin{align*}
\left|\|\bm{x} - \bm{x}'\|_2 - \|\bm{\mu} - \bm{\mu}'\|_2\right|
\leq \left\|(\bm{x} - \bm{x}') - (\bm{\mu} - \bm{\mu}')\right\|_2 \leq \sqrt{\sigma^2 + {\sigma'}^2} c_{d, \delta}.
\end{align*}

Therefore, we have
\begin{align*}
\|\bm{\mu} - \bm{\mu}'\|_2 -
\sqrt{\sigma^2 + {\sigma'}^2} c_{d, \delta}
\leq \| \bm{x} - \bm{x}' \|_2 \leq
\| \bm{\mu} - \bm{\mu}' \|_2 +
\sqrt{\sigma^2 + {\sigma'}^2} c_{d, \delta}.
\end{align*}

\end{proof}

Lemma~\ref{prop:gaussian_sample} establishes that the distance between two independent Gaussian samples can be bounded in terms of the distance between their means and the scale of their standard deviations. Rmk.~\ref{rmk:gaussian_sample} separately examines the implications of this bound for standard Gaussian samples and conditional noise prediction (CNP).

\begin{remark}
\label{rmk:gaussian_sample}
Given the condition $\mathcal{C}_\mathcal{T}, \mathcal{C}_\mathcal{T}' \in \mathcal{X}_{\mathcal{C}_\mathcal{T}}$ and $\delta \in (0, 1)$, with probability at least $1-\delta$,
\begin{enumerate}[leftmargin=*]
    \item Standard Gaussian: If $\bm{x}_0^\mathrm{SG}, {\bm{x}_0^\mathrm{SG}}' \overset{\mathrm{iid}}{\sim} \mathcal{N}(\bm{0}, \mathbf{I}_d)$, then
    \begin{align*}
        \| \bm{x}_0^\mathrm{SG} - {\bm{x}_0^\mathrm{SG}}' \|_2 = \mathcal{O}(\sqrt{2} c_{d, \delta}).
    \end{align*}
    \item CNP: If $\bm{x}_0^\mathrm{CNP} \sim \mathcal{N}(\bm{\mu}_\phi^*(\mathcal{C}_\mathcal{T}), {\sigma_\phi^*}^2(\mathcal{C}_\mathcal{T})\mathbf{I}_d)$ and ${\bm{x}_0^\mathrm{CNP}}' \sim \mathcal{N}(\bm{\mu}_\phi^*(\mathcal{C}_\mathcal{T}'), {\sigma_\phi^*}^2(\mathcal{C}_\mathcal{T}') \mathbf{I}_d)$ are drawn independently, then
    \begin{align*}
        \| \bm{x}_0^\mathrm{CNP} - {\bm{x}_0^\mathrm{CNP}}' \|_2 = \|\bm{\mu}_\phi^*(\mathcal{C}_\mathcal{T})-\bm{\mu}_\phi^*(\mathcal{C}_\mathcal{T}')\|_2 \pm \mathcal{O}\left( \sqrt{{\sigma_\phi^*}^2(\mathcal{C}_\mathcal{T}) + {\sigma_\phi^*}^2(\mathcal{C}_\mathcal{T}')} c_{d, \delta} \right).
    \end{align*}
\end{enumerate}
\end{remark}

\begin{theorem}
\label{cor:gaussian_sample}
Suppose $\bm{\mu}_\phi^*(\cdot)$ is $L_{\bm{\mu}_\phi^*}$-Lipschitz continuous on $\mathcal{X}_{\mathcal{C}_\mathcal{T}}$. Given $\mathcal{C}_\mathcal{T}, \mathcal{C}_\mathcal{T}' \in \mathcal{X}_{\mathcal{C}_\mathcal{T}}$,
if $\bm{x}_0^\mathrm{CNP} \sim \mathcal{N}(\bm{\mu}_\phi^*(\mathcal{C}_\mathcal{T}), {\sigma_\phi^*}^2(\mathcal{C}_\mathcal{T})\mathbf{I}_d)$ and  ${\bm{x}_0^\mathrm{CNP}}' \sim \mathcal{N}(\bm{\mu}_\phi^*(\mathcal{C}_\mathcal{T}')), {\sigma_\phi^*}^2(\mathcal{C}_\mathcal{T}') \mathbf{I}_d)$, then with probability at least $1-\delta$,
\begin{align*}
    \| \bm{x}_0^\mathrm{CNP} - {\bm{x}_0^\mathrm{CNP}}' \|_2 \leq L_{\bm{\mu}_\phi^*} \|\mathcal{C}_\mathcal{T}-\mathcal{C}_\mathcal{T}'\|_2 + \sqrt{{\sigma_\phi^*}^2(\mathcal{C}_\mathcal{T}) + {\sigma_\phi^*}^2(\mathcal{C}_\mathcal{T}')} c_{d, \delta}.
\end{align*}
\end{theorem}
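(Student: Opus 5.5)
The plan is to combine the upper half of Lemma~\ref{prop:gaussian_sample} with the assumed Lipschitz continuity of the optimal mean $\bm{\mu}_\phi^*(\cdot)$. First I instantiate Lemma~\ref{prop:gaussian_sample} with
\[
\bm{\mu} = \bm{\mu}_\phi^*(\mathcal{C}_\mathcal{T}),\quad \bm{\mu}' = \bm{\mu}_\phi^*(\mathcal{C}_\mathcal{T}'),\quad \sigma^2 = {\sigma_\phi^*}^2(\mathcal{C}_\mathcal{T}),\quad {\sigma'}^2 = {\sigma_\phi^*}^2(\mathcal{C}_\mathcal{T}').
\]
This needs the two CNP samples to be independent, which is implicit in the statement (``independently drawn'' in the informal version), and it needs $\sigma,\sigma'>0$. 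Positivity holds because the population minimizer has strictly positive variance: the $\log\sigma_\phi^2$ term in $\mathcal{L}_{\mathrm{CNP}}$ diverges as $\sigma_\phi^2\to 0$. The lemma then gives, with probability at least $1-\delta$,
\[
\|\bm{x}_0^{\mathrm{CNP}}-{\bm{x}_0^{\mathrm{CNP}}}'\|_2 \le \|\bm{\mu}_\phi^*(\mathcal{C}_\mathcal{T})-\bm{\mu}_\phi^*(\mathcal{C}_\mathcal{T}')\|_2 + \sqrt{{\sigma_\phi^*}^2(\mathcal{C}_\mathcal{T})+{\sigma_\phi^*}^2(\mathcal{C}_\mathcal{T}')}\,c_{d,\delta}.
\]

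Second, I bound the mean difference deterministically by $L_{\bm{\mu}_\phi^*}\|\mathcal{C}_\mathcal{T}-\mathcal{C}_\mathcal{T}'\|_2$, using the Lipschitz hypothesis. Since the conditions are fixed and only the Gaussian draws are random, this bound holds surely. Replacing the mean term on the event of the first step therefore keeps the probability at least $1-\delta$ and yields the stated inequality.

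If one wants the hypothesis justified rather than assumed, the previous proposition on Lipschitz continuity of the CNP population minimizer supplies $L_{\bm{\mu}_\phi^*}$ under its assumptions (1)--(5). Those assumptions in turn rest on the Lipschitz flow-reversal lemma and on the NLL-minimizer proposition. For the informal $\mathcal{O}(\cdot)$ version, I would note that
\[
c_{d,\delta}\le \sqrt{2d+3\log(1/\delta)},
\]
using $2\sqrt{d\log(1/\delta)}\le d+\log(1/\delta)$. So for fixed $\delta$ we have $c_{d,\delta}=\mathcal{O}(\sqrt d)$, which recovers $\mathcal{O}\bigl(\sqrt{d({\sigma_\phi^*}^2+{\sigma_\phi^*}'^2)}\bigr)$.

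There is essentially no hard step. The only care needed is the bookkeeping noted above: independence of the two draws (so their difference is Gaussian with variance ${\sigma_\phi^*}^2(\mathcal{C}_\mathcal{T})+{\sigma_\phi^*}^2(\mathcal{C}_\mathcal{T}')$), and the fact that the Lipschitz substitution does not shrink the high-probability event.
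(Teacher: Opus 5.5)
Your proposal is correct and matches the paper's argument: the paper gives no separate proof and treats the theorem as the upper half of Lemma~\ref{prop:gaussian_sample} combined with the Lipschitz bound on $\|\bm{\mu}_\phi^*(\mathcal{C}_\mathcal{T})-\bm{\mu}_\phi^*(\mathcal{C}_\mathcal{T}')\|_2$, which holds surely. One small correction to your aside on positivity: for $\alpha>\beta$ the $\log\sigma_\phi^2$ term tends to $-\infty$ as $\sigma_\phi^2\to 0$, so the term that prevents variance collapse is $\alpha d\tilde{\sigma}_0^2/(2\sigma_\phi^2)$ (which requires $\tilde{\sigma}_0^2>0$), and in any case the lemma's upper bound stays valid even if one or both variances were zero.
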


Rmk.~\ref{rmk:gaussian_sample} reveals that the discrepancy between two CNP samples can be decomposed into two components: a deterministic shift induced by the difference between their conditions, and a stochastic deviation induced by the predicted uncertainty. In particular, Theorem~\ref{cor:gaussian_sample} shows that when the conditional mean predictor is Lipschitz continuous, nearby conditions lead to nearby predicted means. Therefore, the sample-level discrepancy remains controlled with high probability as long as the conditions are close and the predicted variances are small.

\begin{proposition}
\label{prop:cnp_target_var}
Given the condition $\mathcal{C}_\mathcal{T}$, let $\bm{x}_0^\mathrm{SG} \sim \mathcal{N}(\mathbf{0}, \mathbf{I})$ and $\bm{x}_0^\mathrm{CNP} \sim p_\phi^*(\cdot\mid\mathcal{C}_\mathcal{T})$, then

\begin{equation*}
    \operatorname{trace}(\operatorname{Cov}(\bm{x}_1 - \bm{x}_0^\mathrm{CNP} \mid \mathcal{C}_\mathcal{T})) - 
    \operatorname{trace}(\operatorname{Cov}(\bm{x}_1 - \bm{x}_0^\mathrm{SG} \mid \mathcal{C}_\mathcal{T}))
    = d(\sigma_\phi^2(\mathcal{C}_\mathcal{T}) - 1).
\end{equation*}
\end{proposition}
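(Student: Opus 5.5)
The plan is to compute both traces directly, using conditional independence of the source sample from the target. In training and in the generative model, given $\mathcal{C}_\mathcal{T}$, the source sample $\bm{x}_0$ is drawn independently of $\bm{x}_1 \sim p_1(\cdot\mid\mathcal{C}_\mathcal{T})$. This holds both for $\bm{x}_0^\mathrm{SG}\sim\mathcal{N}(\mathbf{0},\mathbf{I})$ and for $\bm{x}_0^\mathrm{CNP}\sim p_\phi^*(\cdot\mid\mathcal{C}_\mathcal{T})$, as in Algo.~\ref{alg:train_cfm}. I would state this independence explicitly as the standing assumption. Under it, the conditional covariance of the difference splits additively:
\begin{equation*}
\operatorname{Cov}(\bm{x}_1 - \bm{x}_0 \mid \mathcal{C}_\mathcal{T}) = \operatorname{Cov}(\bm{x}_1\mid\mathcal{C}_\mathcal{T}) + \operatorname{Cov}(\bm{x}_0\mid\mathcal{C}_\mathcal{T}),
\end{equation*}
because the cross-covariance terms $\operatorname{Cov}(\bm{x}_1,\bm{x}_0\mid\mathcal{C}_\mathcal{T})$ vanish. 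I also need $p_1(\cdot\mid\mathcal{C}_\mathcal{T})$ to have finite second moments, so that all traces are finite.

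Next I take traces. For the standard Gaussian source, $\operatorname{Cov}(\bm{x}_0^\mathrm{SG}) = \mathbf{I}_d$, whose trace is $d$. For the CNP source, the population minimizer from Prop.~\ref{prop:cnp_min} is the isotropic Gaussian with covariance ${\sigma_\phi^*}^2(\mathcal{C}_\mathcal{T})\mathbf{I}_d$. Its trace is $d\,{\sigma_\phi^*}^2(\mathcal{C}_\mathcal{T})$. The conditional mean $\bm{\mu}_\phi^*(\mathcal{C}_\mathcal{T})$ is deterministic given the condition, so it does not contribute to the covariance.

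Subtracting the two expressions cancels the common term $\operatorname{trace}\operatorname{Cov}(\bm{x}_1\mid\mathcal{C}_\mathcal{T})$. What remains is $d({\sigma_\phi^*}^2(\mathcal{C}_\mathcal{T})-1)$. This is the claim, once one reads $\sigma_\phi^2$ in the statement as the variance of the distribution actually sampled, namely ${\sigma_\phi^*}^2$.

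The only step that needs care is justifying independence and the vanishing of the cross terms. Everything else is linearity of trace. If the statement were intended for a coupled (flow-reversal) pair instead, the cross-covariance would not cancel and the identity would fail. I would therefore note explicitly that $\bm{x}_0^\mathrm{CNP}$ is an independent draw from $p_\phi^*$, not $\Tilde{\Phi}_{\theta,1\to 0}(\bm{x}_1\mid\mathcal{C}_\mathcal{T})$.
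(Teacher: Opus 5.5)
Your proposal is correct and matches the paper's proof: independence of the source draw from $\bm{x}_1$ given $\mathcal{C}_\mathcal{T}$ makes the conditional covariance additive, and the traces then differ by $d\,{\sigma_\phi^*}^2(\mathcal{C}_\mathcal{T}) - d$. Your remarks on requiring finite second moments of $p_1(\cdot\mid\mathcal{C}_\mathcal{T})$, and on reading $\sigma_\phi^2$ in the statement as ${\sigma_\phi^*}^2$, are appropriate clarifications that the paper leaves implicit.
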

\begin{proof}
Since $\bm{x}_0\mid\mathcal{C}_\mathcal{T}$ and $\bm{x}_1\mid\mathcal{C}_\mathcal{T}$ are independent,
we have
\begin{equation*}
    \operatorname{Cov}(\bm{x}_1 - \bm{x}_0 \mid \mathcal{C}_\mathcal{T}) = \operatorname{Cov}(\bm{x}_1 \mid \mathcal{C}_\mathcal{T}) + \operatorname{Cov}(\bm{x}_0 \mid \mathcal{C}_\mathcal{T}).
\end{equation*}
Moreover,
\begin{equation*}
    \operatorname{Cov}(\bm{x}_0^\mathrm{CNP} \mid \mathcal{C}_\mathcal{T}) = {\sigma_\phi^*}^2(\mathcal{C}_\mathcal{T}) \mathbf{I}_d,
    \quad
    \operatorname{Cov}(\bm{x}_0^\mathrm{SG} \mid \mathcal{C}_\mathcal{T}) = \mathbf{I}_d
\end{equation*}
Therefore, the proof is conducted.
\end{proof}

Prop.~\ref{prop:cnp_target_var} shows that replacing the standard Gaussian source with the CNP source directly changes the variance of the FM target velocity. In particular, when the learned CNP variance satisfies $\sigma_\phi^2(\mathcal{C}_\mathcal{T}) < 1$, which typically holds when the temporal condition strongly constrains the target sample, the covariance trace of the target velocity is reduced compared to standard Gaussian initialization. This indicates that CNP provides a more concentrated source distribution, yielding lower-variance velocity targets.

\section{Experimental Details}
\label{app:experimental_config}

\subsection{Video Generation}
\label{app:video_generation_config}


\paratitle{Compute environment.}
Video prediction and interpolation experiments are conducted on the AMD AI \& HPC Clusters Taiwan, equipped with AMD Instinct MI300X GPUs with 192GB HBM3 memory and AMD EPYC 9684X 96-Core processors. GPUs within each node are interconnected through AMD Infinity Fabric (XGMI). Flow reversal and final evaluation are run on a single 8-GPU MI300X node, while CNP training and FM fine-tuning use a single MI300X GPU. All AMD experiments use ROCm 7.2.0. Flow matching pretraining and fine-tuning use BF16 mixed precision.

\paratitle{Training time.}
The wall-clock runtime varies with dataset size and video resolution. For KTH, BAIR64, and CLEVRER, the additional offline cost of the CNP-Flow pipeline is approximately $5.5$, $9.1$, and $17$ hours, respectively, relative to vanilla fixed-source FM. On CLEVRER, our largest-resolution video benchmark, source retrieval via flow reversal takes approximately $3$ hours on a single 8-GPU MI300X node, while CNP training and FM fine-tuning take approximately $14$ hours on a single MI300X GPU. Including final evaluation, each video dataset/task completes within 24 hours on the specified hardware.

\paratitle{Computational overhead.}
To quantify the additional cost of \method{} at generation time, we measure inference latency, peak memory, and model FLOPs under same hardware, batch-size, and ODE-solver settings. As summarized in Tab.~\ref{tab:app_compute_overhead}, \method{} increases inference latency by only about $3.4\%$. The CNP is evaluated once per generated frame and requires $0.297$ GFLOPs; at NFE${=}10$, this corresponds to approximately $0.34\%$ additional model FLOPs, with about $15$ MB additional peak memory. The $98.08$M-parameter FM backbone is augmented with a $47.69$M-parameter CNP ($50.84$M on CLEVRER). Source samples are retrieved offline via flow reversal and cached, and flow reversal is never invoked during generation.

\begin{table}[t]
\centering
\caption{\textbf{Inference overhead of \method{}.}
Latency is measured over 1,000 generated frames under matched hardware,
batch-size, and solver settings.}
\label{tab:app_compute_overhead}
\vspace{3pt}

\footnotesize
\setlength{\tabcolsep}{5pt}
\renewcommand{\arraystretch}{0.95}

\resizebox{\linewidth}{!}{
\begin{tabular}{lcccc}
\toprule
Dataset
& Fixed-source FM
& \method{}
& Latency increase
& Additional memory \\
\midrule
KTH / BAIR64
& 207.1 ms/frame
& 214.2 ms/frame
& $3.43\%$
& \multirow{2}{*}{${\sim}15$ MB} \\

CLEVRER
& 211.9 ms/frame
& 219.1 ms/frame
& $3.39\%$
& \\
\bottomrule
\end{tabular}
}

\end{table}

\paratitle{Backbone and latent space.}
All video experiments are performed in the VQ-GAN latent space released by RIVER~\citep{davtyan2023river}. The latent resolution is $8{\times}8$ for BAIR64 and KTH, and $16{\times}16$ for CLEVRER at $128{\times}128$. For video prediction, we reuse the released RIVER prediction checkpoints for KTH and CLEVRER, while retraining the BAIR64 fixed-source FM backbone due to checkpoint incompatibility with our evaluation setup. For video interpolation, since no compatible pretrained interpolation checkpoints are provided, we train the fixed-source FM backbone from scratch. Detailed FM fine-tuning and flow reversal configurations are given in Tab.~\ref{tab:app_video_pred_fm_config} and~\ref{tab:app_video_interp_fm_config}.

\begin{table*}[t]
\centering
\caption{Video prediction fixed-source FM and flow reversal configuration. Source samples obtained through flow reversal are cached only for CNP supervision and are not used at inference time.}
\label{tab:app_video_pred_fm_config}
\begin{tabular}{l c c c}
\toprule
Component & KTH & CLEVRER & BAIR64 \\
\midrule
Flow reversal steps & 100 & 20 & 50 \\
FM fine-tune length & 200k steps & 20k steps & 50k steps \\
FM fine-tune batch & 64 & 128 & 64 \\
FM fine-tune LR & $5{\times}10^{-5}$ & $5{\times}10^{-5}$ & $5{\times}10^{-5}$ \\
\bottomrule
\end{tabular}

\end{table*}
\begin{table*}[t]
\centering
\caption{Video interpolation fixed-source FM and flow reversal configuration. Since no compatible interpolation checkpoint is available, the fixed-source CFM backbone is trained from scratch.}
\label{tab:app_video_interp_fm_config}
\begin{tabular}{l c c}
\toprule
Component & KTH & CLEVRER \\
\midrule
FM pretrain steps & 300k & 300k \\
FM pretrain batch & 32 & 16 \\
Flow reversal steps & 100 & 40 \\
FM fine-tune steps & 200k & 200k \\
FM fine-tune batch & 32 & 16 \\
FM fine-tune LR & $1{\times}10^{-4}$ & $1{\times}10^{-4}$ \\
\bottomrule
\end{tabular}

\end{table*}

Source samples obtained through flow reversal are cached only for CNP supervision and are not used at inference time. FM fine-tuning uses Adam with $\beta_1=0.9$, $\beta_2=0.999$, no weight decay, and gradient clipping with maximum norm 1.0.

\paratitle{CNP architecture and training.}
The video CNP is a convolutional encoder over latent feature grids. It predicts the mean and log-variance of an isotropic Gaussian source distribution. For prediction, the CNP conditions on the available context latents and temporal-distance embedding. For interpolation, the CNP conditions on the local reference latent, the endpoint latent, and the temporal-distance embedding. The shared architecture is summarized in Tab.~\ref{tab:app_video_cnp_arch}. All video CNPs are trained with Adam using a learning rate of $1 {\times}10^{-4}$ and a KL weight of $\lambda_{\mathrm{KL}}=1{\times}10^{-4}$. Task-specific training lengths, batch sizes, and KL warm-up schedules are listed in Tab.~\ref{tab:app_video_cnp_training_config}.

\begin{table*}[t]
\centering
\caption{Video CNP architecture. The same architecture is used for video prediction and video interpolation.}
\label{tab:app_video_cnp_arch}
\begin{tabular}{l c}
\toprule
Component & Setting \\
\midrule
Backbone & Convolutional encoder \\
Input & Conditional latents + temporal-distance embedding \\
Output & $\mu,\log\sigma^2$ \\
Base channels & 64 \\
Channel multipliers & $(1,2,4,8)$ \\
Residual blocks per resolution & 2 \\
Attention resolution & Middle latent resolution \\
Dropout & 0.1 \\
Latent grid & $8{\times}8$ or $16{\times}16$ \\
\bottomrule
\end{tabular}

\end{table*}
\begin{table*}[t]
\centering
\caption{Video CNP training configuration. All video CNPs are trained with Adam using learning rate $1{\times}10^{-4}$ and KL weight $\lambda_{\mathrm{KL}}=1{\times}10^{-4}$.}
\label{tab:app_video_cnp_training_config}
\begin{tabular}{l c c c c}
\toprule
Task & Dataset & Length & Batch & KL warm-up \\
\midrule
Prediction & KTH & 50 epochs & 64 & 10 epochs \\
Prediction & CLEVRER & 15 epochs & 1024 & 10 epochs \\
Prediction & BAIR64 & 15 epochs & 64 & 5 epochs \\
\midrule
Interpolation & KTH & 200k steps & 32 & 5 epochs \\
Interpolation & CLEVRER & 200k steps & 16 & 2 epochs \\
\bottomrule
\end{tabular}

\end{table*}

\paratitle{Source-learning baselines.} For CAR-Flow~\citep{chen2025car}, we use its source-only variant, which modifies the source while leaving the target unchanged. For CSFM~\citep{kim2026better}, we adapt its source predictor to the same temporal condition used by the RIVER backbone and retain its original regularization and source--target alignment objectives. All source-learning baselines use the same FM backbone, training data, pretrained FM initialization, fine-tuning budget, sampling NFE, and evaluation protocol as \method{}. Method-specific hyperparameters follow the original recommendations or are selected on validation data without using test-set performance.

\paratitle{Evaluation.}
For both video prediction and video interpolation, we evaluate 256 videos and generate 100 samples per video. PSNR and SSIM are reported as best-of-100 scores, while FVD is computed over the full generated sample pool using the same I3D feature extractor as RIVER~\citep{davtyan2023river}. Unless otherwise stated, all main-text results in Tab.~\ref{tab:video_prediction} and~\ref{tab:video_interpolation} use a 10-step Euler solver (NFE${=}10$); Fig.~\ref{fig:efficiency} sweeps NFE separately to characterize inference efficiency.

\paratitle{Variability across runs.}
To evaluate run-to-run variability, we run the complete pipeline with three independent random seeds for each setting and report mean $\pm$ standard deviation. Tab.~\ref{tab:app_run_variability} provides the complete three-seed results for both video prediction and interpolation across all evaluated datasets.

\begin{table}[t]
\centering
\caption{\textbf{Variability across runs for \method{}.}
Results are reported as mean $\pm$ standard deviation over three
independent runs for video prediction and interpolation.}
\label{tab:app_run_variability}

\footnotesize
\setlength{\tabcolsep}{6pt}
\renewcommand{\arraystretch}{1.05}

\begin{tabular}{lccc}
\toprule
Dataset
& PSNR $\uparrow$
& SSIM $\uparrow$
& FVD $\downarrow$ \\
\midrule

\multicolumn{4}{l}{\textit{Video prediction}} \\

BAIR64 $1\!\rightarrow\!15$
& $20.15 \pm 0.13$
& $0.8390 \pm 0.0021$
& $61.17 \pm 1.58$ \\

KTH $10\!\rightarrow\!30$
& $34.25 \pm 0.26$
& $0.9432 \pm 0.0025$
& $44.76 \pm 1.53$ \\

KTH $10\!\rightarrow\!40$
& $32.69 \pm 0.22$
& $0.9303 \pm 0.0040$
& $54.88 \pm 3.85$ \\

CLEVRER $2\!\rightarrow\!14$
& $42.23 \pm 0.19$
& $0.9935 \pm 0.0002$
& $24.18 \pm 2.46$ \\

\midrule
\multicolumn{4}{l}{\textit{Video interpolation}} \\

KTH
& $36.07 \pm 0.54$
& $0.9634 \pm 0.0021$
& $36.81 \pm 2.65$ \\

CLEVRER
& $45.68 \pm 0.01$
& $0.9954 \pm 0.0001$
& $10.77 \pm 0.02$ \\

\bottomrule
\end{tabular}

\end{table}

\subsection{Motion Planning}
\label{app:motion_planning_config}

\paratitle{Compute environment.}
Motion planning experiments are conducted on a workstation equipped with NVIDIA RTX 6000 Ada generation GPUs with 48GB memory. Each training, flow reversal, and evaluation run uses a single GPU. We build on the public MPD implementation~\citep{carvalho2025motion} with PyTorch and CUDA. Training uses EMA weights and BF16 automatic mixed precision, while flow reversal and evaluation are performed in FP32.

\paratitle{Training time.}
The complete CNP-Flow pipeline completes within 12 hours on this workstation: fixed-source FM pretraining ($\sim 4$ hours), source retrieval and caching via flow reversal ($\sim 1$ hour, run once), CNP training ($\sim 2$ hours), and FM fine-tuning ($\sim 4$ hours), with the remaining time spent on final evaluation. The cost of source retrieval and caching is amortized across all subsequent CNP epochs, and the overall pipeline takes roughly $3{\times}$ the training time of a vanilla fixed-source FM.

\paratitle{Task and trajectory representation.}
We evaluate on the Panda Warehouse benchmark from MPD~\citep{carvalho2025motion}, which considers 7-DoF Franka Panda joint-space planning with end-effector pose goals. We use the same demonstration dataset, robot model, collision environment, and evaluation protocol as MPD. Trajectories are parameterized as cubic B-splines with 22 desired control points along the trajectory, together with 2 phantom control points used to enforce boundary conditions, 
giving 24 learnable control points in total. The spline is evaluated at 128 time samples over a 10-second horizon.

\paratitle{Backbone architecture.}
The diffusion baseline and the fixed-source flow matching backbone share the same temporal U-Net and conditioning architecture. The trajectory backbone predicts $24$ learnable B-spline control points. The U-Net uses a base channel size of 32 with channel multipliers $(1,2,4,8)$. The start state and goal pose are encoded by separate 2-layer MLPs, and both are projected into a 128-dimensional context embedding.

\paratitle{Motion planning CNP.}
The motion planning CNP is a residual MLP over vectorized B-spline trajectory control points. It predicts the mean and log-variance of an isotropic Gaussian source distribution over the learnable control-point noise. The architecture is summarized in Tab.~\ref{tab:app_mp_cnp_arch}. We train the CNP for 2000 epochs with batch size 512, learning rate $1{\times}10^{-4}$, and KL weight $\lambda_{\mathrm{KL}}=1{\times}10^{-6}$. The KL weight is linearly warmed up for the first 50 epochs. Optimization uses Adam with $\beta_1=0.9$, $\beta_2=0.999$, no weight decay, gradient clipping with maximum norm 1.0, and FP32 precision.

\begin{table*}[t]
\centering
\caption{\textbf{Motion planning CNP architecture.} The CNP predicts an isotropic Gaussian source distribution over the learnable B-spline control-point noise.}
\label{tab:app_mp_cnp_arch}
\begin{tabular}{l c}
\toprule
Component & Setting \\
\midrule
Backbone & Residual MLP \\
Input & Start state + EE goal pose + context embedding \\
Raw condition dimension & 26 \\
Context embedding dimension & 128 \\
Output & $\mu,\log\sigma^2$ \\
Hidden dimension & 1024 \\
Residual blocks & 6 \\
Activation & GELU \\
Output initialization & Zero-initialized head \\
EMA decay & 0.995 \\
\bottomrule
\end{tabular}

\end{table*}

\paratitle{Training configuration.}
For the diffusion baseline, we use the released MPD checkpoint. The fixed-source flow matching model is pretrained for $6.0{\times}10^5$ steps with batch size 512, learning rate $3{\times}10^{-4}$, EMA weights, and BF16 automatic mixed precision. Source samples obtained through flow reversal are then cached once for CNP supervision using 200 Euler steps with batch size 1024 in FP32; these cached targets are not used during inference. After CNP training, we freeze the CNP and fine-tune the flow matching backbone for 200k steps using CNP-sampled source noise instead of the standard Gaussian source, with batch size 512, learning rate $1.5{\times}10^{-4}$, EMA weights, and BF16 automatic mixed precision.


\paratitle{Evaluation.}
For evaluation, we follow the MPD inference protocol. 
All methods are evaluated on the same 100 test start-goal pairs, and results are averaged over three independent seeds.
For each pair, we generate $N=50$ candidate trajectories and select the shortest valid trajectory. 
The MPD baseline uses its cost-guided DDIM sampler, which interleaves denoising with cost-gradient updates: we follow the paper and use $15$ DDIM steps, activate guidance during the final $20\%$ of the denoising process. Specifically, the last 3 denoising steps each perform $n_{\text{guide}}=4$ cost-gradient updates, resulting in 12 gradient updates in total.

For \method{}, we use a 15-step Euler ODE sampler followed by 12 post-hoc cost-gradient refinement steps, using the same cost terms and weights as MPD. Thus, both methods use 15 generative steps and 12 cost-gradient updates.

Specifically, after ODE sampling, we run $12$ refinement steps with learning rate $0.01$, maximum perturbation $0.15$, and value clipping at $1.0$, using the same cost terms and weights as MPD.
This keeps the cost-refinement stage matched to the released MPD configuration, while replacing the diffusion prior with our learned-source flow matching sampler. Flow reversal is used only to construct CNP supervision and is not invoked during sampling or evaluation. We report the same evaluation metrics as MPD: \emph{Success}, \emph{Valid fraction}, end-effector position/orientation error, and \emph{collision intensity}. We use \emph{Valid fraction} as the primary metric, as \emph{Success} saturates with $N=50$ candidates per task. Following MPD, \emph{collision intensity} is computed as the fraction of evaluated trajectory waypoints that are in collision.

\paratitle{Performance without post-hoc refinement.} 
\begin{table}[t]
\centering
\caption{\textbf{Motion-planning performance without post-hoc refinement.}
Results are reported as mean $\pm$ standard deviation over three
independent runs.}
\label{tab:app_mp_no_refinement}

\footnotesize
\setlength{\tabcolsep}{4pt}
\renewcommand{\arraystretch}{1.05}

\begin{tabular}{lcccc}
\toprule
Method
& Valid (\%) $\uparrow$
& EE pos. (cm) $\downarrow$
& EE ori. ($^\circ$) $\downarrow$
& Collision $\downarrow$ \\
\midrule

Diffusion prior
& $64.3 \pm 1.5$
& $5.3 \pm 0.1$
& $7.5 \pm 0.5$
& $0.0402 \pm 0.0013$ \\

Fixed-source FM
& $61.1 \pm 3.8$
& $3.8 \pm 0.2$
& $5.6 \pm 0.5$
& $0.0445 \pm 0.0034$ \\

\cellcolor{ourshl}\textbf{\method{}}
& \cellcolor{ourshl}$64.3 \pm 4.6$
& \cellcolor{ourshl}$\mathbf{2.1 \pm 0.2}$
& \cellcolor{ourshl}$\mathbf{2.9 \pm 0.3}$
& \cellcolor{ourshl}$\mathbf{0.0366 \pm 0.0045}$ \\

\bottomrule
\end{tabular}
\end{table}
To verify that the gains of \method{} are not limited to settings with post-hoc refinement, we additionally evaluate trajectories without the refinement stage. As shown in Tab.~\ref{tab:app_mp_no_refinement}, \method{} reduces the end-effector position and orientation errors from $3.8$ cm and $5.6^\circ$ with fixed-source FM to $2.1$ cm and $2.9^\circ$, respectively. This indicates that the improvement is already present before the post-hoc optimization stage.

\section{Large-Scale Physical World-Model Benchmark}
\label{app:acwm_phys}

To evaluate whether the benefit of \method{} extends beyond the low-resolution video benchmarks and RIVER backbone used in the main experiments, we further evaluate on ACWM-Phys~\citep{xue2026acwmphys}, a recent action-conditioned physical world-model benchmark. We use ACWM-DiT-S, a 205M-parameter flow-matching DiT with a frozen Wan2.1 causal video VAE, operating on $240{\times}240$ videos.

ACWM-Phys covers four distinct physical regimes: rigid-body dynamics, deformable objects, particle-based dynamics, and kinematic systems. We select one task from each regime and evaluate both the in-distribution (IND) and out-of-distribution (OOD) splits. Specifically, we consider \emph{push\_cube}, \emph{push\_rope}, \emph{pour\_water}, and \emph{robot\_arm}. We compare the standard Gaussian source against \method{} under the same flow-matching backbone and evaluation protocol. Results are averaged over three independent seeds and reported using MSE, masked MSE (M-MSE), PSNR, and SSIM.

\begin{table*}[t]
\centering
\caption{\textbf{Results on the ACWM-Phys physical world-model benchmark
at $240{\times}240$ resolution.}
We evaluate one task from each physical regime on both
in-distribution (IND) and out-of-distribution (OOD) splits.
All results are averaged over three independent seeds.}
\label{tab:app_acwm_phys}

\footnotesize
\setlength{\tabcolsep}{5pt}
\renewcommand{\arraystretch}{1.05}

\resizebox{\textwidth}{!}{
\begin{tabular}{llll cccc}
\toprule
Regime
& Task
& Split
& Method
& MSE $\downarrow$
& M-MSE $\downarrow$
& PSNR $\uparrow$
& SSIM $\uparrow$ \\
\midrule

\multirow{4}{*}{Rigid}
& \multirow{4}{*}{push\_cube}
& \multirow{2}{*}{IND}
& Vanilla FM
& 0.002886 & 0.017354 & 26.29 & 0.9546 \\
&
&
& \cellcolor{ourshl}\method{} (Ours)
& \cellcolor{ourshl}\textbf{0.002192}
& \cellcolor{ourshl}\textbf{0.013373}
& \cellcolor{ourshl}\textbf{27.80}
& \cellcolor{ourshl}\textbf{0.9647} \\
\cmidrule(lr){3-8}

&
& \multirow{2}{*}{OOD}
& Vanilla FM
& 0.002977 & 0.016039 & 26.78 & 0.9539 \\
&
&
& \cellcolor{ourshl}\method{} (Ours)
& \cellcolor{ourshl}\textbf{0.002338}
& \cellcolor{ourshl}\textbf{0.012326}
& \cellcolor{ourshl}\textbf{28.13}
& \cellcolor{ourshl}\textbf{0.9629} \\

\midrule

\multirow{4}{*}{Deformable}
& \multirow{4}{*}{push\_rope}
& \multirow{2}{*}{IND}
& Vanilla FM
& 0.000213 & 0.002708 & 36.75 & 0.9878 \\
&
&
& \cellcolor{ourshl}\method{} (Ours)
& \cellcolor{ourshl}\textbf{0.000207}
& \cellcolor{ourshl}\textbf{0.002608}
& \cellcolor{ourshl}\textbf{36.88}
& \cellcolor{ourshl}\textbf{0.9889} \\
\cmidrule(lr){3-8}

&
& \multirow{2}{*}{OOD}
& Vanilla FM
& 0.000331 & 0.004163 & 35.34 & 0.9848 \\
&
&
& \cellcolor{ourshl}\method{} (Ours)
& \cellcolor{ourshl}\textbf{0.000326}
& \cellcolor{ourshl}\textbf{0.004054}
& \cellcolor{ourshl}\textbf{35.46}
& \cellcolor{ourshl}\textbf{0.9859} \\

\midrule

\multirow{4}{*}{Particle}
& \multirow{4}{*}{pour\_water}
& \multirow{2}{*}{IND}
& Vanilla FM
& 0.002630 & 0.012214 & 26.03 & 0.9107 \\
&
&
& \cellcolor{ourshl}\method{} (Ours)
& \cellcolor{ourshl}\textbf{0.001742}
& \cellcolor{ourshl}\textbf{0.007419}
& \cellcolor{ourshl}\textbf{27.90}
& \cellcolor{ourshl}\textbf{0.9292} \\
\cmidrule(lr){3-8}

&
& \multirow{2}{*}{OOD}
& Vanilla FM
& 0.003425 & 0.013788 & 25.02 & 0.8748 \\
&
&
& \cellcolor{ourshl}\method{} (Ours)
& \cellcolor{ourshl}\textbf{0.002514}
& \cellcolor{ourshl}\textbf{0.009194}
& \cellcolor{ourshl}\textbf{26.47}
& \cellcolor{ourshl}\textbf{0.8930} \\

\midrule

\multirow{4}{*}{Kinematics}
& \multirow{4}{*}{robot\_arm}
& \multirow{2}{*}{IND}
& Vanilla FM
& 0.001441 & 0.015606 & 28.70 & 0.9689 \\
&
&
& \cellcolor{ourshl}\method{} (Ours)
& \cellcolor{ourshl}\textbf{0.000286}
& \cellcolor{ourshl}\textbf{0.003000}
& \cellcolor{ourshl}\textbf{35.70}
& \cellcolor{ourshl}\textbf{0.9892} \\
\cmidrule(lr){3-8}

&
& \multirow{2}{*}{OOD}
& Vanilla FM
& 0.006552 & 0.050357 & 22.12 & 0.9021 \\
&
&
& \cellcolor{ourshl}\method{} (Ours)
& \cellcolor{ourshl}\textbf{0.003854}
& \cellcolor{ourshl}\textbf{0.036716}
& \cellcolor{ourshl}\textbf{24.39}
& \cellcolor{ourshl}\textbf{0.9511} \\

\bottomrule
\end{tabular}
}
\end{table*}

As shown in Tab.~\ref{tab:app_acwm_phys}, \method{} improves all four reported metrics across every physical regime on both IND and OOD splits. The improvement varies with the underlying dynamics: the gains are modest on \emph{push\_rope}, while substantially larger improvements are observed on \emph{pour\_water} and \emph{robot\_arm}. In particular, on the OOD \emph{robot\_arm} split, \method{} improves PSNR from $22.12$ to $24.39$ and SSIM from $0.9021$ to $0.9511$. These results show that the benefit of condition-dependent source modeling is not specific to the RIVER backbone or $64{\times}64$ video generation, and transfers to a substantially larger flow-matching DiT under distribution shift.

\section{Additional Analysis}
\label{app:additional_analysis}

\begin{table}[t]
\centering
\caption{\textbf{Effect of flow reversal supervision on source learning.}
The Stage-1 FM is frozen for all variants.}
\vspace{3pt}
\label{tab:app_source_learning_controls}

\footnotesize
\setlength{\tabcolsep}{6pt}
\renewcommand{\arraystretch}{1.0}

\begin{tabular}{lccc}
\toprule
\textbf{Source, frozen FM}
& \textbf{PSNR} $\uparrow$
& \textbf{SSIM} $\uparrow$
& \textbf{FVD} $\downarrow$ \\
\midrule
Standard Gaussian
& 30.29
& 0.8591
& 179.97 \\

CNP w/o flow reversal supervision
& 30.46
& 0.8612
& 179.54 \\

CNP w/ flow reversal supervision
& \textbf{30.85}
& \textbf{0.9008}
& \textbf{100.91} \\
\bottomrule
\end{tabular}
\end{table}













\paratitle{Source-learning controls.}
To verify that the gains of \method{} arise from meaningful source learning rather than additional model parameters, we freeze the Stage-1 FM and compare the same CNP architecture with and without flow reversal supervision. As shown in Tab.~\ref{tab:app_source_learning_controls}, without this supervision, CNP performs similarly to the standard Gaussian source, whereas flow reversal supervision substantially reduces FVD. This isolates the contribution of the supervision signal from the additional CNP parameters.

\paratitle{Prediction diversity.}
To verify that conditioning the source distribution does not reduce intra-condition diversity, we measure the average pairwise LPIPS among predictions generated from the same condition on BAIR64. As shown in Tab.~\ref{tab:app_diversity}, \method{} achieves comparable diversity to the standard Gaussian source, indicating that stochastic prediction diversity is preserved.

\begin{table}[t]
\centering
\caption{\textbf{Prediction diversity on BAIR64.}
We report the average pairwise LPIPS among predictions generated from
the same condition.}
\label{tab:app_diversity}
\vspace{3pt}

\footnotesize
\setlength{\tabcolsep}{6pt}
\renewcommand{\arraystretch}{0.95}

\begin{tabular}{lc}
\toprule
Source & Pairwise LPIPS $\uparrow$ \\
\midrule
Standard Gaussian & 0.0666 \\
\method{} & 0.0686 \\
\bottomrule
\end{tabular}
\end{table}

\paratitle{Iterative source and flow refinement.} To investigate whether further alternating source learning and FM refinement continues to improve performance, we repeat Stages 2--3 for multiple rounds. As shown in Tab.~\ref{tab:app_iterative_refinement}, a second round improves FVD on both BAIR64 and KTH, whereas an additional round provides no consistent further gain. We therefore use the single-round pipeline in the main experiments for simplicity and efficiency.

\begin{table}[t]
\centering
\caption{\textbf{Iterative Stage-2/Stage-3 refinement.}
FVD is reported after each round of source learning and FM refinement.}
\label{tab:app_iterative_refinement}
\vspace{3pt}

\footnotesize
\setlength{\tabcolsep}{6pt}
\renewcommand{\arraystretch}{0.95}

\begin{tabular}{lccc}
\toprule
Dataset
& Round 1
& Round 2
& Round 3 \\
\midrule
BAIR64 
& 61.17 & \textbf{60.48} & 67.07 \\
KTH 
& 44.76 & \textbf{35.04} & 36.37 \\
\bottomrule
\end{tabular}
\end{table}

\paratitle{Robustness of source retrieval via flow reversal.}
To verify that flow reversal supervision is insensitive to the numerical solver used for retrieval, we construct CNP supervision using Dopri, Heun-100, and Euler-100 on BAIR64 $1\!\rightarrow\!15$. As shown in Tab.~\ref{tab:app_reverse_solver}, all three solvers yield similar downstream FVD, indicating that the observed improvement is robust to the choice of numerical solver for flow reversal.

\begin{table}[!ht]
\centering
\caption{\textbf{Robustness to the numerical solver used for flow reversal on BAIR64
prediction.}
Values are mean $\pm$ standard deviation over three seeds.}
\label{tab:app_reverse_solver}
\vspace{3pt}

\footnotesize
\setlength{\tabcolsep}{7pt}
\renewcommand{\arraystretch}{0.95}

\begin{tabular}{lc}
\toprule
Flow reversal solver & FVD $\downarrow$ \\
\midrule
Dopri     & $61.73\pm1.47$ \\
Heun-100  & $\mathbf{61.16\pm0.99}$ \\
Euler-100 & $61.17\pm1.56$ \\
\bottomrule
\end{tabular}
\end{table}


\paratitle{Transport geometry.} To directly examine whether the learned source provides better source--target alignment, we evaluate 256 BAIR64 condition--target pairs using Euler-100. As shown in Tab.~\ref{tab:app_transport_geometry}, \method{} reduces source--target distance, path length, cumulative turning, and endpoint deviation by $9.19\%$, $8.94\%$, $8.51\%$, and $9.69\%$, respectively. These results indicate that the learned source is better aligned with the conditional target region, leading to shorter and less curved transport trajectories.

\begin{table}[!ht]
\centering
\caption{\textbf{Transport geometry on BAIR64.}
Both methods are evaluated on the same 256 condition--target pairs
using Euler-100.}
\label{tab:app_transport_geometry}
\vspace{3pt}

\footnotesize
\setlength{\tabcolsep}{4pt}
\renewcommand{\arraystretch}{0.95}

\begin{tabular}{lcccc}
\toprule
Method
& $\|\bm{x}_0-\bm{x}_1\|_2$ $\downarrow$
& Path length $\downarrow$
& Turning (rad) $\downarrow$
& Endpoint dev. $\downarrow$ \\
\midrule
Standard Gaussian
& 19.1715 & 18.6780 & 0.8292 & 2.7362 \\

\rowcolor{ourshl}
\method{} (Ours)
& \textbf{17.4091}
& \textbf{17.0088}
& \textbf{0.7586}
& \textbf{2.4710} \\
\bottomrule
\end{tabular}
\end{table}
\begin{table}[!ht]
\centering
\caption{\textbf{Alternative Stage-3 couplings on BAIR64
prediction.} \emph{Left}: transport geometry on 256 condition--target pairs using Euler-100. \emph{Right}: downstream generation quality. Direct pairing via flow reversal requires the target and is therefore reported only as a training-time geometry reference.}
\label{tab:app_stage3_coupling}
\vspace{3pt}

\footnotesize
\setlength{\tabcolsep}{3.5pt}
\renewcommand{\arraystretch}{0.95}

\begin{minipage}[t]{0.59\linewidth}
\centering
\resizebox{\linewidth}{!}{
\begin{tabular}{lcccc}
\toprule
Coupling
& Distance $\downarrow$
& Path $\downarrow$
& Turning $\downarrow$
& Endpoint dev. $\downarrow$ \\
\midrule
Flow reversal pair
& \textbf{15.7342}
& \textbf{15.7185}
& \textbf{0.2261}
& \textbf{0.4913} \\

Conditional OT
& 17.3715
& 16.2187
& 1.0610
& 3.9173 \\

\method{} (Ours)
& 17.4091
& 17.0088
& 0.7586
& 2.4710 \\
\bottomrule
\end{tabular}
}
\end{minipage}
\hfill
\begin{minipage}[t]{0.4\linewidth}
\centering
\resizebox{\linewidth}{!}{
\begin{tabular}{lccc}
\toprule
Method
& PSNR $\uparrow$
& SSIM $\uparrow$
& FVD $\downarrow$ \\
\midrule
Conditional OT
& 18.68 & 0.7828 & 101.43 \\

Standard Gaussian
& 20.01 & 0.8121 & 70.16 \\

\method{} (Ours)
& \textbf{20.15}
& \textbf{0.8390}
& \textbf{61.17} \\
\bottomrule
\end{tabular}
}
\end{minipage}
\end{table}

\paratitle{Alternative Stage-3 couplings.}
We further compare alternative couplings for Stage-3 fine-tuning. As shown in Tab.~\ref{tab:app_stage3_coupling}, direct pairing via flow reversal achieves the shortest transport geometry, but requires the target sample and therefore has no inference-time counterpart. Conditional OT also shortens the path relative to \method{}, yet increases turning and endpoint deviation and substantially degrades generation quality. These results indicate that minimizing per-sample transport distance alone is insufficient; the training coupling should remain compatible with the condition-dependent source available at inference.

\section{Temporal Phase 2D Toy Example}
\label{app:2d_toy_example}
\paratitle{Dataset.}
Our 2D toy experiment is designed to faithfully mirror the full three-stage pipeline of conditional generative modeling for temporal sequential data rather than serving as a generic trajectory forecasting benchmark. Each 2D trajectory point is treated as a temporally ordered observation, while temporal conditioning is constructed from the two preceding observations to capture both short-term and longer-range sequential dependencies. The dataset consists of parameterized circular trajectories generated as
\begin{equation}
\bm{x}_1^\alpha = [10 + \cos \alpha, \, 10 + \sin \alpha]^\top,
\end{equation}
where $\alpha$ ranges from $\pi / 180$ to $2\pi$ with $\Delta \alpha = \pi / 360$. For each target point $\bm{x}_1^\alpha$, the condition is defined as $\mathcal{C}_\mathcal{T}^\alpha = [\bm{x}_1^{\alpha-2\Delta \alpha}, \bm{x}_1^{\alpha-\Delta \alpha}]$.

\begin{figure}[t]
    \centering
    \begin{subfigure}{0.58\linewidth}
        \centering
        \includegraphics[width=\linewidth]{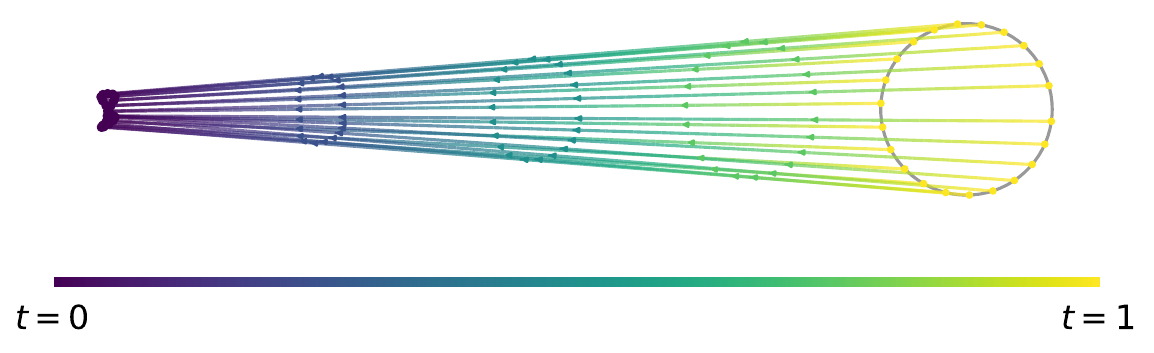}
        \caption{Flow reversal trajectories.}
        \label{fig:reverse_ode_mapping}
    \end{subfigure}
    \hfill
    \begin{subfigure}{0.38\linewidth}
        \centering
        \includegraphics[width=\linewidth]{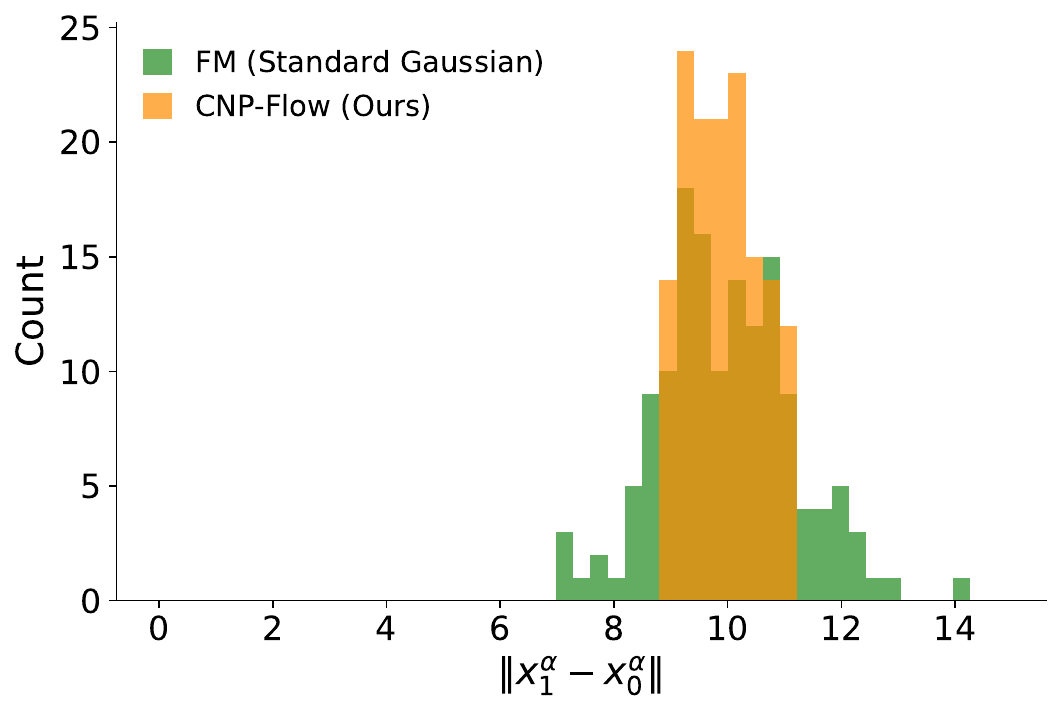}
        \caption{Transport distance.}
        \label{fig:transport_distance}
    \end{subfigure}
    \caption{Toy example visualization of the flow reversal mapping and transport distance analysis.}
    \label{fig:toy_example_analysis}
\end{figure}

\paratitle{Implementation Details.}
All networks are implemented as multi-layer perceptrons (MLPs) with SiLU activation functions. FM model parameterizes the velocity field $\bm{v}_\theta$, taking as input $(\bm{x}_t^\alpha, t, \mathcal{C}_\mathcal{T}^\alpha)$ and outputting $\bm{v}_\theta$, using a 4-layer architecture with hidden dimension 128. The CNP model takes the condition $\mathcal{C}_\mathcal{T}^\alpha$ as input and predicts the parameters of a conditional isotropic Gaussian distribution $(\bm{\mu}_\phi(\mathcal{C}_\mathcal{T}^\alpha), \sigma_\phi(\mathcal{C}_\mathcal{T}^\alpha))$, defining $p_\phi(\bm{x}_0^\alpha\mid\mathcal{C}_\mathcal{T}^\alpha) = \mathcal{N}(\bm{x}_0^\alpha; \bm{\mu}_\phi(\mathcal{C}_\mathcal{T}^\alpha), \sigma_\phi^2(\mathcal{C}_\mathcal{T}^\alpha) \mathbf{I})$, where $\sigma_\phi$ is constrained via a sigmoid activation to lie in $(0,1)$. The CNP consists of 10 layers with a hidden dimension of 128. Our FM shares the same architecture as the baseline but replaces the standard Gaussian source with samples drawn from the learned conditional prior and is initialized from the pretrained baseline FM.

\paratitle{Analysis.}
This behavior can be understood from three observations: (1) As illustrated in Fig.~\ref{fig:toy_example}, vanilla FM transports samples from a standard Gaussian prior to the target distribution. Under neighboring conditions, the corresponding source samples can nevertheless be widely separated, which may lead to intersecting ODE trajectories across timesteps and accumulated deviation from the target distribution in long-horizon generation. (2) Fig.~\ref{fig:toy_example_analysis}(a) shows that flow reversal maps nearby target samples back to nearby source samples, indicating that the pretrained flow preserves local temporal correspondence in the source space. This motivates learning a conditional source prior that preserves proximity among source samples under similar conditions, resulting in more coherent ODE trajectories that better capture the temporal structure of the data. (3) As shown in Fig.~\ref{fig:toy_example_analysis}(b), CNP-Flow produces a more concentrated source distribution with reduced variance compared to FM, which contributes to improved stability and accuracy in temporal generation.

\section{Additional Generated Results}
\label{app:additional_results}
We provide additional qualitative examples for video prediction and interpolation.

\captionsetup{skip=2pt}

\noindent
\begin{minipage}{\linewidth}
    \centering
    \includegraphics[width=0.95\linewidth]{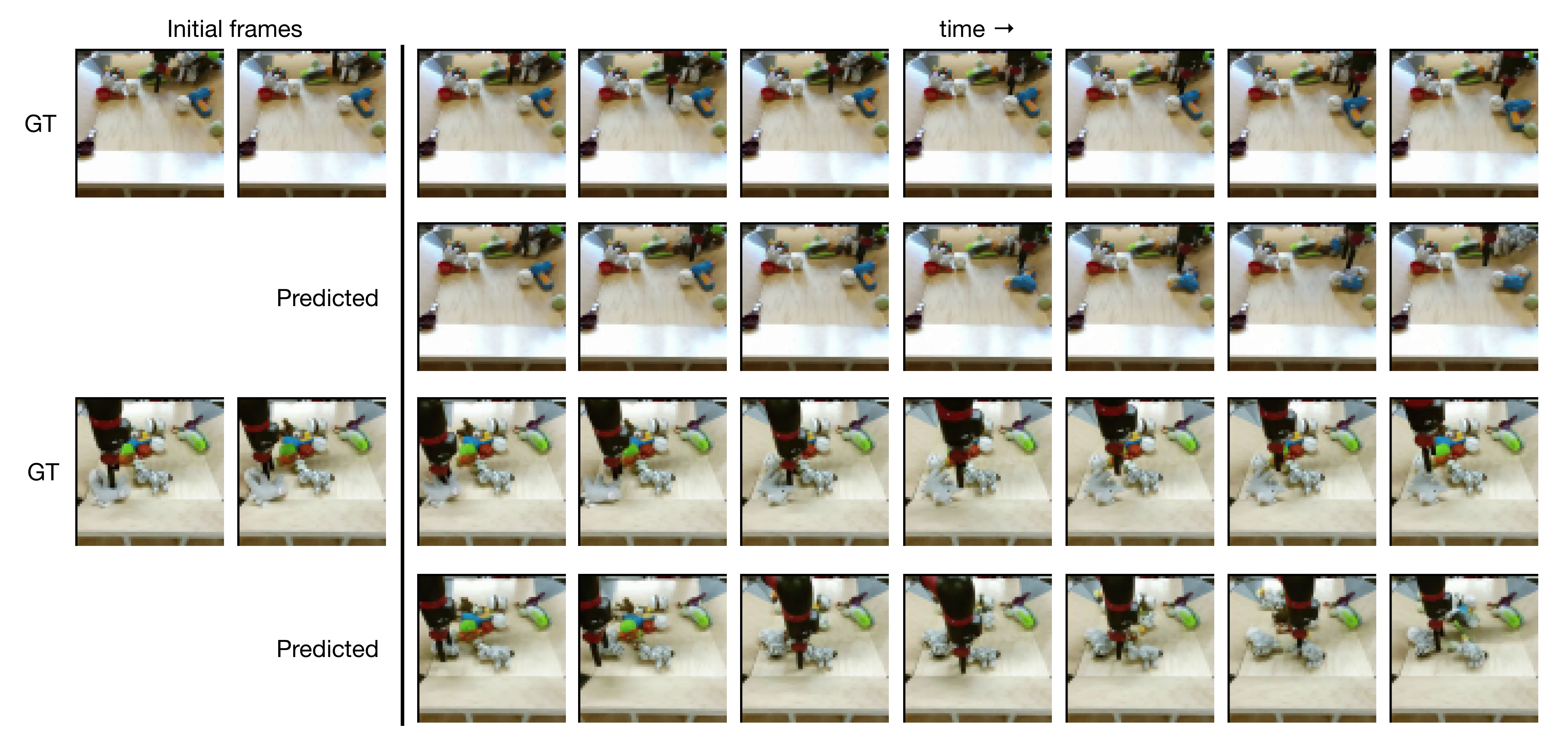}
    \captionof{figure}{\textbf{Additional qualitative results on BAIR video prediction.}}
    \label{fig:app_additional_appendix-1}
\end{minipage}

\vspace{0.5em}

\noindent
\begin{minipage}{\linewidth}
    \centering
    \includegraphics[width=0.95\linewidth]{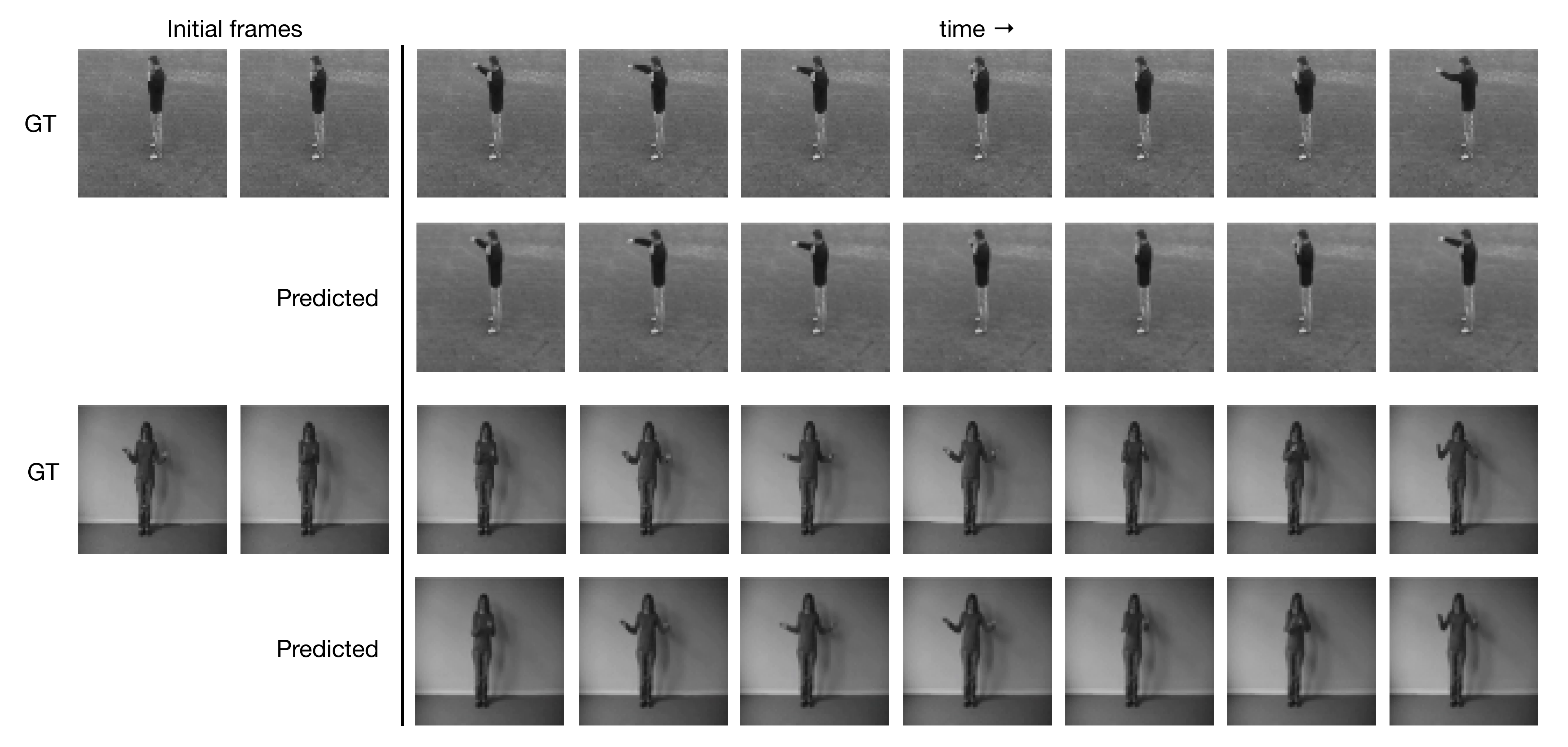}
    \captionof{figure}{\textbf{Additional qualitative results on KTH video prediction.}}
    \label{fig:app_additional_appendix-2}
\end{minipage}

\vspace{0.5em}

\noindent
\begin{minipage}{\linewidth}
    \centering
    \includegraphics[width=0.95\linewidth]{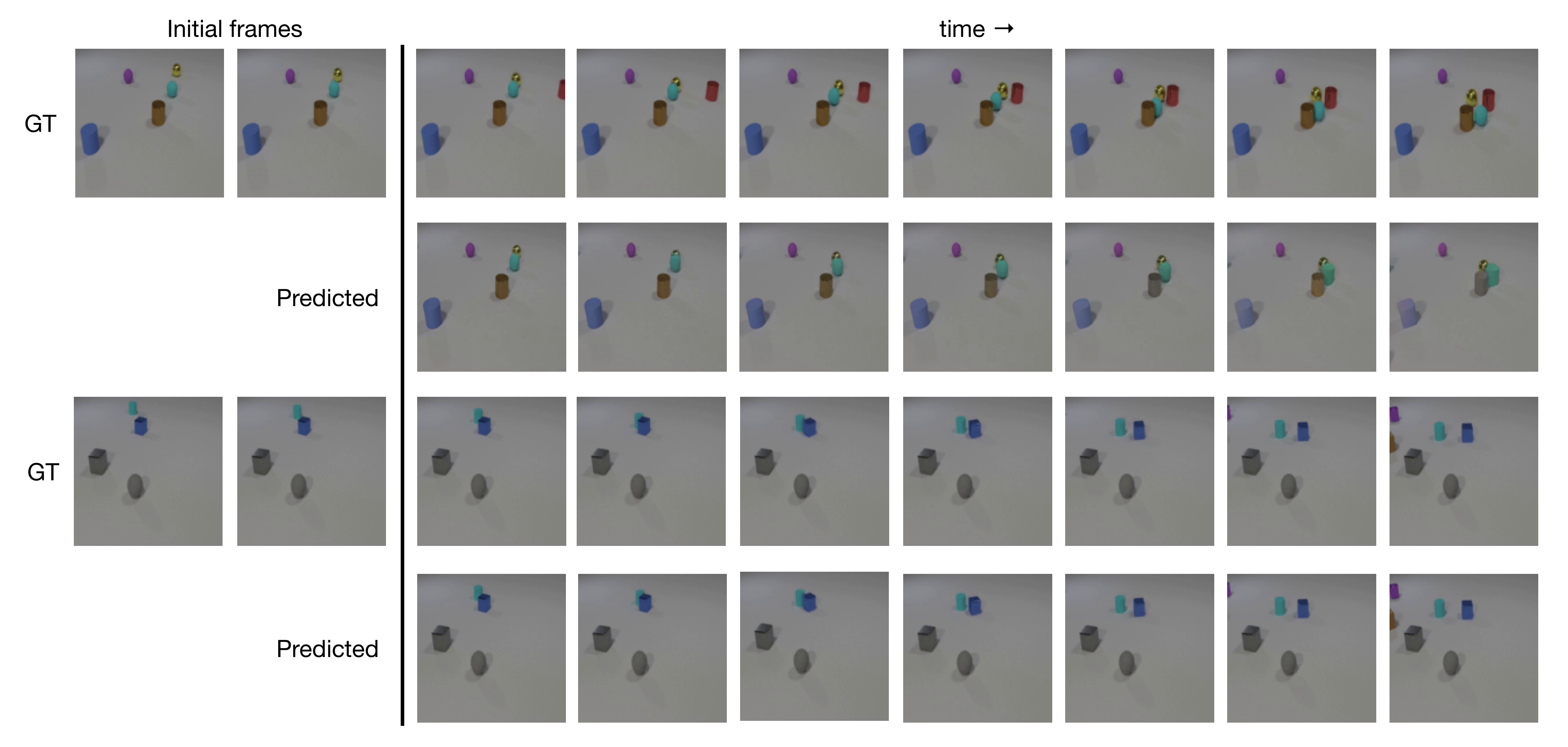}
    \captionof{figure}{\textbf{Additional qualitative results on CLEVRER video prediction.}}
    \label{fig:app_additional_appendix-3}
\end{minipage}

\vspace{0.5em}

\noindent
\begin{minipage}{\linewidth}
    \centering
    \includegraphics[width=0.95\linewidth]{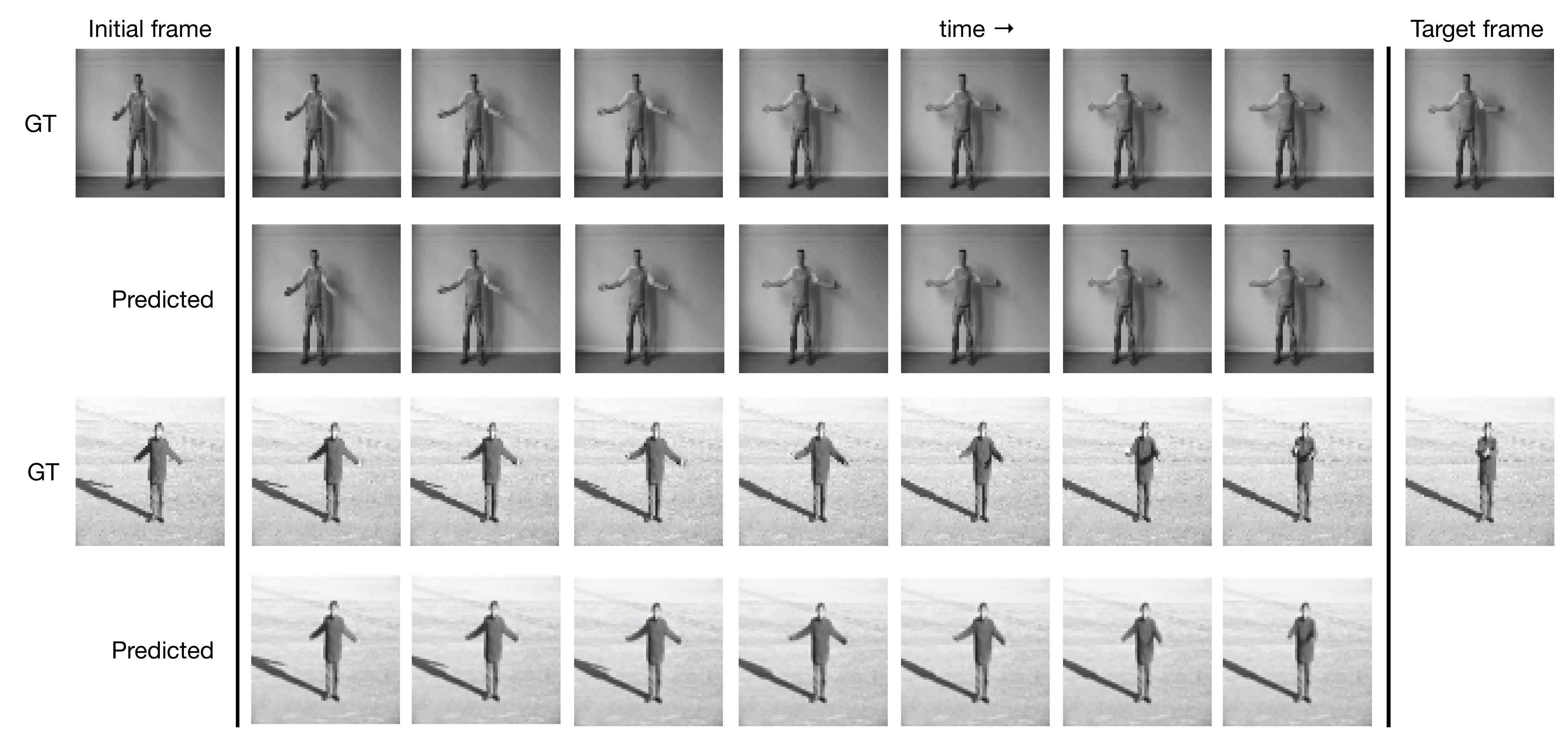}
    \captionof{figure}{\textbf{Additional qualitative results on KTH video interpolation.}}
    \label{fig:app_additional_appendix-4}
\end{minipage}

\vspace{0.5em}

\noindent
\begin{minipage}{\linewidth}
    \centering
    \includegraphics[width=0.95\linewidth]{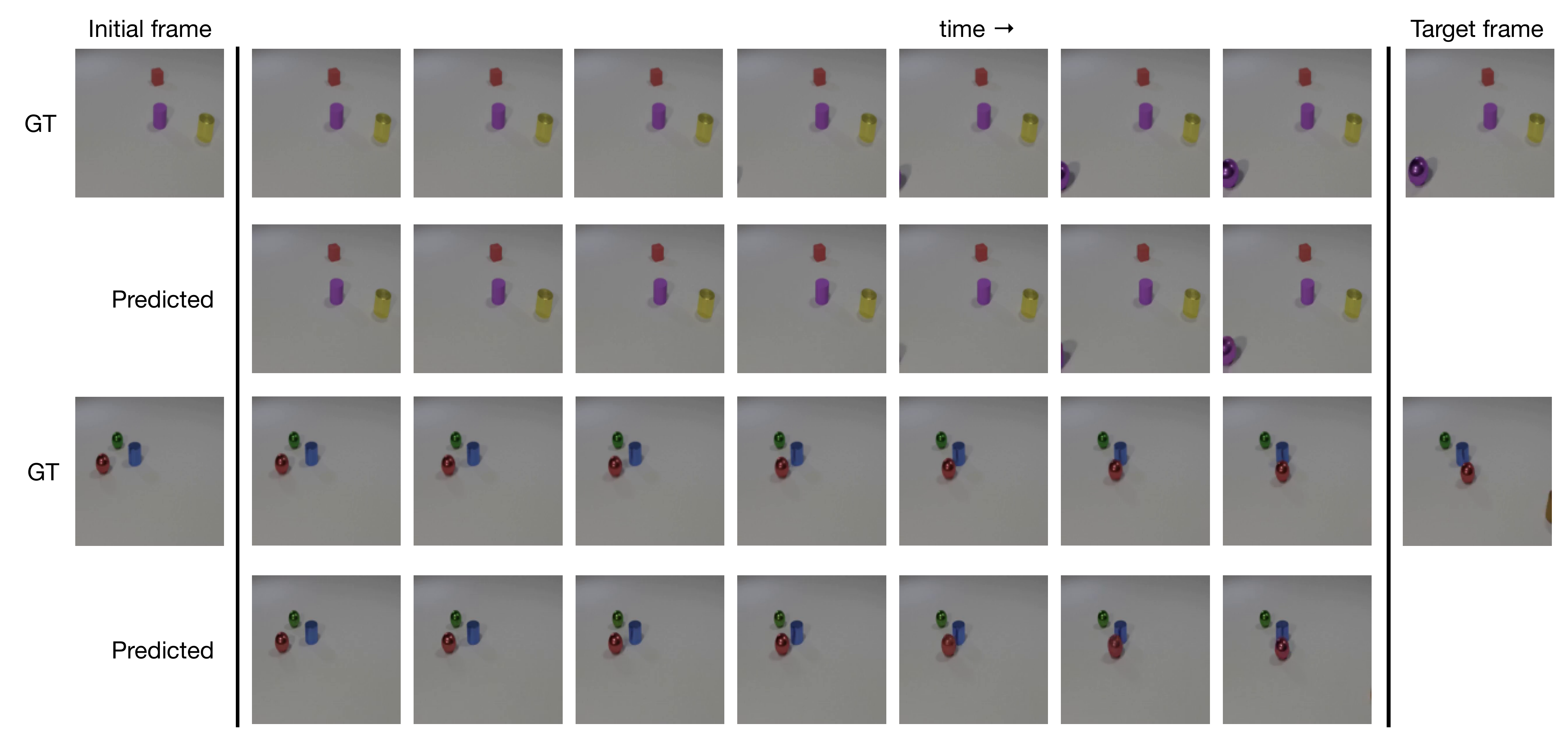}
    \captionof{figure}{\textbf{Additional qualitative results on CLEVRER video interpolation.}}
    \label{fig:app_additional_appendix-5}
\end{minipage}

\end{document}